\pdfoutput=1
\documentclass[10pt]{article}
\usepackage[letterpaper,margin=0.75in]{geometry}
\usepackage[T1]{fontenc}
\usepackage{lmodern}
\usepackage{amsmath,amssymb,amsthm,mathtools,bm}
\usepackage{microtype,graphicx,booktabs,array,enumitem,caption}
\usepackage[round]{natbib}
\usepackage[hidelinks]{hyperref}
\hypersetup{pdftitle={Junction Geometry Survives an Unidentifiable Blur},
  pdfauthor={Sohail (Neel) Sarkar and Shakuntala Baichoo}}
\setlist{itemsep=2pt,topsep=4pt}
\makeatletter
\renewcommand\section{\@startsection{section}{1}{\z@}%
  {-3.2ex \@plus -1ex \@minus -.2ex}{1.4ex \@plus .2ex}%
  {\normalfont\large\bfseries}}
\renewcommand\subsection{\@startsection{subsection}{2}{\z@}%
  {-2.6ex\@plus -1ex \@minus -.2ex}{1ex \@plus .2ex}%
  {\normalfont\normalsize\bfseries}}
\renewcommand\paragraph{\@startsection{paragraph}{4}{\z@}%
  {1.6ex \@plus .5ex \@minus .2ex}{-1em}%
  {\normalfont\normalsize\bfseries}}
\makeatother

\theoremstyle{plain}
\newtheorem{theorem}{Theorem}
\newtheorem{proposition}[theorem]{Proposition}
\newtheorem{lemma}[theorem]{Lemma}

\theoremstyle{remark}

\newcommand{\R}{\mathbb R}
\newcommand{\E}{\mathbb E}
\newcommand{\dd}{\,\mathrm d}
\newcommand{\ph}{\varphi}
\newcommand{\Ccal}{\mathcal C}
\newcommand{\norm}[1]{\left\lVert#1\right\rVert}
\DeclareMathOperator{\KL}{KL}
\DeclareMathOperator{\TV}{TV}
\DeclareMathOperator{\ord}{ord}
\DeclareMathOperator{\clip}{clip}

\title{\bfseries Junction Geometry Survives an Unidentifiable Blur}
\author{Sohail (Neel) Sarkar \qquad Shakuntala Baichoo\\[.4ex]
\normalsize PMCC AI Lab, Peter Munk Cardiac Centre\\
\normalsize University Health Network, Toronto, Ontario, Canada\\[.3ex]
\normalsize\href{mailto:sohail.sarkar@uhn.ca}{\texttt{sohail.sarkar@uhn.ca}} \qquad
\href{mailto:shakuntala.baichoo@uhn.ca}{\texttt{shakuntala.baichoo@uhn.ca}}}
\date{}

\begin{document}
\twocolumn[{%
\begin{@twocolumnfalse}
\maketitle

\begin{center}
\begin{minipage}{0.88\textwidth}
\begin{abstract}
\normalsize
Thin filaments cross in microscopy, and dispersed spectra cross on
X-ray detectors. At a junction, where these structures cross or branch,
how many straight arms meet, and which way do they go?
Unknown blur obscures the answer. At a balanced crossing, distinct
Gaussian blurs can even produce the same observation law. We show that
the arms can still be recovered. For separated rays of nonvanishing
weight and an unknown centered Gaussian-mixture blur, a distance
combining arm count, directions, relative weights and supporting lines
is bounded by a constant times the Euclidean difference in cell
probabilities on a sufficiently fine fixed grid. The bound is uniform
over a bounded model class, including unidentifiable blurs. From $n$
positions or photon counts, the minimax error is of order $n^{-1/2}$.
This rate is proved for a global estimator. In practice we fit rays and
blur by constrained local likelihood, reducing the BIC count when two
detectors agree on fewer arms in the microscopy studies. Across three
studies with protocols fixed before evaluation, on 360 crossings and
branches assembled from measured single-molecule localizations, the fit
gets the arm count and every direction within $10^\circ$ 24 to 37
percentage points more often than a tuned fixed-scale Hessian ridge
detector, averaged over 25 to 400 localizations. On acute angles with correct
counts, mean absolute error is $3.5^\circ$ to $4.3^\circ$, versus
$10.4^\circ$ to $15.5^\circ$. On Chandra spectra, 38 of 90 fits at 800
photons find all four arms within $10^\circ$; among these, median error
is $0.023^\circ$.
\end{abstract}

\noindent\textbf{Keywords:} identifiability; Lipschitz stability; minimax
rates; Gaussian mixtures; ridge detection; single-molecule localization
microscopy; X-ray grating spectroscopy.
\end{minipage}
\end{center}
\end{@twocolumnfalse}
\vspace{1.5em}
}]

\section{Introduction}\label{sec:intro}

Filaments are everywhere in the life and physical sciences:
microtubules and actin in cells, vessels and neurites in tissue, roads
in a city, the dispersed spectra that a grating throws across an X-ray
detector. Where
two filaments cross or one of them branches, the first measurement is
local and geometric. How many straight arms leave the point, and in which
directions? Counts and angles at junctions are the raw material of
network reconstructions, branching statistics and instrument
calibrations, so an error made here travels.

\begin{figure*}[t]
\centering\includegraphics[width=\textwidth]{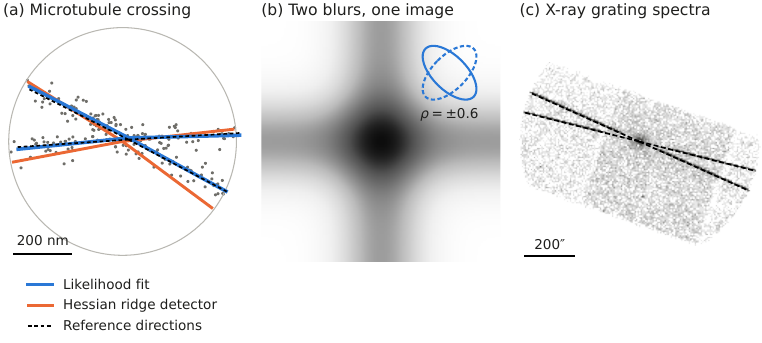}
\caption{Junctions seen through an unknown blur. (a) A crossing
assembled from measured SMLM localizations: 200 positions in a
400\,nm window. Dashed: reference directions from localizations of the
same filaments that the methods do not see. The Hessian ridge detector
finds four arms but opens the two $31^\circ$ angles to $43^\circ$; the
likelihood fit of Section~\ref{sec:method} reads $30^\circ$ and
$33^\circ$. (b) Two full lines blurred with correlation $\rho=-0.6$
(solid ellipse) and $\rho=+0.6$ (dashed) give the same image to machine
precision. (c) 40{,}000 of the 219{,}787 photons in the analysis window
of the High Energy Transmission Grating on the Chandra X-ray Observatory,
for the galaxy NGC~3783. The two dispersed spectra cross at the source image and
form a four-arm junction whose directions (dashed) are known from the
instrument calibration.}
\label{fig:hero}
\end{figure*}

Many modern instruments record junctions as clouds of points.
Single-molecule localization microscopy (SMLM) lights up a few
fluorophores at a time and records one position per detected emission; a
fluorophore that blinks contributes several positions
\citep{betzig06,rust06}. A photon-counting X-ray detector records one
position per photon, and a GPS receiver one position per fix. Every
recorded position is blurred. In SMLM the blur
mixes localization errors that differ from molecule to molecule; in a
grating spectrometer it is the line-spread function of the instrument,
which has wings; GPS error has heavy tails. None of these blurs is known
in advance, and none is a single Gaussian.

There are two standard ways around the blur. The first smooths the
points into an image and reads the arms off its ridges, as Hessian-based
ridge and vessel filters do \citep{lindeberg98,frangi98,steger98}.
Smoothing adds its own blur, and at a fixed scale it opens acute angles
(Proposition~\ref{prop:hessian}) and merges close arms. The second route
fits the arms and the blur together, and it meets a sharper obstacle:
the blur can be exactly unidentifiable. Put equal weight on the four
half-axes of the plane and blur them with the Gaussian covariance
\begin{equation}\label{eq:example}
 \Sigma_\rho=\begin{pmatrix}1&\rho\\ \rho&1\end{pmatrix},
 \qquad -1<\rho<1 .
\end{equation}
For every $\rho$ the blurred intensity is $\{\ph(x)+\ph(y)\}/4$, where
$\ph$ is the standard normal density (Figure~\ref{fig:hero}b). No amount
of data determines $\rho$. Rotating an arm, or moving weight from one arm
to another, does change the intensity. A fit that estimates the blur
must therefore work near a family of models in which part of its
parameter is lost, and near such models mixture parameters converge
slowly \citep{ho16} and are only weakly identified \citep{kaji21}.

Here they do not. The arms depend on the data through a Lipschitz map
with one constant for the whole model class, including every model whose
blur is lost, and they are estimated at the parametric rate $n^{-1/2}$
exactly where the blur cannot be estimated at all. The reason is
geometric and short. Far along an arm the data see the blur only through
its profile across the arm; the mass of that profile is the weight of the
arm and its mean is the line of the arm, whatever the blur. A change of
direction slides the profile by an amount that grows with the distance
from the vertex, and no integrable profile can absorb such a slide. We
turn this observation into a uniform inequality with a general tool that
compares the orders at which a target and an observation change along
analytic curves (Proposition~\ref{prop:ordercriterion}), and the same
tool applies to other targets observed through a nuisance that cannot be
identified.

\paragraph{Contributions.}
\begin{itemize}[leftmargin=*]
\item \emph{The blur the data cannot see.} A covariance change $H$ of a
Gaussian blur leaves a source $\mu$ invisible exactly when
$H:\nabla^2\mu=0$ (Lemma~\ref{lem:invisible}). For junctions this happens
in two cases only: one balanced line, and two balanced lines with
$H\propto ab^\top+ba^\top$ (Proposition~\ref{prop:classification}). At
91 microtubule crossings in measured camera images, a blur change along
$ab^\top+ba^\top$ alters the image eight to nine times less than an
ordinary change of the same size (Section~\ref{sec:invisible}).
\item \emph{Stability through the ambiguity.} For rays blurred by an
unknown centered finite Gaussian mixture, the arm count, directions,
relative weights and supporting lines are Lipschitz in the spatial score
and in the cell probabilities of one fixed partition, uniformly over the
class (Theorem~\ref{thm:inverse}). No identifiability margin of the blur
enters.
\item \emph{Rates.} The minimax rate from $n$ points or counts is
$n^{-1/2}$ (Theorem~\ref{thm:statistics}), and it holds at balanced
crossings, where the covariance risk stays bounded away from zero
(Proposition~\ref{prop:crossing}). An explicit four-cell estimator shows
both rates in closed form (Theorem~\ref{thm:fourcells}). Counts are
selected consistently by a penalized likelihood, but some admissible
pairs of counts need more than $10^8$ points to be told apart
(Section~\ref{sec:counting}).
\item \emph{Beyond straight rays.} The guarantees extend to an unknown
background, to misfit and contamination with a sharp
$n^{-1/2}+\varepsilon$ rate, and to curved arms, whose tangents are
recovered up to a bias of order $M\sigma^\beta$ set by the curvature at
the scale $\sigma$ of the blur; at a fixed blur no number of points
removes this bias (Section~\ref{sec:beyond}). A fixed-scale Hessian
ridge detector widens acute angles for a reason we compute exactly
(Proposition~\ref{prop:hessian}).
\item \emph{Evidence.} In three prospective studies with frozen
protocols, on 360 junctions assembled from measured localizations in 77
fields, a constrained likelihood fit recovers the junction exactly 24.1 to
37.1 points more often than a tuned Hessian ridge detector and 27.5 to
38.4 points more often than the same detector with automatic scale
selection,
without its acute-angle bias. On natural microtubule junctions, the
choice between the two methods moves the reported share of junctions
with an angle below $60^\circ$ from 44.5\% to 35.7\%. On Chandra X-ray
grating photons, where the directions are fixed by calibration, the
error of fits that find all four arms falls as $n^{-1/2}$. At 96 road
intersections traced by taxis, the fit beats the tuned detector by 14.2
points against map geometry, with error falling as $n^{-1/2}$
(Section~\ref{sec:experiments}).
\end{itemize}

\paragraph{Outline.}
Section~\ref{sec:model} sets up the model and the target.
Sections~\ref{sec:invisible}--\ref{sec:beyond} develop the theory, from
the ambiguity to the rates and the extensions. Section~\ref{sec:method}
turns it into an estimator, and Section~\ref{sec:experiments} tests that
estimator on simulated, assembled, natural, astronomical and road
junctions.
Proofs are in Appendix~\ref{app:proofs}, experimental details in
Appendix~\ref{app:design} and every prespecified outcome in
Appendix~\ref{app:outcomes}.

\section{Model and target}\label{sec:model}

Fix a dimension $D\ge2$. A junction with $K$ arms has a vertex
$p\in\R^D$, distinct unit directions $a_1,\dots,a_K$ and weights $w_j>0$
with $\sum_jw_j=1$. Write $\vartheta=(K,p,(a_j,w_j)_{j\le K})$. The
junction emits uniformly along the rays $\{p+ra_j:r\ge0\}$:
\begin{equation}\label{eq:model}
 \mu_\vartheta=\sum_{j=1}^K w_j\int_0^\infty\delta_{p+ra_j}\dd r ,
\end{equation}
so $w_j$ is the intensity per unit length of arm $j$ relative to the
others. The instrument blurs $\mu_\vartheta$ with an unknown kernel
\begin{equation}\label{eq:mixture}
 \begin{aligned}
 k_\eta(x)&=\sum_{\ell=1}^{J}v_\ell\,\ph_{\Sigma_\ell}(x-b_\ell),\\
 \sum_\ell v_\ell&=1,\qquad \sum_\ell v_\ell b_\ell=0,
 \end{aligned}
\end{equation}
a centered mixture of at most $J$ Gaussians with $v_\ell\ge0$,
$|b_\ell|\le B_0$ and $\lambda I\preceq\Sigma_\ell\preceq\Lambda I$; here
$\ph_\Sigma$ is the centered normal density with covariance $\Sigma$.
Components may have weight zero or coincide, so neither the number of
components nor their labels are part of what can be learned. The
intensity is $q_{\vartheta,\eta}=k_\eta*\mu_\vartheta$. In SMLM the
mixture describes localization errors whose size varies across
molecules; in grating spectra it describes a line-spread function with
wings. The rays are infinite; Section~\ref{sec:beyond} treats finite and
curved arms.

\paragraph{Observations.}
We observe the window\hfill\break $U=[-R_0,R_0]^D$. Points form a Poisson process
with intensity $Aq$ on $U$ and unknown exposure $A>0$. Given $N=n$
points, the positions are independent with density
\begin{equation}\label{eq:observation}
 \pi=q/Z,\qquad Z=\int_Uq(x)\dd x,
\end{equation}
and the spatial score is $g=\nabla_x\log q=\nabla_x\log\pi$. Because $A$
is free, the total $N$ carries no information about the shape, and
conditioning on it loses nothing (Appendix~\ref{app:minimax}). A
partition of $U$ into cells $U_1,\dots,U_M$ gives the cell probabilities
$P_i=\int_{U_i}\pi$, and the counts of a detector with $M$ pixels are
$\mathrm{Multinomial}(n,P)$.

\paragraph{Target.}
The blur is a nuisance; the target is the geometry. Let
$u_j=(I-a_ja_j^\top)p$ be the offset from the origin of the line that
carries arm $j$, fix a length $L>0$, and put
$s_j=(a_j,w_j,u_j/L)\in\R^{2D+1}$. The signature of the junction is
$S(\vartheta)=\{s_1,\dots,s_K\}$. For models with equal counts,
\[
 d_T(\vartheta,\vartheta')=\min\Bigl\{1,\ \min_\sigma
 \Bigl(\textstyle\sum_{j}\bigl|s_j-s'_{\sigma(j)}\bigr|^2\Bigr)^{1/2}\Bigr\},
\]
the inner minimum taken over permutations $\sigma$; for unequal counts
$d_T=1$. Two nonparallel supporting lines fix the vertex. Along a single
balanced line, a pair of opposite arms with equal weights, the vertex
has no observable position, which is why the target records offsets
rather than the vertex.

\paragraph{Assumptions.}
(A1) $1\le K\le K_{\max}$, $w_j\ge w_*>0$, $|a_i-a_j|\ge\delta>0$ for
$i\ne j$, and $|p|\le\bar p$. (A2) The blur has the form
\eqref{eq:mixture} with fixed $J$, $B_0$ and $0<\lambda\le\Lambda$.
(A3) The observations are positions in the fixed window $U$, or their
counts on a fixed partition of $U$. These bounds define a compact
semialgebraic class $\Ccal$ of pairs $(\vartheta,\eta)$, the union of its
strata $K=1,\dots,K_{\max}$. The floors in (A1) restrict the arms only;
blur components may vanish, merge or coincide.

\section{The blur the data cannot see}\label{sec:invisible}

Blurring a source $\mu$ with $\ph_\Sigma$ and with $\ph_{\Sigma+H}$ gives
two intensities. When are they equal? The answer depends on the source
only through one derivative.

\begin{lemma}[Invisible covariance changes]\label{lem:invisible}
Let $\mu$ be a positive measure on $\R^D$ with $\mu(\{|x|\le r\})$
bounded by a polynomial in $r$, and let $\Sigma$ and $\Sigma+H$ be
positive definite. Then $\ph_\Sigma*\mu=\ph_{\Sigma+H}*\mu$ if and only
if $H:\nabla^2\mu=\sum_{i,k}H_{ik}\partial_i\partial_k\mu=0$ as a
distribution.
\end{lemma}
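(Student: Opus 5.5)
The plan is to pass to Fourier transforms. The polynomial growth bound makes $\mu$ a tempered distribution, so $\hat\mu\in\mathcal S'$. Convolution with a Gaussian is also well defined on $\mu$: $\ph_\Sigma*\mu(x)=\int\ph_\Sigma(x-y)\dd\mu(y)$ converges because the Gaussian tail beats the polynomial growth of $\mu(\{|y|\le r\})$, and the result is smooth and of polynomial growth. We then have $\widehat{\ph_\Sigma*\mu}(\xi)=e^{-\xi^\top\Sigma\xi/2}\hat\mu(\xi)$ in $\mathcal S'$, since multiplication by the smooth function $e^{-\xi^\top\Sigma\xi/2}$, all of whose derivatives are bounded, is continuous on $\mathcal S'$. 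Equality of the two blurred intensities is therefore equivalent to
\[
 \bigl(e^{-\xi^\top\Sigma\xi/2}-e^{-\xi^\top(\Sigma+H)\xi/2}\bigr)\hat\mu=0
 \quad\text{in }\mathcal S'.
\]

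Factor the multiplier as $e^{-\xi^\top\Sigma\xi/2}\bigl(1-e^{-Q(\xi)/2}\bigr)$ with $Q(\xi)=\xi^\top H\xi$. The first factor can be divided out. Its reciprocal $e^{+\xi^\top\Sigma\xi/2}$ is not a multiplier on $\mathcal S'$, so the division is done locally: for a test function $\psi\in C_c^\infty$, $\psi e^{\xi^\top\Sigma\xi/2}$ is again a test function. Hence the equation holds if and only if $(1-e^{-Q/2})\hat\mu=0$ as a distribution. Write $1-e^{-Q/2}=\tfrac12 Q\cdot m(Q)$ with $m(t)=(1-e^{-t/2})/(t/2)$, which is entire and never zero on $\R$. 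The condition becomes $Q\,m(Q)\hat\mu=0$. Since $m(Q(\xi))$ is smooth and nonvanishing on $\R^D$, local division again gives the equivalent condition $Q\hat\mu=0$ in $\mathcal D'$.

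The main obstacle is transferring this back to $\mathcal S'$ and then to $\mu$, because $m(Q)^{-1}$ grows exponentially where $Q<0$. The way around it is to observe that $Q\hat\mu$ is already a tempered distribution, since $Q$ is a polynomial. A tempered distribution that vanishes against all compactly supported test functions is zero, by density of $C_c^\infty$ in $\mathcal S$. So $Q\hat\mu=0$ in $\mathcal S'$. The converse direction is immediate: if $Q\hat\mu=0$, multiplying by the smooth factor $\tfrac12 m(Q)e^{-\xi^\top\Sigma\xi/2}$ recovers the displayed equation locally, and the displayed distribution is tempered, so it vanishes in $\mathcal S'$.

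Finally, $\widehat{\partial_i\partial_k\mu}=-\xi_i\xi_k\hat\mu$, so $Q\hat\mu=-\widehat{H:\nabla^2\mu}$. Injectivity of the Fourier transform on $\mathcal S'$ then gives $H:\nabla^2\mu=0$ exactly when $\ph_\Sigma*\mu=\ph_{\Sigma+H}*\mu$. Positive definiteness of $\Sigma+H$ is used only to make the second convolution defined; $H$ itself may be indefinite.
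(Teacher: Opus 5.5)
Your proof is correct and follows the paper's argument essentially step for step: Fourier transform, local division by the smooth positive factors $e^{-\xi^\top\Sigma\xi/2}$ and $m(\xi^\top H\xi)$ against $C_c^\infty$ test functions, then density of $C_c^\infty$ in $\mathcal S$ applied to the tempered distribution $(\xi^\top H\xi)\widehat\mu$; the only difference is that the paper proves the converse through the heat equation $\partial_\tau(\ph_{\Sigma+\tau H}*\mu)=\tfrac12\ph_{\Sigma+\tau H}*(H:\nabla^2\mu)$, whereas your reversal of the chain works equally well. One harmless slip in your motivating remark is that $m(Q)$, not $m(Q)^{-1}$, grows exponentially where $Q<0$, which is why $(1-e^{-Q/2})\widehat\mu$ lives only in $\mathcal D'$; your argument never relies on that remark.
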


The condition does not involve $\Sigma$, and it is linear in $H$. For a
single straight line along $a$, $\nabla^2\mu$ lives on the normal space
of the line, so every $H$ whose normal block vanishes is invisible. A
junction combines several lines and their ends, and the ends are what
make most changes visible.

\begin{proposition}[Gaussian blur ambiguity at junctions]\label{prop:classification}
Let $J=1$ and $b_1=0$. If two models have equal spatial scores on a
nonempty open subset of $U$, their signatures are equal. Their
covariances can differ only if (i) the arms form one balanced line with
direction $a$ and $\Sigma'-\Sigma$ vanishes on $a^\perp\times a^\perp$,
or (ii) the arms form two nonparallel balanced lines with directions $a$
and $b$ and $\Sigma'-\Sigma=t(ab^\top+ba^\top)$ for some $t\in\R$. The
vertices agree, except for a shift along a single balanced line.
Conversely, each such change with $\Sigma'\succ0$ leaves the score
unchanged.
\end{proposition}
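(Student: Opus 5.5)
Here is the plan. First I pass from local to global equality. With $J=1$ and $b_1=0$, each intensity $q=\ph_\Sigma*\mu_\vartheta$ is real analytic and strictly positive on $\R^D$, because the Gaussian convolution of a polynomially growing measure extends to an entire function. Equal spatial scores on an open set give $\log q'-\log q=c$ there, so $q'=e^{c}q$ on the open set. By analyticity this holds on all of $\R^D$.

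Next I compare the two intensities far along each ray. Look at $q$ at $p+ra_j+y$ with $y\perp a_j$ and $r\to\infty$. All other arms contribute terms that are exponentially small in $r$, because the directions are separated. The contribution of a full line through $p$ along $-a_j$ is present only if some arm points in the direction $-a_j$. So, up to $o(1)$, the profile across arm $j$ is
\[
\bigl(w_j+w_{j^-}\bigr)\,\ph_{\Sigma_{a_j^\perp}}\bigl(y-(I-a_ja_j^\top)p\bigr)\big/|\Sigma^{-1/2}a_j|\text{-type factor},
\]
where $\Sigma_{a^\perp}$ is the covariance of the projection onto $a_j^\perp$. The point of this limit is that it separates the three pieces of information. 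Its support directions, i.e.\ where $q$ fails to decay, are the arm directions, so $\{a_j\}=\{a'_j\}$. Its mean is the offset, so $u_j=u'_j$. Its normal covariance gives $P_{a_j^\perp}\Sigma P_{a_j^\perp}=P_{a_j^\perp}\Sigma'P_{a_j^\perp}$ for every $j$.

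The weights are the delicate part. The factor above involves $\Sigma$, so I would not read $w_j$ off the far profile directly. Instead I would combine two facts. The profile mass along a line already fixes the line normalization. Separately, integrating $q$ over a slab transverse to arm $j$ at distance $r$ gives exactly $w_j$ per unit length, independent of $\Sigma$, for the unpaired arms. Together with $q'=e^cq$ and $\sum w=1$, this forces $c=0$ and $w=w'$. So the signatures agree.

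Then I characterize the covariance difference, using Lemma~\ref{lem:invisible}. With $q'=q$ and $\mu_\vartheta=\mu_{\vartheta'}$ up to the vertex question, the difference $H=\Sigma'-\Sigma$ must satisfy $H:\nabla^2\mu=0$. I compute this distribution directly. For a ray $\rho_j$ with unit speed, $\partial_i\partial_k\rho_j$ has three kinds of term:
\begin{itemize}
\item a term $a^\top Ha\,\partial_a\delta_p$ supported at the vertex,
\item line terms $\nabla^2_{a^\perp}$ on the ray, paired with the normal block $P_{a^\perp}HP_{a^\perp}$,
\item mixed vertex terms.
\end{itemize}
More precisely, $H:\nabla^2\rho_j=(H:\nabla^2_{\perp})\rho_j-\,(a_j^\top Ha_j)\,\partial_{a_j}\delta_p\cdot{}$(sign)$\,-2\,(Ha_j)_\perp\cdot\nabla\delta_p$ in the distributional sense. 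The ray parts force the normal blocks to vanish on every $a_j^\perp$. The vertex parts give a derivative-of-delta identity
\[
\sum_j w_j\bigl(a_j^\top Ha_j\,a_j+2(I-a_ja_j^\top)Ha_j\bigr)=\sum_jw_jHa_j\cdot(\ldots)=0,
\]
which reduces, given the normal-block conditions, to $\sum_j w_j (a_j^\top Ha_j)a_j=0$.

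Now the case analysis. Suppose some pair of nonparallel directions is not paired into a balanced line. Then the normal-block constraints force $H=0$: two nonparallel $a,b$ give $H$ supported on $\mathrm{span}\{a\}\otimes\mathrm{span}\{b\}$-type symmetric tensors, and a third direction, or the vertex identity with unbalanced weights, kills the remaining coefficient. For the two surviving configurations the constraints leave exactly the families stated:
\begin{itemize}
\item for one balanced line, $H$ vanishes on $a^\perp\times a^\perp$, and the vertex identity holds automatically because $w(a^\top Ha)(a-a)=0$;
\item for two balanced lines $a,b$, the only survivor is $t(ab^\top+ba^\top)$.
\end{itemize}
For the vertex statement: two nonparallel lines fix $p$ from their offsets, while a single balanced line is translation invariant along $a$. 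The converse follows directly from Lemma~\ref{lem:invisible}, since for full lines $\nabla^2\mu$ is the transverse Hessian of the line measures, and $ab^\top+ba^\top$ pairs to zero against both.

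The step I expect to be the main obstacle is the rigorous asymptotic profile extraction and its weight normalization. Controlling the cross-arm interference uniformly, and showing that the slab-mass argument pins the constant $c$, needs care near opposite arms of unequal weight. If that is too messy, I would try instead to apply Lemma~\ref{lem:invisible}-style Fourier reasoning. The idea is to write $\hat q=e^{-\xi^\top\Sigma\xi/2}\hat\mu$ and compare the singular supports of $\hat\mu$, namely the hyperplanes $a_j^\perp$ with their $1/(\xi\cdot a_j)$ singularities. That would read off directions, weights and offsets algebraically.
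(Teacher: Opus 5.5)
Your outline follows the paper's route: continue $q'=cq$ analytically, identify the signature from the far field, then apply Lemma~\ref{lem:invisible} and split $H:\nabla^2\mu$ into line terms and a vertex term. The continuation, the normal-block step and the converse are sound. The vertex term, however, is reduced incorrectly, and the error changes the classification. With $P_jHP_j=0$, the vertex term of ray $j$ is a derivative of $\delta_p$ along $v_j=(a_j^\top Ha_j)a_j+2P_jHa_j=2Ha_j-(a_j^\top Ha_j)a_j$. The normal-block condition constrains $u^\top Hw$ only for $u,w\perp a_j$ and says nothing about the mixed block $P_jHa_j$, so that term cannot be dropped.

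Your reduced identity $\sum_jw_j(a_j^\top Ha_j)a_j=0$ evaluates, for $H=t(ab^\top+ba^\top)$, to $2t(a^\top b)(\Delta_aa+\Delta_bb)$. This vanishes for every imbalance when $a\perp b$. It would make the correlation invisible in the four-quadrant model with $\tau>0$, contradicting Theorem~\ref{thm:fourcells}, and for a single ray it would leave $H=av^\top+va^\top$, $v\perp a$, free. The correct sum is $2t(\Delta_ab+\Delta_ba)$, which is the paper's $H:\nabla^2\mu=2t(\Delta_aD_b+\Delta_bD_a)\delta_p$. It is precisely the mixed block that forces both lines to be balanced.

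Your case analysis also omits a single unbalanced line: one ray, or two opposite arms with $w_+\ne w_-$. There the offsets fix the vertex only modulo $a$, and there is no translation invariance. So $\mu_\vartheta$ and $\mu_{\vartheta'}$ may differ by a shift, and Lemma~\ref{lem:invisible}, which compares two blurs of one fixed source, cannot be invoked until $p'_\parallel=p_\parallel$ is proved. The paper closes this by Gaussian conditioning along the line, $q(t,z)=\ph_C(z-p_\perp)[w_-+(w_+-w_-)\Phi\{(t-p_\parallel-\ell^\top(z-p_\perp))/\sqrt v\}]$. Its $t$-dependence identifies $p_\parallel$, $\ell=C^{-1}c$ and $v$, hence all of $\Sigma$. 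Your Fourier fallback would also work here: off $\{\xi^\top a=0\}$ the transform of such a line is a nonvanishing multiple of $1/(\xi^\top a)$. So $e^{-\xi^\top H\xi/2-is\,\xi^\top a}=1$ there, which gives $H=0$ and shift $s=0$.

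The far-profile step is also wrong, though easily repaired. At $p+Ra_j+Ey$ the opposite ray $\{p-sa_j:s\ge0\}$ is at distance at least $R$, so it decays like $e^{-cR^2}$; its $\Phi$ factor in Lemma~\ref{lem:rayformula} tends to zero. It does not join arm $j$ into a full line. Moreover, integrating $\ph_\Sigma$ along a line gives exactly the normalized transverse marginal $\ph_{E^\top\Sigma E}$, with no further $\Sigma$-dependent factor.

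So the far profile is $w_j\ph_{C_j}(y-E^\top p)$, of mass exactly $w_j$. Comparing it with $q'=cq$ gives $w_j'=cw_j$ for every oriented arm, paired or not, and $c=1$ after summing; this is the paper's Lemma~\ref{lem:identification}. Your slab detour, stated for unpaired arms only, leaves the separate weights of two opposite arms undetermined. These weights are exactly what decides whether a line is balanced. A slab over the whole transverse hyperplane would also pick up the other arms that cross it.
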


The example \eqref{eq:example} is case (ii) with $a=e_1$, $b=e_2$. The two
lines need not be orthogonal or equally bright; each only has to carry
equal weight on its two arms. A half-line, an unbalanced pair or a third
direction breaks the ambiguity, because the end of an arm, or a third
normal space, sees every remaining direction of $H$.

\paragraph{The invisible direction in camera images.}
The proposition predicts something that can be tested on an instrument
without fitting a blur. Smooth a measured image of a crossing twice, with
two known Gaussian kernels whose covariances differ by $\pm h\,H$. For two
balanced straight lines with directions $a,b$ and
$H\propto ab^\top+ba^\top$, the two smoothed images agree for every
point-spread function of the instrument, because the difference of the
two kernels annihilates the source before any blur acts on it. A
generic change of the same size does not. We ran this test on the
DL-SMLM collection \citep{zhao25}, which pairs 100-frame
diffraction-limited image stacks of labeled microtubules with
super-resolution reconstructions. At 91 four-arm crossings in 21 cells,
whose opposite arms lie within $15^\circ$ of a straight line according to
the super-resolution reference, we measured how much the image changes
under the predicted invisible perturbation and under two controls of equal
Frobenius size: the same perturbation turned by $45^\circ$, and an
isotropic one. The change is measured as the energy of the difference
image near the crossing, estimated from two independent frame averages so
that the energy of zero-mean noise cancels, and scaled by the local
contrast (Appendix~\ref{app:camera}). On analytic images of balanced
crossings the test returns below $2\times10^{-9}$, and on unbalanced ones
$0.008$ to $0.021$ times the isotropic response. On the measured images
(Figure~\ref{fig:camera}) the mean response is $0.0023$ along
$ab^\top+ba^\top$, against $0.0188$ for the turned perturbation and
$0.0205$ for the isotropic one, and the invisible direction gives the
smallest response at every one of the 91 crossings. The isotropic
comparison was fixed in advance; the turned comparison was added after
the first result. Measured crossings are finite, slightly unbalanced and
slightly curved, so the response is small, not zero. The camera, in
other words, barely sees the blur direction that the theory says no
instrument can see.

\begin{figure}[t]
\centering\includegraphics[width=\columnwidth]{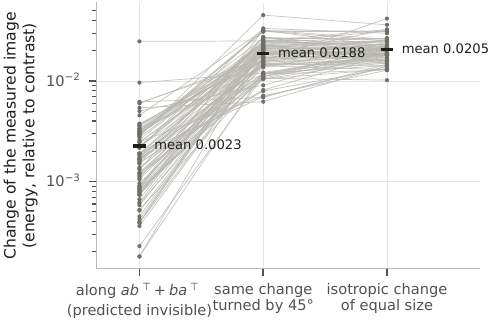}
\caption{How much a measured image of a microtubule crossing changes
when its blur covariance is changed by a fixed amount, at 91 crossings
in 21 cells of the DL-SMLM collection. Gray lines join the three
responses of one crossing; black bars are means. The change along
$ab^\top+ba^\top$, where $a,b$ are the directions of the two filaments,
is the one Proposition~\ref{prop:classification} calls invisible.}
\label{fig:camera}
\end{figure}

\paragraph{Why a class of blurs is needed.}
Some restriction on the blur is unavoidable. Let $\nu_H$ and $\nu_V$ be
centered uniform measures on equal horizontal and vertical segments and
$\ph$ an isotropic Gaussian density. The smooth positive centered kernels
$k_H=\ph*\nu_V$ and $k_V=\ph*\nu_H$ satisfy $k_H*\nu_H=k_V*\nu_V$ by
commutativity of convolution: under an unrestricted unknown kernel, a
horizontal segment and a vertical one produce the same image. Mixtures of
Gaussians with eigenvalues in $[\lambda,\Lambda]$ exclude such kernels
while still describing blurs with several widths and heavy shoulders.

\section{Stability through the ambiguity}\label{sec:stability}

\begin{theorem}[Uniform geometric stability]\label{thm:inverse}
There are constants $C_g,C_{\rm pix}<\infty$ and a finite partition of
$U$ into equal cubes such that, for all $(\vartheta,\eta)$ and
$(\vartheta',\eta')$ in $\Ccal$,
\begin{align}
 d_T(\vartheta,\vartheta')&\le
 C_g\norm{g_{\vartheta,\eta}-g_{\vartheta',\eta'}}_{L^2(U)},
 \label{eq:inverse}\\
 d_T(\vartheta,\vartheta')&\le
 C_{\rm pix}\norm{P_{\vartheta,\eta}-P_{\vartheta',\eta'}}_2.
 \label{eq:bininverse}
\end{align}
The constants depend on the bounds in (A1)--(A3), not on any
identifiability margin of the blur. Mixture labels, component parameters
and the number of active components need not be identifiable.
\end{theorem}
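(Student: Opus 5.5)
The plan is a compactness-plus-local-analysis argument on the compact semialgebraic class $\Ccal$. Write $F(\vartheta,\eta)$ for either observation map: the score $g_{\vartheta,\eta}\in L^2(U)$, or the vector $P_{\vartheta,\eta}\in\R^M$. The ratio $d_T/\norm{F-F'}$ can fail to be bounded only near pairs where $F=F'$. It therefore suffices to show two things. First, \emph{injectivity modulo the nuisance}: $F=F'$ forces $S(\vartheta)=S(\vartheta')$. Second, \emph{local linear control}: near each pair with $F=F'$, the distance $d_T$ vanishes at most to the same order as $\norm{F-F'}$. This second comparison is exactly what the order criterion, Proposition~\ref{prop:ordercriterion}, is for. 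It compares orders of vanishing of a target and an observation along analytic curves. By the curve selection lemma on the semialgebraic set $\Ccal\times\Ccal$, one then gets a uniform Lipschitz constant. A Łojasiewicz-type exponent cannot appear, because the orders match. Pairs with unequal arm counts are handled by showing $\inf\norm{F-F'}>0$ on that compact set, using injectivity. The cap $d_T\le1$ takes care of the rest.

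\textbf{Step 1 (identification of the signature).} Suppose two models give equal scores on $U$. Since $q$ is real analytic, $\log q$ then agrees up to a constant, so $q'=cq$ on all of $\R^D$. Far along each arm direction $a_j$, the intensity is asymptotically a translation-invariant profile across the arm. Its mass is $w_j$ times the marginal of $k_\eta$ on $a_j^\perp$, which has total mass one. Because the kernel is centered, the profile's mean is the offset $u_j$, whatever the blur. Comparing the asymptotics along rays in every direction recovers $K$, the directions $a_j$, the weights $w_j$ (via $c$ and normalization) and the offsets $u_j$. This works even though $\eta$ itself need not be identified; the mechanism is the same one behind Proposition~\ref{prop:classification}. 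The main subtlety is the step from the window $U$ to all of $\R^D$. For that I use analytic continuation of $q$, an entire function in $x$ given the Gaussian mixture form.

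\textbf{Step 2 (matching orders along curves; the main obstacle).} Take analytic curves $t\mapsto(\vartheta_t,\eta_t),(\vartheta'_t,\eta'_t)$ in $\Ccal$ converging to a pair with equal observations. I need $\ord_t d_T^2\ge\ord_t\norm{F_t-F'_t}^2$. Split $F_t-F'_t$ into its first nonvanishing Taylor coefficient in $t$. Suppose the signature changes at order $m$: a direction, weight or offset moves by $t^m$. The $t^m$ coefficient of $q_t-c_tq'_t$ then contains a term that grows linearly along the arm, since the profile slides by $r\,\delta a$ at distance $r$. It also contains a change of profile mass or mean. No integrable perturbation of the blur profile can cancel these. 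This is the ``no integrable profile absorbs a slide'' observation, made into a statement about the leading coefficient.

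The obstacle is twofold. First, the observation lives on the bounded window $U$, while the argument uses tails. I would again pass through analyticity: a nonzero entire function coefficient that is nonzero on $\R^D$ is nonzero on $U$. Second, the blur part of the curve can cancel lower-order terms. This happens exactly in the invisible directions of Lemma~\ref{lem:invisible}. Those directions leave the signature fixed, so they contribute to neither side at that order. I would handle this by reparametrizing so that the blur moves within the invisible set of Proposition~\ref{prop:classification}, extended to mixtures componentwise, plus a visible remainder.

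\textbf{Step 3 (from scores to cells).} For \eqref{eq:bininverse}, take a fine partition into equal cubes. The finite-dimensional map $(\vartheta,\eta)\mapsto P$ is analytic. Its coefficients along curves determine the corresponding Taylor coefficients of $\pi$ up to averaging over cubes of side $h$. A nonzero analytic coefficient function, drawn from a compact family normalized in $L^2$, has a cube average bounded below once $h$ is small enough, uniformly over the family by compactness. So the order argument of Step~2 transfers to $P$ with a single fixed $h$.

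Finally, compactness turns these local bounds into the global constants $C_g$ and $C_{\rm pix}$. The constants depend only on (A1)–(A3), since no step uses a margin on $\eta$.
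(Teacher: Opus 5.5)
Your overall architecture (identification, the order criterion through curve selection, separation of different counts, and the cap) is the paper's, and your Step~1 matches Lemma~\ref{lem:identification}.

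\textbf{Main gap: how Step~2 handles the blur.} You propose to split the blur motion along each arc into a part inside the invisible set of Proposition~\ref{prop:classification}, ``extended to mixtures componentwise'', plus a visible remainder. That step is not available, and it would not close the argument.

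Proposition~\ref{prop:classification} is proved only for $J=1$, $b_1=0$. For mixtures, the blurs that give one law are not described componentwise by Lemma~\ref{lem:invisible}. A slot of weight zero has free parameters, slots can merge, and along a balanced line the slot means can move longitudinally subject only to $\sum_\ell v_\ell b_\ell=0$. So the invisible set is singular, and an arc has no canonical invisible/visible splitting. Even with one, you would still have to show that the visible remainder cannot cancel the signature term at leading order, which is the original question. Any quantitative control of that remainder would bring back an identifiability margin of the blur, which the theorem excludes.

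The missing idea is that the blur never needs to be analysed.
\begin{itemize}
\item Fix $x_*\in U$ and put $c(t)=q'(t,x_*)/q(t,x_*)$. Connectedness and analytic continuation turn an $O(t^m)$ score difference into vanishing coefficients of $q'-cq$ below degree $m$ on all of $\R^D$, and also along analytic curves $x(t)$.
\item Evaluate at $x(t)=p(t)+Ra_j(t)+E(t)y$, which moves with arm $j$. Up to Gaussian tails in $R$ from the other arms and the end factors, $R$ enters only through $Re(t)$ with $e=E'^\top(a_j-a_j')$. Every blur derivative sits in $R$-free terms.
\item If the directions first differ at order $r<m$, the order-$r$ coefficient is affine in $R$ with slope $-w_j'(0)\,u^\top\nabla f_j'(0,\cdot)$, where $u=E(0)^\top h\ne0$. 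Because the coefficient is exponentially small in $R$, the slope must vanish. Then the positive integrable profile $f_j'(0,\cdot)$ would be constant along $u$, which is impossible.
\item Once directions agree to order $m$, the coefficients below $m$ are $R$-free. Integrating the far-profile identity against $1$ and $y$ gives the weights, then $c=1+O(t^m)$, then the offsets. This works because every admissible profile has mass one and mean zero, whatever the blur does (Lemma~\ref{lem:mixturerigidity}).
\end{itemize}
Drop the reparametrization and use this argument instead.

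\textbf{Second, smaller gap: Step~3.} At a fixed partition, $P=P'$ does not by itself give $\pi=\pi'$. Your uniform lower bound on cube averages rests on a ``compact family'' of normalized coefficients whose compactness you assert but do not prove. The functions whose cube averages must be bounded below are the normalized differences $(\pi-\pi')/\norm{\pi-\pi'}_2$ near equal-law pairs, and their compactness is not automatic.

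The paper gets it from derivative domination: $\norm{\partial_x^\alpha(\pi-\pi')}_{L^2(U)}\le C_\alpha\norm{\pi-\pi'}_{L^2(U)}$ uniformly in the pair (Lemma~\ref{lem:derivatives}). This is again proved by curve selection, since spatial differentiation cannot lower the order of the leading coefficient along an arc.

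With Sobolev embedding this gives $\norm{F-\Pi_hF}_2\le h\sqrt{|U|}\,B\norm F_2$ for $F=\pi-\pi'$, where $\Pi_h$ is the cell-average projection. Hence $\norm F_2\le2v^{-1/2}\norm{P-P'}_2$ for one small $h$, with $v$ the cube volume. Then \eqref{eq:bininverse} follows from \eqref{eq:inverse} through $g-g'=\nabla F/\pi-F\nabla\pi'/(\pi\pi')$. No second run of the order argument and no separate identification from cells is needed.
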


Identifiability alone would not give this. The pair $F(t)=t^3$,
$T(t)=t$ is identified, yet no Lipschitz inverse exists at $t=0$. Near a
balanced crossing the full parameter behaves in this way, because the
covariance is weakly identified. What rules the failure out for the
target is a comparison of orders along curves, which the next
proposition turns into a uniform bound. It is stated for any analytic
model.

\begin{proposition}[Analytic order criterion]\label{prop:ordercriterion}
Let $\mathcal Z\subset\R^m$ be compact and semianalytic, and let $F$,
with values in a real Hilbert space, and $T$, with values in $\R^k$,
extend real-analytically to a neighborhood of $\mathcal Z$. Suppose that
$F(z)=F(z')$ implies $T(z)=T(z')$. If every analytic arc
$t\mapsto(z(t),z'(t))$ in $\mathcal Z^2$ that starts at a pair with equal
observations and has $T(z(t))\ne T(z'(t))$ for $t>0$ satisfies
$\ord_t\{T(z(t))-T(z'(t))\}\ge\ord_t\{F(z(t))-F(z'(t))\}$, then
$\norm{T(z)-T(z')}\le C\norm{F(z)-F(z')}$ on $\mathcal Z^2$.
\end{proposition}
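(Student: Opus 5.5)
We argue by contradiction and reduce the failure of a uniform constant to a single analytic arc, using the curve selection lemma. Suppose no $C$ works. Then there are pairs $(z_n,z'_n)\in\mathcal Z^2$ with
\[
\norm{T(z_n)-T(z'_n)}>n\norm{F(z_n)-F(z'_n)}.
\]
Since $T$ is bounded on the compact set $\mathcal Z$, the norm $\norm{F(z_n)-F(z'_n)}$ tends to zero. Passing to a subsequence, $(z_n,z'_n)\to(z_0,z'_0)$, and continuity gives $F(z_0)=F(z'_0)$. The hypothesis then gives $T(z_0)=T(z'_0)$. So the failure is localized at a pair with equal observations, and away from such pairs the ratio is bounded by compactness.

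To pass from a sequence to an arc, we need semianalytic sets. The difficulty is that $F$ takes values in a Hilbert space, so $\norm{F(z)-F(z')}^2$ is not obviously analytic in finitely many variables in a way that makes the bad set semianalytic. I would handle this with the following chain.
\begin{itemize}[leftmargin=*]
\item The function $\Phi(z,z')=\norm{F(z)-F(z')}^2$ is a real-analytic scalar function on a neighborhood of $\mathcal Z^2$. Indeed, $F$ is given by a convergent power series with Hilbert-space coefficients, and the inner product of two such series is again a convergent power series.
\item Likewise $\Psi(z,z')=|T(z)-T(z')|^2$ is analytic.
\item For fixed $n$, the set $B_n=\{(z,z')\in\mathcal Z^2:\Psi>n^2\Phi\}$ is semianalytic, but $n$ varies.
\end{itemize}
The cleaner route is the Łojasiewicz-type comparison: it suffices to show $\Psi\le C\Phi$ near each point of the compact zero set $\{\Phi=0\}\subset\mathcal Z^2$, and then patch with a finite cover and compactness.

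Near a point $(z_0,z'_0)$ with $\Phi=0$, consider the semianalytic set $\{\Psi>0\}$ and the analytic functions $\Psi,\Phi$ on it. Suppose the ratio $\Psi/\Phi$ is unbounded near the point. Then the set $\{(\zeta,s):\zeta\in\mathcal Z^2,\ s\Psi(\zeta)\ge \Phi(\zeta)\}$, taken near $(z_0,z'_0,0)$ with $s>0$, is semianalytic in $\R^{2m+1}$ and has $(z_0,z'_0,0)$ in its closure. The curve selection lemma for semianalytic sets (Łojasiewicz) then yields an analytic arc $t\mapsto(z(t),z'(t),s(t))$ with the following properties:
\begin{itemize}[leftmargin=*]
\item $s(0)=0$ and $s(t)>0$ for $t>0$;
\item the arc lies in the set for $t>0$, so $\Psi>0$ along it, that is, $T(z(t))\ne T(z'(t))$.
\end{itemize}
Along this arc $\Phi\le s\Psi$ with $s\to0$, so $\ord_t\Phi>\ord_t\Psi$. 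Since $\ord_t\Psi=2\ord_t\{T(z(t))-T(z'(t))\}$ and $\ord_t\Phi=2\ord_t\{F(z(t))-F(z'(t))\}$, this gives
\[
\ord_t\{T(z(t))-T(z'(t))\}<\ord_t\{F(z(t))-F(z'(t))\},
\]
contradicting the hypothesis. One point needs care: the order of a Hilbert-valued analytic arc equals half the order of its squared norm. This holds because the leading coefficient is a nonzero vector $c$, and $\norm{c t^k+O(t^{k+1})}^2=\norm c^2t^{2k}+O(t^{2k+1})$. The arc starts at a pair with equal observations, as required.

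The main obstacle is the semianalyticity and curve selection in the presence of the Hilbert-valued $F$. The plan reduces everything to the two scalar analytic functions $\Phi,\Psi$, which makes the sets semianalytic in finite dimension. Even so, one must check that $\Phi$ is analytic on a full neighborhood of $\mathcal Z^2$ and not merely on $\mathcal Z^2$; this uses the assumed analytic extension of $F$.

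A subsidiary step is passing from the local bounds near each equal-observation pair to the global constant. Around each point of the compact zero set we take an open neighborhood on which $\Psi\le C\Phi$. On the compact complement of the union of these neighborhoods, $\Phi$ is bounded below and $\Psi$ is bounded above. Taking the maximum of the finitely many constants gives $\norm{T(z)-T(z')}\le C\norm{F(z)-F(z')}$ on $\mathcal Z^2$.
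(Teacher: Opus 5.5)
Your proposal is correct and follows the paper's proof. The steps match: argue by contradiction, use compactness to localize at a pair with equal observations, apply semianalytic curve selection with an auxiliary variable $s\downarrow0$, and compare the even orders of the two squared norms. One point to tighten: write $\Psi>0$ explicitly into the set to which you apply curve selection, as the paper does with $\{f>0,\ s>0,\ Q<s^2f\}$. Without that condition the set contains $\{\Phi=\Psi=0\}\times(0,\infty)$, so the selected arc could lie there and give no contradiction.
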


The proof is short: a failing sequence and real-analytic curve selection
\citep{bierstone88} produce an arc on which the observation changes at a
strictly higher order than the target (Appendix~\ref{app:order}). The
criterion knows nothing about rays. It reduces a uniform inequality over a
compact family, with all its singular points, to a statement about
Taylor coefficients along one curve at a time. For junctions the work is
to verify that statement, and the far field does it.

\begin{figure}[t]
\centering\includegraphics[width=\columnwidth]{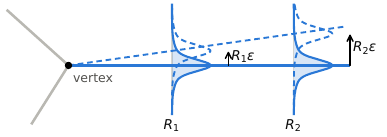}
\caption{Far along an arm the data show the blur only through its
profile across the arm (shaded). The mass of the profile is the weight of
the arm and its mean is the line of the arm, whatever the blur mixture.
Turning the arm by a small angle $\varepsilon$ (dashed) moves the profile
by about $R\varepsilon$ at distance $R$: a shift that grows without
bound, which no integrable profile can absorb.}
\label{fig:profile}
\end{figure}

\paragraph{Why the far field settles the orders.}
Take an arm $a$ and an orthonormal basis $E$ of $a^\perp$
(Figure~\ref{fig:profile}). Far along the arm, at $x=p+Ra+Ey$, the
intensity converges as $R\to\infty$ to
\begin{equation}\label{eq:mixtureprofile}
 w\,f(y),\qquad
 f(y)=\sum_\ell v_\ell\,\ph_{E^\top\Sigma_\ell E}(y-E^\top b_\ell),
\end{equation}
while every other arm decays like $e^{-cR^2}$. The profile $f$ is the
blur seen across the arm. It has mass one, and mean zero because the
mixture is centered. Its mass returns the weight and its mean the
supporting line, whatever the mixture. Directions carry the argument.
Suppose two analytic curves of models have scores that agree to order
$m-1$ in the curve parameter $t$, while their directions first differ at
order $r<m$, by $t^rh$ with $h\perp a$. At distance $R$ the far profile
moves by $Rt^rh$, so the order-$r$ coefficient contains
$R\,h^\top\nabla f$, which grows with $R$. It must vanish for every $R$,
which would make the integrable density $f$ constant along $h$. Hence
$h=0$: the target never changes before the observation.
Proposition~\ref{prop:ordercriterion} then gives \eqref{eq:inverse}, and
a derivative bound on the analytic family followed by cell averaging
gives \eqref{eq:bininverse}.

The floors $w_*$ and $\delta$ cannot be dropped: an arm of weight $t$, or
two arms at angles $\pm t$, change the law by $o(1)$ as $t\to0$ while
$d_T=1$. They restrict the arms only. Blur components may vanish or merge
freely, and the argument never divides by their weights.

\section{Rates and the price of counting}\label{sec:rates}

\subsection{The parametric rate, at the singular set}
\begin{theorem}[Minimax rate]\label{thm:statistics}
For the partition of Theorem~\ref{thm:inverse}, a measurable
minimum-distance estimator $\widehat S$ from $n$ counts satisfies
\begin{equation}\label{eq:risk}
 \sup_{(\vartheta,\eta)\in\Ccal}\E\,d_T(\widehat S,S(\vartheta))
 \le C_{\rm pix}\bigl(2n^{-1/2}+n^{-1}\bigr).
\end{equation}
There is $c>0$ such that every estimator, even one given the $n$ exact
positions, has worst-case expected loss at least $cn^{-1/2}$ for all
large $n$. The minimax rate is $\Theta(n^{-1/2})$.
\end{theorem}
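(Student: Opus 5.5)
The upper bound follows from the binned inequality \eqref{eq:bininverse}, combined with a standard minimum-distance argument. Let $\widehat P=(N_1,\dots,N_M)/n$ be the empirical cell frequencies. Define $\widehat S=S(\hat\vartheta)$, where $(\hat\vartheta,\hat\eta)$ nearly minimizes $\norm{P_{\vartheta,\eta}-\widehat P}_2$ over $\Ccal$. We take a near-minimizer, within $1/n$ of the infimum, chosen by a measurable selection; this is possible because $\Ccal$ is compact semialgebraic and $P$ is continuous. The $1/n$ slack is the source of the $n^{-1}$ term in \eqref{eq:risk}. By the triangle inequality,
\[
\norm{P_{\hat\vartheta,\hat\eta}-P_{\vartheta,\eta}}_2\le2\norm{\widehat P-P_{\vartheta,\eta}}_2+n^{-1}.
\]
Theorem~\ref{thm:inverse} then gives $d_T\le C_{\rm pix}(2\norm{\widehat P-P}_2+n^{-1})$. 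Finally, by Jensen,
\[
\E\norm{\widehat P-P}_2\le\bigl(\sum_iP_i(1-P_i)/n\bigr)^{1/2}\le n^{-1/2},
\]
uniformly in the model. This yields \eqref{eq:risk}. Under the Poisson observation model we condition on $N=n$ as in Section~\ref{sec:model}.

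For the lower bound we use Le Cam's two-point method with exact positions. Fix an interior model $(\vartheta_0,\eta_0)$ in $\Ccal$, for instance a single ray with one Gaussian component, with all constraints of (A1)--(A2) strict. Perturb one parameter that moves the signature linearly, such as rotating an arm by angle $t=n^{-1/2}$, to obtain $\vartheta_t$ with the same blur. Then $d_T(\vartheta_0,\vartheta_t)\ge c_1t$, since $|a_j-a'_j|\asymp t$ and the optimal permutation is the identity for small $t$ by the separation $\delta$.

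On the information side, the density $\pi_{\vartheta_t,\eta_0}$ is a smooth function of $t$ on the compact window $U$ and is bounded below there, because Gaussian blur makes $q$ strictly positive on $U$. Hence the squared Hellinger distance satisfies $H^2(\pi_0,\pi_t)\le C_2t^2$ through the Fisher information at $t=0$. Tensorizing over $n$ points gives $H^2(\pi_0^{\otimes n},\pi_t^{\otimes n})\le nC_2t^2=C_2$. The total variation distance between the $n$-sample laws is therefore bounded away from one once $t$ is scaled by a small constant. Le Cam's bound then gives, for every estimator, a worst-case expected loss of at least $\tfrac14 c_1t(1-\TV)\ge cn^{-1/2}$.

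The main obstacle is ensuring that the perturbation stays inside $\Ccal$ and that the Fisher information is finite uniformly. Both hold at an interior point; we only need one such point. We also need that no reparametrization subtlety, such as the minimum over permutations in $d_T$ or its truncation at $1$, spoils the linear lower bound on $d_T$ for small $t$. Combining the two bounds gives $\Theta(n^{-1/2})$.
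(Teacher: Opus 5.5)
Your proposal is correct and follows the paper's proof. The upper bound comes from a minimum-distance fit to the empirical cell frequencies with $1/n$ slack, the cell inverse \eqref{eq:bininverse} and the multinomial bound $\E\norm{\widehat P-P}_2^2\le1/n$. The lower bound comes from a single rotating ray and a two-point argument. The differences are cosmetic. The paper obtains measurability and the $n^{-1}$ term from a finite $1/n$-net of the compact image of $\Ccal$; measurability is in fact automatic, since $\widehat P$ takes finitely many values. The paper also controls the two-point divergence by Kullback--Leibler and Pinsker rather than by Hellinger tensorization.
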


The upper bound combines \eqref{eq:bininverse} with
$\E\norm{\widehat P-P}_2^2\le1/n$ for empirical frequencies; a single
rotating ray gives the lower bound (Appendix~\ref{app:minimax}). The
theorem fixes the order in $n$ for the whole class. Its constant is
existential, and Section~\ref{sec:counting} exhibits admissible pairs
that need more than $10^8$ points to be told apart, so the theorem is a
structural statement about the model class rather than a guarantee for a
window of 25 localizations. It holds on the singular set itself, where the
covariance is lost.

\begin{proposition}[Sharpness at a balanced crossing]\label{prop:crossing}
Take a balanced crossing in $\Ccal$ with nonparallel lines $a,b$,
weights $(\alpha/2,\alpha/2,(1-\alpha)/2,(1-\alpha)/2)$, $0<\alpha<1$,
vertex $p$, and one Gaussian with $\lambda I\prec\Sigma\prec\Lambda I$;
put $H=ab^\top+ba^\top$. There is $h_0>0$ such that the models with
covariances $\Sigma\pm h_0H$ have the same law, so every covariance
estimator has maximum risk at least $h_0\norm H_{\rm F}$ for every $n$.
Rotating the crossing and its covariance in the plane of $a,b$ gives
alternatives at geometric distance of order $n^{-1/2}$ whose $n$-point
laws stay within total variation $1/4$, so the signature risk is at
least $3\lambda/\{8\pi(\sqrt DR_0+|p|)^2\sqrt n\}$ for large $n$.
\end{proposition}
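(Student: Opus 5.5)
The proof has two independent parts: an exact non-identifiability for the covariance, and a two-point Le Cam argument for the signature.

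\emph{Covariance part.} I would invoke Lemma~\ref{lem:invisible} directly. For a balanced crossing, $\mu$ is a sum of two full lines, $\tfrac\alpha2\int_{\R}\delta_{p+ra}\dd r+\tfrac{1-\alpha}2\int_{\R}\delta_{p+rb}\dd r$. For the line along $a$, $\nabla^2\mu$ is supported on the normal block $a^\perp\times a^\perp$. Pairing with $H$ gives $H:\nabla^2\mu_a=\sum_{ik}H_{ik}\partial_i\partial_k\mu_a$, and only the components of $H$ acting on $a^\perp\times a^\perp$ contribute. Since $H=ab^\top+ba^\top$, we have $(I-aa^\top)H(I-aa^\top)=0$, so this term vanishes, and symmetrically for $b$. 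This is just case (ii) of Proposition~\ref{prop:classification}.

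Because $\lambda I\prec\Sigma\prec\Lambda I$ strictly, some $h_0>0$ keeps $\Sigma\pm h_0H$ inside $(\lambda I,\Lambda I)$, so both models lie in $\Ccal$ and have identical intensities, hence identical $n$-point laws. For any estimator $\widehat\Sigma$ the triangle inequality gives
\[
\max_\pm\E\norm{\widehat\Sigma-(\Sigma\pm h_0H)}_{\rm F}\ge\tfrac12\cdot 2h_0\norm H_{\rm F}=h_0\norm H_{\rm F}.
\]

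\emph{Signature part.} I would use Le Cam's two-point bound. The alternative rotates the whole configuration, vertex, $a$, $b$ and $\Sigma$, by an angle $\varepsilon$ about a point in the plane of $a,b$; the vertex $p$ is the natural center. With $\Sigma_\varepsilon=Q_\varepsilon\Sigma Q_\varepsilon^\top$, the blurred intensity of the new model is exactly $q\circ Q_\varepsilon^{-1}$ about $p$, so the model is an exact rotation of the old one. The signatures then differ by $d_T\asymp\varepsilon$, since $|a-a_\varepsilon|\approx\varepsilon$ and the offsets move by $O(\varepsilon|p|)$. The model stays in $\Ccal$ for small $\varepsilon$ by strictness of the eigenvalue bounds and the separation margins. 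For a rotation, $\partial_\varepsilon\log\pi$ at $\varepsilon=0$ is $-(x-p)^\top J\,g(x)$ minus its mean, where $J$ is the rotation generator in the $a,b$ plane and the normalizer $Z$ contributes only a boundary term. I then need a bound on the Fisher information $I\le \E[|x-p|^2|g|^2]\le(\sqrt DR_0+|p|)^2\,\E|g|^2$. The score of a Gaussian mixture convolved with any positive measure is bounded in mean square by something like $\operatorname{tr}\Sigma^{-1}$, giving a bound of order $1/\lambda$ up to dimensional factors (Brascamp--Lieb or the conditional-expectation representation $g=-\E[\Sigma^{-1}(x-Y)\mid x]$).

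Then $\mathrm{KL}\le n\varepsilon^2 I/2+o(\varepsilon^2)$ and Pinsker give $\TV\le\sqrt{n\varepsilon^2I/4}$. Choosing $\varepsilon=c'/\sqrt n$ so that $\TV\le1/4$, Le Cam yields risk $\ge\tfrac12\cdot\tfrac{\varepsilon}{2}(1-\TV)$, which produces a constant of the stated form $3\lambda/\{8\pi(\sqrt DR_0+|p|)^2\sqrt n\}$ after tracking constants. The factor $3/8=\tfrac12(1-\tfrac14)$ is suggestive.

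The main obstacle is matching the exact displayed constant. It looks as if it comes from a Hellinger/TV bound in which $\lambda$ appears in the numerator, i.e.\ via $\varepsilon$ chosen proportional to $\sqrt\lambda/(\sqrt DR_0+|p|)$ squared through a geometric distance that is bounded below, rather than from a Fisher-information calculation. I would first carry out the Fisher-information route above to get the $n^{-1/2}$ order. I would then reverse-engineer the constant by bounding the TV between rotated laws through the $L^1$ norm of $\varepsilon\,(x-p)^\top J\nabla\pi$, and computing $d_T$ from the direction change alone. I would also check the boundary contributions from the window $U$ at this step.
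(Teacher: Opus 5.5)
Your covariance part is correct and is the paper's argument. Your alternatives are also the paper's: directions and covariance rotated in the plane of $a,b$ with the vertex $p$ fixed. Your route does give the order $n^{-1/2}$, since finiteness of the Fisher information for the fixed model suffices. The gap is the explicit constant, which the proposition asserts.

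You plan to control the rotated laws through $\E_\pi|g|^2\lesssim\operatorname{tr}\Sigma^{-1}$. That bound is not proved, and it does not hold uniformly on $\Ccal$:
\begin{itemize}
\item The conditional-expectation representation is circular. For a full line the posterior of $Y$ moves only along the line, so $\E[|\Sigma^{-1}(x-Y)|^2\mid x]=|g(x)|^2+|\Sigma^{-1}a|^2/(a^\top\Sigma^{-1}a)$, and Jensen returns $\E_\pi|g|^2$ plus $O(1/\lambda)$.
\item The Fisher-information inequality for Gaussian convolutions needs a finite measure integrated over all of $\R^D$. Here you have a window-normalized infinite line.
\item (A1) only imposes $|p|\le\bar p$, so the crossing may lie off $U$. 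There the transverse offsets on $U$ are bounded below, and $\E_\pi|g|^2$ grows quadratically in the distance.
\end{itemize}
Even where an $O(1/\lambda)$ bound holds, it would give a constant of the form $\sqrt\lambda/(\cdots)$, not the displayed one. Contrary to your guess, the displayed constant does come from a score calculation.

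What is missing is a pointwise bound. A balanced full line has intensity exactly $w\,\ph_C(E^\top(x-p))$, with $w$ the common weight of its two arms and $C=E^\top\Sigma E\succeq\lambda I$. Its score is therefore $-EC^{-1}E^\top(x-p)$, of norm at most $|x-p|/\lambda$. The score of the positive sum of two lines is a convex combination of the two scores. Hence on $U$
\[
|\partial_t\log q_t(x)|\le|x-p|\,|g_t(x)|\le A_*:=(\sqrt DR_0+|p|)^2/\lambda .
\]
This holds for every $t$, not only at $t=0$, because each rotated model is again a balanced crossing with covariance $\succeq\lambda I$.

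From there the argument closes as follows:
\begin{itemize}
\item Centering by $\partial_t\log Z_t$ only lowers the $L^2(\pi_t)$ norm, so $\norm{\partial_t\sqrt{\pi_t}}_2\le A_*/2$.
\item The Hellinger distance between $\pi_0$ and $\pi_t$ is then at most $A_*|t|/2$, with no $o(\varepsilon^2)$ remainder as in your KL expansion.
\item The product affinity gives $\TV(\pi_0^{\otimes n},\pi_t^{\otimes n})\le\sqrt n\,A_*|t|/2$, so $t_n=1/(2A_*\sqrt n)$ keeps the total variation at most $1/4$.
\item All four directions move by the chord $2\sin(t_n/2)$, so $d_T\ge4\sin(t_n/2)\ge4t_n/\pi$. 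This needs $t_n$ small enough that the identity matching is optimal and the cap at one is inactive; your single-direction estimate $d_T\asymp\varepsilon$ loses this factor.
\item The two-point inequality then gives $3t_n/(4\pi)=3\lambda/\{8\pi(\sqrt DR_0+|p|)^2\sqrt n\}$.
\end{itemize}
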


At a balanced crossing the covariance cannot be estimated at all, and
the arms are estimated at exactly the rate $n^{-1/2}$. Finite-mixture
theory would lead one to expect the opposite ordering near such a
point: estimation slows where the parameter degenerates
\citep{ho16}. The target escapes because the degeneracy lies entirely in
the blur.

\subsection{An explicit example: four counts}\label{sec:fourcells}
The contrast between the two rates can be seen in closed form. Rays
along $e_1,-e_1,e_2,-e_2$ carry weights
$(1/4+\tau/2,\ 1/4-\tau/2,\ 1/4,\ 1/4)$ with $0\le\tau\le\bar\tau<1/2$,
under one Gaussian with unit variances and unknown correlation
$|\rho|\le r\le0.8$. The vertex, the axes and the variances are known,
and the only data are the photon counts in the four quadrants of
$[-3,3]^2$. At $\tau=0$ this is the example \eqref{eq:example}: the
correlation is invisible. The mean of the quadrant signs $(s_x,s_y)$ is
$\tau\,(a(\rho),b(\rho))$ for two explicit functions $a$ and $b$; its
angle identifies $\rho$ and its length identifies $\tau$, and inverting
the empirical sign mean gives estimates $\widehat\tau$ and
$\widehat\rho$ in closed form (Appendix~\ref{app:fourcells}).

\begin{theorem}[Four quadrants, two rates]\label{thm:fourcells}
For $n\ge1$ photons,
\begin{equation}\label{eq:fourrisk}
 \E|\widehat\tau-\tau|\le1.4\sqrt{2/n},\qquad
 \E\,d_T(\widehat S,S)\le1.4/\sqrt n,
\end{equation}
\begin{equation}\label{eq:rhorisk}
 \E|\widehat\rho-\rho|\le\min\bigl\{2r,\ 4.1\sqrt2/(\tau\sqrt n)\bigr\},
 \qquad\tau>0.
\end{equation}
The minimax rates are $n^{-1/2}$ for the imbalance and\hfill\break
$\min\{1,(a\sqrt n)^{-1}\}$ for the correlation on $a\le\tau\le\bar\tau$,
uniformly in $a\in(0,\bar\tau]$. At $\tau=0$ the minimax absolute
correlation risk is exactly $r$ for every $n$.
\end{theorem}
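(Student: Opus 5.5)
The plan is to write the four quadrant probabilities in closed form and reduce everything to the two-dimensional empirical sign mean. Along the ray $+e_1$ the blurred mass falls in the right half-plane, split between the upper and lower quadrants according to the conditional law of $y$ given $x$ under $\Sigma_\rho$. The window $[-3,3]^2$ truncates this, so the probabilities are finite integrals. Put $m=(\E s_x,\E s_y)$. The symmetric weights cancel and leave only the imbalance, so $m=\tau(a(\rho),b(\rho))$.

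By symmetry, $a(\rho)$ collects the left/right asymmetry of the $e_1$ pair. It is even in $\rho$, and $a(\rho)\ge a_0>0$ on $|\rho|\le0.8$. The function $b(\rho)$ is the vertical tilt that the correlation induces on the horizontal arm. It is odd and strictly monotone in $\rho$. The normalisation $Z$ enters all cell probabilities alike, so it cancels from $m$ up to a factor that I absorb into $a$ and $b$.

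The estimator inverts $\hat m$. Set $\widehat\tau=|\hat m|/c(\hat\rho)$, where $c(\rho)=|(a(\rho),b(\rho))|$, and let $\hat\rho$ solve $b(\rho)/a(\rho)=\hat m_y/\hat m_x$. Both are clipped to their admissible ranges.

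\emph{Rate for the imbalance.} The sign vector has coordinates in $\{\pm1\}$, so $\E|\hat m-m|^2\le 2/n$. If $c$ were constant the map $\hat m\mapsto\widehat\tau$ would be $1/c$-Lipschitz, and $1.4\ge 1/\min c$ would give \eqref{eq:fourrisk}. Since $c$ varies with $\rho$, I would define $\widehat\tau$ directly as $\inf_\rho|\hat m|/c(\rho)$ style, or more simply as $\hat m_x/a(\cdot)$ after bounding $a$ over its small range. The loss $d_T$ changes only the weights, by $\pm\tau/2$ on two arms, so $d_T\le|\widehat\tau-\tau|/\sqrt2$. This gives the second bound in \eqref{eq:fourrisk}.

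\emph{Rate for the correlation.} I would take the angle map $\psi(\rho)=\arg(a,b)$. It is monotone with $|\psi'|\ge\kappa$ on $[-r,r]$, so $|\hat\rho-\rho|\le|\arg\hat m-\arg m|/\kappa\le |\hat m-m|/(\kappa|m|)$, and $|m|\ge\tau c_{\min}$. The trivial bound $2r$ comes from clipping. I expect the numerical constants $1.4$ and $4.1$ to be the main obstacle. They require certified lower bounds on $a$, $c$ and $\kappa$ over $|\rho|\le 0.8$. I would obtain these by computing $a,b$ via bivariate normal orthant integrals and bounding derivatives with interval arithmetic.

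\emph{Lower bounds.} For the imbalance I would use two-point Le Cam with $\tau$ and $\tau+n^{-1/2}$. The Hellinger distance between multinomials is $O(n^{-1/2})$ because the cell probabilities are smooth in $\tau$ and bounded below. For the correlation I would take $\rho$ and $\rho+1/(a\sqrt n)$ at $\tau=a$. The cell-probability difference is $a\,|\partial_\rho(a,b)|\cdot\Delta\rho=O(n^{-1/2})$, so total variation stays bounded. When $a\sqrt n<1$ I would use a separation of order one instead. At $\tau=0$ the laws for $\rho=\pm r$ coincide by \eqref{eq:example}, and the lower bound $\max_{\pm}\E|\hat\rho\mp r|\ge r$ follows by averaging. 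The estimator $\hat\rho\equiv0$ attains $r$, so the risk there is exactly $r$.
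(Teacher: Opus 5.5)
Your architecture is the paper's. You reduce to the sign mean $\mu=\tau(a(\rho),b(\rho))$ and invert it. You combine $\E\norm{\widehat\mu-\mu}_2^2\le2/n$ with Jensen and $d_T=|\widehat\tau-\tau|/\sqrt2$. You get the lower bounds from two-point arguments and from the coincidence of all laws at $\tau=0$, where $\widehat\rho\equiv0$ attains $r$. The lower-bound part is fine as sketched. The two upper bounds, however, carry the constants $1.4$ and $4.1$, and the steps you give do not produce them.

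\textbf{The imbalance bound.} You leave the Lipschitz step for $\widehat\tau$ open. Your fallback $\inf_\rho|\widehat m|/c(\rho)=|\widehat m|/\max c$ is biased and should be dropped. In fact there is no choice to make. Put $s_*=-b/a$ and $\rho_v=s_*^{-1}(-v_2/v_1)$ on the wedge $|v_2|\le s_*(r)v_1$. There $v=\{v_1/a(\rho_v)\}\,(a(\rho_v),b(\rho_v))$, so for $v_1>0$ one has $|v|/c(\rho_v)=v_1/a(\rho_v)=:F(v)$.

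What makes $1.4$ work is differentiating the second form. With $z=-v_2/v_1$,
$\nabla F=\bigl(1/a+a'z/(a^2s_*'),\,a'/(a^2s_*')\bigr)$.
The angular part is tiny because $a$ is nearly flat. With certified $a>.731$, $|a'|<.003$, $s_*'>.350$ and $|s_*|<.293$ one gets $\norm{\nabla F}_2<1/.731+.017<1.4$.

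Your list of certified quantities ($a$, $c$, $\kappa$) misses exactly the derivative bound on $a$. A bound on the range of $a$ gives no Lipschitz estimate. Bounding the radial and angular terms separately through $c$, whose derivative is of order $.08$, overshoots $1.4$. You also need a convex domain to integrate the gradient. The paper projects $\widehat\mu$ onto the closed cone, which does not increase the distance to $\mu$. Clipping also works, but then continuity at $\widehat m_1=0$ and at the origin must be checked.

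\textbf{The correlation bound.} The step $|\arg\widehat m-\arg m|\le|\widehat m-m|/|m|$ is false. The foot of the perpendicular from $m$ to the line at angle $\Delta\theta$ lies at distance $|m|\sin\Delta\theta<|m|\Delta\theta$. The correct inequality has $\sin\Delta\theta$ on the left. Converting back through a uniform $\kappa\le|\psi'|$ and a uniform $c_{\min}$ then loses too much. With the certified bounds above, your chain gives about $4.2$ even with the false inequality, and about $4.5$ after correcting it, not $4.1$.

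The paper avoids the trigonometry by working with slopes. It uses the exact identity
$\mu_1\{s_*(\rho_v)-s_*(\rho)\}=(\mu_2-v_2)+s_*(\rho_v)(\mu_1-v_1)$.
In absolute value, its left side is $\sqrt{1+s_*(\rho_v)^2}$ times the distance from $\mu$ to the line through $v$. Together with $s_*'>.350$, $|s_*|<.293$ and $\mu_1\ge.731\tau$, it gives
$\tau|\rho_v-\rho|\le\sqrt{1+.293^2}\,\norm{v-\mu}_2/(.731\cdot.350)<4.1\,\norm{v-\mu}_2$.
This couples the slope sensitivity with $\mu_1$, instead of bounding $\psi'$ and $|m|$ separately.
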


Figure~\ref{fig:fourcells} shows both rates in 30{,}000 simulated count
vectors (six imbalances, five correlations, 200 replicates each, from
$10^2$ to $10^6$ photons). The imbalance error falls by a factor of 9.54
$[9.22,9.88]$ from $10^4$ to $10^6$ photons, close to the factor 10 that
$n^{-1/2}$ predicts, and it falls as fast at $\tau=0$ as anywhere else.
The correlation error does not move at $\tau\le10^{-4}$, where it stays
between 0.64 and 0.69 at every $n$. In a second set of 96{,}000 draws
with $\tau=h/\sqrt n$, the correlation error depends on $n$ and $\tau$
only through the local signal $h$: it is 0.67 to 0.69 at $h=0$ and 0.42
to 0.43 at $h=4$, from $10^3$ to $10^6$ photons. The 95\% confidence set
of Appendix~\ref{app:fourcells} gave the full correlation interval in
all 96{,}000 of these draws, as it should, while at $\tau=0.1$ its width
fell to 0.297 by $10^6$ photons.

\begin{figure*}[t]
\centering\includegraphics[width=\textwidth]{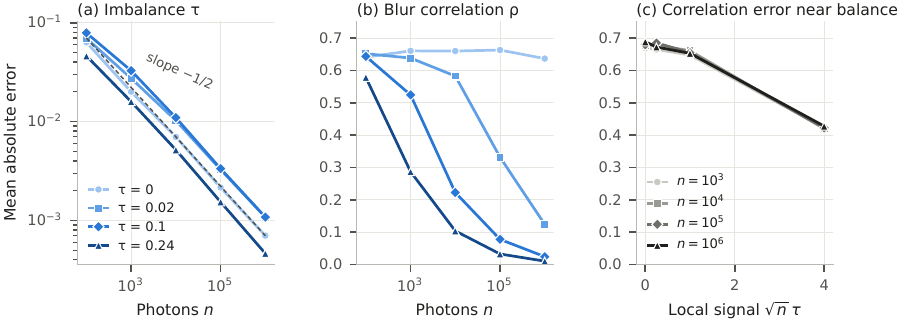}
\caption{Four quadrant counts of a calibrated crossing. (a) Mean
absolute error of the imbalance $\tau$; dashed: slope $-1/2$. (b) Mean
absolute error of the blur correlation $\rho$ at the same imbalances.
(c) Correlation error when $\tau=h/\sqrt n$, against the local signal
$h=\sqrt n\tau$: the curves for four sample sizes lie on top of each
other. Each point averages the specified correlations (five in a and b,
three in c) over 200 (a, b) or 2{,}000 (c) independent count vectors
per setting.}
\label{fig:fourcells}
\end{figure*}

\subsection{Counting arms}\label{sec:counting}
Theorem~\ref{thm:inverse} covers the count through its strata. Let
$\mathcal P_K$ be the image of the $K$-arm stratum in cell probabilities,
$a_*=\min_{P,i}P_i>0$ over all images, and
$\Delta=\min_{K\ne K'}\mathrm{dist}(\mathcal P_K,\mathcal P_{K'})$, which
is at least $C_{\rm pix}^{-1}$ when two strata are nonempty.

\begin{proposition}[Penalized count selection]\label{prop:count}
Let $\widehat K$ minimize, to within $\epsilon_n$ of the infimum, the
per-count negative log-likelihood $-\sum_i\widehat P_i\log P_i$ plus a
penalty $b_{K,n}$, over the strata. If
$\max_Kb_{K,n}-\min_Kb_{K,n}+\epsilon_n<\Delta^2/4$, then uniformly over
$\Ccal$,
\[
 \Pr(\widehat K\ne K)\le
 \exp\bigl[-n\{a_*\Delta/4-n^{-1/2}\}_+^2\bigr],
\]
and $\E\,d_T\le C_{\rm pix}\{2/(a_*\sqrt n)+\sqrt{2\epsilon_n}\}
+\Pr(\widehat K\ne K)$. Penalties $d_K\log n/(2n)$ and
$\epsilon_n=O(1/n)$ keep the rate $n^{-1/2}$.
\end{proposition}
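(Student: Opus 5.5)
The plan is to compare the penalized criterion on the true stratum with its value on any wrong stratum, and to reduce the event $\widehat K\ne K$ to a large deviation of the empirical frequencies $\widehat P$. Write $\ell_n(P)=-\sum_i\widehat P_i\log P_i$. For any candidate $Q$ we have
\[
\ell_n(Q)-\ell_n(P)=\KL(\widehat P\Vert Q)-\KL(\widehat P\Vert P),
\]
since the entropy of $\widehat P$ cancels. Let $P=P_{\vartheta,\eta}\in\mathcal P_K$ be the truth. Suppose $\widehat K=K'\ne K$. Then some $Q\in\mathcal P_{K'}$ satisfies
\[
\KL(\widehat P\Vert Q)+b_{K',n}\le \KL(\widehat P\Vert P)+b_{K,n}+\epsilon_n .
\]
By Pinsker's inequality and $\norm{\cdot}_2\le\norm{\cdot}_1$, the left side is at least $\tfrac12\norm{\widehat P-Q}_2^2$. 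For the right side we use the $\chi^2$ bound $\KL(\widehat P\Vert P)\le\sum_i(\widehat P_i-P_i)^2/P_i\le\norm{\widehat P-P}_2^2/a_*$.

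Put $\xi=\norm{\widehat P-P}_2$. The triangle inequality gives $\norm{\widehat P-Q}_2\ge\Delta-\xi$. The event $\widehat K\ne K$ then forces
\[
\tfrac12(\Delta-\xi)_+^2\le \xi^2/a_*+(\max b-\min b+\epsilon_n)<\xi^2/a_*+\Delta^2/4 .
\]
A short algebraic step should show that this requires $\xi\ge c\,a_*\Delta$ for an absolute $c$. I will arrange the constants so that the threshold is $a_*\Delta/4$, using $a_*\le1$. This is the step I expect to fight with: getting exactly $a_*\Delta/4$ may need the sharper bound $\KL\le\xi^2/a_*$ with its own constant. If that fails, I will fall back on a slightly different constant.

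Next comes the concentration of $\xi$. We have $\E\xi\le(\E\xi^2)^{1/2}\le n^{-1/2}$, and $\xi$ is a function of $n$ independent draws with bounded differences $\sqrt2/n$ (moving one point changes two cells by $1/n$). McDiarmid's inequality then gives
\[
\Pr(\xi\ge n^{-1/2}+s)\le\exp(-ns^2).
\]
Setting $s=\{a_*\Delta/4-n^{-1/2}\}_+$ yields the stated exponential bound. Uniformity over $\Ccal$ is automatic, because $a_*$ and $\Delta$ are class-wide and the bound on $\Delta\ge C_{\rm pix}^{-1}$ follows from \eqref{eq:bininverse}, since $d_T=1$ across strata.

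For the risk, on the event $\widehat K=K$ the estimate $\widehat Q\in\mathcal P_K$ has criterion value within $\epsilon_n$ of $P$ (the penalties are equal within the stratum). Therefore
\[
\tfrac12\norm{\widehat P-\widehat Q}_2^2\le\xi^2/a_*+\epsilon_n,
\]
so $\norm{P-\widehat Q}_2\le\xi+\sqrt{2\xi^2/a_*+2\epsilon_n}\le 2\xi/a_*+\sqrt{2\epsilon_n}$ after crude bounds using $a_*\le1$. Applying \eqref{eq:bininverse}, taking expectations with $\E\xi\le n^{-1/2}$, and adding $d_T\le1$ on the complement gives the stated risk bound.

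Finally, with penalties $d_K\log n/(2n)$ the penalty spread is $O(\log n/n)$, which is below $\Delta^2/4$ for large $n$. In that regime $\Pr(\widehat K\ne K)$ decays exponentially and $\sqrt{2\epsilon_n}=O(n^{-1/2})$, so the rate $n^{-1/2}$ is kept.
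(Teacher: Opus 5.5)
Your route is valid but differs from the paper's in the deterministic step. The concentration step (bounded differences $\sqrt2/n$, $\E\xi\le n^{-1/2}$) is the same in both.

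The paper expands around the population law rather than around $\widehat P$:
$\ell_n(Q)-\ell_n(P_0)=\KL(P_0\Vert Q)-\langle\widehat P-P_0,\log Q-\log P_0\rangle\ge\frac12d^2-ed/a_*$,
with $e=\norm{\widehat P-P_0}_2$ and $d=\norm{Q-P_0}_2$. This uses Pinsker and the fact that $\log$ is $1/a_*$-Lipschitz on $[a_*,1]$. On $\{e\le a_*\Delta/4\}$, a wrong stratum ($d\ge\Delta$) gives $\frac12d^2-d\Delta/4\ge d^2/4\ge\Delta^2/4$. The right stratum gives $d^2/2\le ed/a_*+\epsilon_n$, so $d\le2e/a_*+\sqrt{2\epsilon_n}$. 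The constants $a_*\Delta/4$ and $2/a_*$ thus appear with no condition on $a_*$.

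In your version, the step you flagged does not close with $a_*\le1$ alone. At $a_*=1$ and $\xi=\Delta/4$ you get $\frac12(\Delta-\xi)^2=\frac{9}{32}\Delta^2<\frac{5}{16}\Delta^2=\xi^2/a_*+\Delta^2/4$, so the wrong-count event is not excluded. The crude bound $1+\sqrt{2/a_*}\le2/a_*$ in your risk step also fails for $a_*>3-\sqrt5$.

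Both are rescued by $a_*\le\min_iP_i\le1/M\le1/2$. This holds because $\Delta>0$ forces at least two cells: with one cell all images coincide. Then $\frac12(1-a_*/4)^2\ge a_*/16+1/4$. The left side of your inequality decreases in $\xi$ and the right side increases, so any $\xi<a_*\Delta/4$ contradicts the event. The same bound on $a_*$ gives $1+\sqrt{2/a_*}\le2/a_*$.

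With that line added, your proof is complete, and it is slightly sharper than the paper's. The fluctuation at the truth enters only through the quadratic term $\xi^2/a_*$, and the cross term is handled by a triangle inequality with coefficient one rather than $1/a_*$. So your argument yields a count threshold of order $\sqrt{a_*}\,\Delta$ and a risk constant $1+\sqrt{2/a_*}$, in place of $a_*\Delta/4$ and $2/a_*$. The paper's linearization is shorter and free of side conditions. Yours trades a triangle inequality and a trivial side condition on $a_*$ for a better dependence on the cell floor.
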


No Fisher-information condition enters, so the singular points of the
blur do not disturb the count (Appendix~\ref{app:selection}). The margin
$\Delta$ can still be tiny. A planar two-arm model and a three-arm model,
with arm weights at least $0.04$ and one centered Gaussian each, satisfy
$\chi^2(\pi_2\Vert\pi_3)<1.556851\times10^{-8}$ on $[-3,3]^2$, with the
integration error included (Appendix~\ref{app:resolution}). From $10^6$
exact positions, every estimator therefore names the wrong count with
probability above $0.46868$ under one of the two models, and error below
$0.05$ under both needs more than 104 million positions.

A second limit concerns faint arms. Take an admissible three-arm density
$p_3$ and the window response $r_4$ of a fourth arm with the same vertex
and blur. The law $(1-\varepsilon)p_3+\varepsilon r_4$ is a three-arm
junction with an $\varepsilon$ share of structured clutter, and it is
also a four-arm junction whose fourth arm holds an $\varepsilon$ share of
the window. No observation confined to the window separates the two
descriptions at any sample size. Points beyond the window can, when a
true arm continues; Section~\ref{sec:faint} tests exactly this.

\section{Background, misfit and curved arms}\label{sec:beyond}

\subsection{An unknown background and the shape of the window}
Measured windows contain points that belong to no arm. Replace the
intensity by $q^\kappa_{\vartheta,\eta}=\kappa+q_{\vartheta,\eta}$ with an
unknown constant $\kappa\in[0,\kappa_{\max}]$, observed through
$\pi^\kappa=q^\kappa/(\kappa|U|+Z)$.

\begin{proposition}[Unknown constant background]\label{prop:background}
Theorems~\ref{thm:inverse} and~\ref{thm:statistics} and
Proposition~\ref{prop:count} hold on $\Ccal\times[0,\kappa_{\max}]$, with
the score $\nabla\log q^\kappa$, the cell law of $\pi^\kappa$ and new
finite constants. The background level $\kappa$ is identified.
\end{proposition}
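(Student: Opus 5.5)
The plan is to rerun the proof of Theorem~\ref{thm:inverse} on the enlarged class $\Ccal^\kappa=\Ccal\times[0,\kappa_{\max}]$, which is again compact and semialgebraic, and then inherit Theorem~\ref{thm:statistics} and Proposition~\ref{prop:count} unchanged. Those two results used only three facts: the cell inequality \eqref{eq:bininverse}, the bound $\E\norm{\widehat P-P}_2^2\le1/n$, and a positive floor $a_*$ on the cell probabilities. The second fact holds for any multinomial. The third holds because $\pi^\kappa\ge\pi\cdot Z/(\kappa_{\max}|U|+Z)$ and $Z$ is bounded above and below on $\Ccal$. The lower bound in Theorem~\ref{thm:statistics} needs nothing new, since the submodel $\kappa=0$ is included. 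So everything reduces to the stability inequalities for the pair $F=(g^\kappa \text{ or } P^\kappa)$ and $T=S(\vartheta)$, with $\kappa$ as an extra nuisance coordinate. We then also check that $\kappa$ itself can be added to the target.

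\emph{Step 1 (identification).} Suppose two models have equal $\nabla\log q^\kappa$ on an open set. Analyticity then gives $q^{\kappa}_{\vartheta,\eta}=c\,q^{\kappa'}_{\vartheta',\eta'}$ on all of $\R^D$ for some $c>0$. Far from the vertex and off every ray, each $q$ decays like $e^{-c'R^2}$, so comparing values there gives $\kappa=c\kappa'$. Far along an arm of $\vartheta$, the left side tends to $\kappa+w f(y)$. By \eqref{eq:mixtureprofile}, the right side must tend to $c\kappa'+cw'f'(y)$ along the same line, and otherwise it decays to the constant $c\kappa'$. So the arm sets coincide, with $w=cw'$, and the profile means give the same supporting lines. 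Normalising $\sum w_j=1$ forces $c=1$. Hence $\kappa=\kappa'$ and the signatures agree. This is the identification of $\kappa$ and the hypothesis ``$F(z)=F(z')\Rightarrow T(z)=T(z')$'' of Proposition~\ref{prop:ordercriterion}, with $T=(S,\kappa)$. Permutation ambiguity is handled stratum by stratum, as in Theorem~\ref{thm:inverse}.

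\emph{Step 2 (orders along arcs).} Take analytic arcs starting at a pair with equal observations, and expand $q^{\kappa(t)}-c(t)q^{\kappa'(t)}$ in $t$. Two kinds of lowest-order coefficient can appear. One is a constant $\kappa$-difference, visible in the far field away from every arm. The other is the arm terms of the proof of Theorem~\ref{thm:inverse}, visible along arms. The constant adds the same $y$-independent term to every far profile. It therefore cannot cancel the growing term $R\,h^\top\nabla f$ from a first-order direction change. Nor can it mimic a weight or offset change, because those produce integrable functions of $y$, and a nonzero constant is not integrable. So the order inequality $\ord(T\text{-difference})\ge\ord(F\text{-difference})$ survives, and Proposition~\ref{prop:ordercriterion} yields \eqref{eq:inverse} with a new $C_g$. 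The cell version \eqref{eq:bininverse} then follows exactly as before, by a derivative bound on the analytic family and averaging over sufficiently fine cubes.

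\emph{Main obstacle.} The delicate point is the normalisation. Observing $\pi^\kappa$ or its score only reveals $q^\kappa$ up to the unknown factor $c(t)$, and along an arc this factor mixes a $\kappa$-change with a uniform rescaling of all arm weights. I would handle it by expanding the identity $\log q^\kappa-\log q^{\kappa'}=\log c$ order by order. At the first order where the arcs differ, I would separate the far-from-arms region, which isolates $\kappa-c\kappa'$, from the along-arm regions, which isolate $w_j-cw'_j$ plus the direction term. The constraint $\sum_j w_j=1$ then ties these coefficients together so that no combination vanishes unless all of them do.

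A secondary issue is that on the window $U$ the far field is only available through analytic continuation of the score. The argument in Theorem~\ref{thm:inverse} already relies on this continuation, so it carries over verbatim.
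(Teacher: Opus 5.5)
Your proposal is correct and follows the paper's route. You isolate the background by evaluating the intensity identity, and along arcs its Taylor coefficients, in a direction that carries no arm of either model. This gives $\kappa'=c\kappa$ and $q'-cq=O(t^m)$ on $\R^D$, after which Lemma~\ref{lem:identification}, the rigidity argument, the order criterion and the cell step run unchanged, and Theorem~\ref{thm:statistics} and Proposition~\ref{prop:count} are inherited through the cell inverse, the positive cell floor and the $\kappa=0$ one-ray submodel. Your extra checks are valid but not needed once the constant is isolated off the arms: a constant cannot cancel the $R$-growing direction term, and it cannot be absorbed by the integrable weight and offset terms. Adding $\kappa$ to the target is a harmless strengthening.
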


The background is removed at spatial infinity along a direction that
carries no arm, and the rest of the argument runs unchanged
(Appendix~\ref{app:background}). A background that varies in space, even
an affine one, needs further positivity and growth arguments and is not
covered. The score inverse uses only a bounded window with nonempty
interior, so it holds on the disk that the fit of
Section~\ref{sec:method} observes; the cell inverse carries over to a
fixed grid of squares clipped by the disk, with the smallest clipped cell
in place of the cube volume (Appendix~\ref{app:background}).

\subsection{Misfit and contamination}
Measured junctions are not exactly rays in a Gaussian mixture. The inverse
turns every modeling error into a geometric error of the same order.

\begin{proposition}[Misfit]\label{prop:mismatch}
Let the true cell law $P_*$ satisfy $\norm{P_*-P_0}_2\le b$ for a model in
$\Ccal$ with signature $S_0$. A fit whose residual
$\norm{P_{\widehat\xi}-\widehat P}_2$ is within $\varepsilon_{\rm opt}$
of its minimum over $\Ccal$ satisfies
\begin{equation}\label{eq:mismatch}
 d_T(\widehat S,S_0)\le C_{\rm pix}\bigl\{2\norm{\widehat P-P_*}_2+2b
 +\varepsilon_{\rm opt}\bigr\}.
\end{equation}
Under contamination, $P_*=(1-\varepsilon)P_0+\varepsilon Q$ with arbitrary
$Q$, the minimax risk is of order $n^{-1/2}+\varepsilon$, and both terms
are necessary.
\end{proposition}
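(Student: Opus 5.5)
The deterministic bound \eqref{eq:mismatch} will follow from the cell inverse \eqref{eq:bininverse} and two triangle inequalities; the contamination statement then needs an upper bound from this inequality and a two-point lower bound for each of the two terms.

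\emph{Deterministic bound.} Let $\xi_0\in\Ccal$ be the model with cell law $P_0$, and let $\widehat\xi$ be the near-minimizing fit. Since $\xi_0$ is admissible, the fit satisfies
\[
\norm{P_{\widehat\xi}-\widehat P}_2\le\norm{P_0-\widehat P}_2+\varepsilon_{\rm opt}\le\norm{P_*-\widehat P}_2+b+\varepsilon_{\rm opt}.
\]
The triangle inequality then gives
\[
\norm{P_{\widehat\xi}-P_0}_2\le\norm{P_{\widehat\xi}-\widehat P}_2+\norm{\widehat P-P_*}_2+\norm{P_*-P_0}_2\le2\norm{\widehat P-P_*}_2+2b+\varepsilon_{\rm opt}.
\]
Both $\widehat\xi$ and $\xi_0$ lie in $\Ccal$, so \eqref{eq:bininverse} applies to the pair and turns this into \eqref{eq:mismatch}. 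No property of $P_*$ is used beyond its distance $b$ to the model. If an exact minimizer is not attained, I would take a measurable $\varepsilon_{\rm opt}$-minimizer; compactness of $\Ccal$ and continuity of $\xi\mapsto P_\xi$ guarantee one exists.

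\emph{Contamination upper bound.} Under $P_*=(1-\varepsilon)P_0+\varepsilon Q$,
\[
b=\norm{P_*-P_0}_2=\varepsilon\norm{Q-P_0}_2\le\varepsilon\sqrt2,
\]
since both are probability vectors. With $\widehat P$ the empirical frequencies of $n$ counts drawn from $P_*$, we have $\E\norm{\widehat P-P_*}_2\le n^{-1/2}$. Taking expectations in \eqref{eq:mismatch} with $\varepsilon_{\rm opt}=1/n$ gives risk at most $C_{\rm pix}(2n^{-1/2}+2\sqrt2\,\varepsilon+n^{-1})$, uniformly in $Q$ and over $\Ccal$.

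\emph{Contamination lower bounds.} The $n^{-1/2}$ term is necessary by the lower bound of Theorem~\ref{thm:statistics}, which is the case $\varepsilon=0$. For the $\varepsilon$ term I would use a modulus-of-continuity construction. Take two models $\xi_0,\xi_1\in\Ccal$ with the same count, and let $\xi_1$ be a single ray rotated by a small angle from $\xi_0$, so that
\[
d_T(\xi_0,\xi_1)\ge c_1\TV(P_{\xi_0},P_{\xi_1}),\qquad\TV(P_{\xi_0},P_{\xi_1})=\varepsilon'\le\varepsilon.
\]
This is possible because the family is analytic and the first-order change in cell law under rotation is nonzero. Whenever $\TV(P_0,P_1)\le\varepsilon/(1-\varepsilon)$, there exist $Q_0,Q_1$ with
\[
(1-\varepsilon)P_0+\varepsilon Q_0=(1-\varepsilon)P_1+\varepsilon Q_1.
\]
Explicitly, write $P_0-P_1=(P_0-P_1)_+-(P_0-P_1)_-$, put each part into the opposite contamination, and fill up with common mass. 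The two contaminated models then produce identical data laws. Hence every estimator has risk at least $d_T(\xi_0,\xi_1)/2\gtrsim\varepsilon$ under one of them, for every $n$.

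\emph{Main obstacle.} I expect the delicate step to be ensuring that the two lower-bound alternatives have geometric distance comparable to $\varepsilon$, rather than $o(\varepsilon)$, for all small $\varepsilon$. That requires $\TV(P_{\xi_t},P_{\xi_0})\asymp t$ along the rotating-ray path. I would verify the upper side $\lesssim t$ from the derivative bound already used for \eqref{eq:bininverse}. For the lower side, $d_T\ge c\,t$ holds trivially and only the upper side on TV is actually needed. For $\varepsilon$ bounded below, the trivial bound $d_T\le1$ covers the remaining range.
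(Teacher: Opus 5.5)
Your proposal is correct and follows the paper's proof essentially step for step. Both use the same two triangle inequalities feeding the cell inverse \eqref{eq:bininverse}, the bound $b\le\sqrt2\,\varepsilon$ together with $\E\norm{\widehat P-P_*}_2\le n^{-1/2}$ for the upper rate, and the clean lower bound of Theorem~\ref{thm:statistics} for the $n^{-1/2}$ term. For the $\varepsilon$ term, both take a rotated one-ray pair with $t=c\varepsilon$ and $\TV\le Ct\le\varepsilon/(1-\varepsilon)$ and merge it into a single $\varepsilon$-contaminated law.

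There are two small differences. The paper builds the common law from position densities via $(1-\varepsilon)\max(p_0,p_t)$, so its $\Omega(\varepsilon)$ bound holds even for estimators given exact positions; you build it on cell laws. Also, as you note yourself, what fixes the geometric separation at order $\varepsilon$ is $d_T\ge ct$ combined with $\TV\le Ct$. Your first formulation, $d_T\ge c_1\TV$ with an unspecified $\varepsilon'\le\varepsilon$, does not do this on its own.
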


A kernel within $L^1$ distance $\eta$ of the class contributes
$b\le2\operatorname{diam}(U)\eta/Z$, and arms cut at length $\ell$
contribute a Gaussian tail in $\ell$ (Appendix~\ref{app:boundaries}). A
solver's termination flag does not bound $\varepsilon_{\rm opt}$, and
Section~\ref{sec:method} measures that term instead of assuming it away.

\subsection{Curved arms}\label{sec:curved}
Filaments bend. Suppose arm $j$ follows an arc-length curve
$\gamma_j:[0,\ell]\to\R^D$ with $\gamma_j(0)=p$ and $\gamma_j'(0)=a_j$,
emits with density $h_j(r)$ along its length, and deviates from its
tangent ray at a H\"older rate:
\begin{equation}\label{eq:holder}
 \begin{aligned}
 |\gamma_j'(r)-a_j|&\le\varepsilon r^\beta,\\
 |h_j(r)-1|&\le\varepsilon r^\beta,\\
 |\gamma_j(r)-p|&\ge c_0r,\qquad 0\le r\le\ell,
 \end{aligned}
\end{equation}
with $0<\beta\le1$ and $0<c_0\le1$. Lengths are measured in the units of
the blur. If the physical curve satisfies the first bound with constant
$M$ and the blur has scale $\sigma$, then $\varepsilon=M\sigma^\beta$:
the relevant curvature is the curvature seen across one blur width.

\begin{proposition}[Curved arms and distant structure]\label{prop:curvedarms}
Let $\vartheta$ be a junction in $\Ccal$ with arms as in \eqref{eq:holder},
and let $\nu$ be a positive measure of mass at most $m$ supported at
distance at least $L$ from $U$. Put
$\mu=\sum_jw_j(\gamma_j)_\#\bigl(h_j(r)\dd r\bigr)+\nu$ and
$\rho=\sqrt DR_0+\bar p+B_0$. There is $C<\infty$, depending only on the
class, $\beta$ and $c_0$, such that for every admissible blur and
$0<\varepsilon\le1$ the window density $\pi_\mu$ of $k_\eta*\mu$
satisfies
\[
 \begin{aligned}
 \norm{\pi_\mu-\pi_{\vartheta,\eta}}_{L^1(U)}
 &\le C\Bigl\{\varepsilon+e^{-(\ell-\rho)_+^2/(2\Lambda)}\\
 &\qquad +m\,e^{-(L-B_0)_+^2/(2\Lambda)}\Bigr\}.
 \end{aligned}
\]
The same bound holds for the spatial scores on $U$ once its right side is
below a threshold that depends only on the class. With
Proposition~\ref{prop:mismatch}, the minimum-distance estimator recovers
the tangent signature $S(\vartheta)$ with expected error
$O(n^{-1/2}+\varepsilon)$ plus the two exponential terms.
\end{proposition}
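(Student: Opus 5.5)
We compare the unnormalized intensities $q_\mu=k_\eta*\mu$ and $q=q_{\vartheta,\eta}=k_\eta*\mu_\vartheta$ on $U$ first, and pass to the normalized densities at the end. Since $\|\pi_\mu-\pi\|_{L^1(U)}\le 2\|q_\mu-q\|_{L^1(U)}/Z$ and $Z$ is bounded below uniformly over the compact class $\Ccal$, it is enough to bound $\|q_\mu-q\|_{L^1(U)}$. We split $q_\mu-q$ into three pieces, one per term of the bound:
\begin{itemize}[leftmargin=*]
\item a near part, from each arm restricted to $r\le r_0$ with $r_0=\rho+c$ and a class constant $c$;
\item a truncation part, from the ray $\{p+ra_j:r>\ell\}$, which the straight model has and the curved arm lacks;
\item a distant part, $k_\eta*\nu$.
\end{itemize}

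The distant part is immediate. For $x\in U$ and $y\in\operatorname{supp}\nu$, each component of the kernel evaluated at $x-y-b_\ell$ is at most $(2\pi\lambda)^{-D/2}e^{-(L-B_0)_+^2/(2\Lambda)}$. Integrating over $U$ gives $C m\,e^{-(L-B_0)_+^2/(2\Lambda)}$.

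The truncation part is also direct. Every point $p+ra_j$ with $r>\ell$ lies at distance at least $r-\rho+B_0$ from $x-b_\ell$. A one-dimensional Gaussian tail integral in $r$ then gives $Ce^{-(\ell-\rho)_+^2/(2\Lambda)}$. The curved arm beyond $r_0$ is handled the same way. Because $|\gamma_j(r)-p|\ge c_0r$, its contribution on $U$ is a Gaussian tail $Ce^{-c(c_0r_0-\rho)_+^2}$, which is $O(\varepsilon)$ once $r_0$ is chosen of order $\rho/c_0+\sqrt{\log(1/\varepsilon)}$. That choice makes the constants depend on $c_0$. Alternatively, the far piece of the curved arm can be compared with the far ray in exactly the way the near part is treated below, using the linear growth and the weight $r^\beta$, and summed with Gaussian tails.

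The near part is the main term. For $r\le r_0$ we have $|\gamma_j(r)-(p+ra_j)|\le\varepsilon r^{1+\beta}/(1+\beta)$ and $|h_j(r)-1|\le\varepsilon r^\beta$. The kernel is Lipschitz uniformly over the class, so
\[
\Bigl|\int_0^{r_0}\bigl\{k(x-\gamma_j(r))h_j(r)-k(x-p-ra_j)\bigr\}\dd r\Bigr|\le C\varepsilon\int_0^{r_0}(r^{1+\beta}+r^\beta)\,G(x,r)\dd r,
\]
where $G$ is a Gaussian envelope in $|x-p-ra_j|$. Integrating over $U$ gives $C\varepsilon\int_0^\infty(1+r)^2e^{-c(r-\rho)_+^2}\dd r=C\varepsilon$.

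The main obstacle is getting $\varepsilon$ rather than $\varepsilon\log(1/\varepsilon)$ uniformly. The fix is to never truncate the curved arm by hand. Instead, split at each $r$ according to whether $\varepsilon r^{1+\beta}\le r$, and in both regimes use the Gaussian envelope $e^{-c(c_0r-\rho)_+^2}$ coming from the separation bound. This keeps every integral finite with constants depending only on the class, $\beta$ and $c_0$.

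For the scores we write $\nabla\log q_\mu-\nabla\log q=(\nabla q_\mu-\nabla q)/q_\mu+\nabla q\,(q-q_\mu)/(q_\mu q)$. The same three estimates apply verbatim to $\nabla k_\eta$, since Gaussian derivatives have Gaussian envelopes, and in the sup norm as well as in $L^1$. The intensity $q$ is bounded below on $U$ uniformly over $\Ccal$, by compactness and positivity. So once the right side is below half that lower bound, $q_\mu\ge q/2$ and the score difference is bounded by a constant times the right side. This is the stated threshold.

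For the estimator, cell averaging gives $\|P_\mu-P_{\vartheta,\eta}\|_2\le\|\pi_\mu-\pi_{\vartheta,\eta}\|_{L^1}$. Hence the hypothesis of Proposition~\ref{prop:mismatch} holds with $b$ equal to the bound above. Combining it with $\E\|\widehat P-P_*\|_2\le n^{-1/2}$ yields expected tangent-signature error $C_{\rm pix}\{2n^{-1/2}+2b+\varepsilon_{\rm opt}\}$, which is the claimed $O(n^{-1/2}+\varepsilon)$ plus the two exponential terms.
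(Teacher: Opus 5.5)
Your proposal follows the paper's proof. It uses the same decomposition: a near part handled by the mean-value theorem, with displacement $\varepsilon r^{1+\beta}/(1+\beta)$ and density error $\varepsilon r^\beta$; a far part of the curved arm bounded through $|\gamma_j(r)-p|\ge c_0r$ beyond a cut $r_\varepsilon\asymp\varepsilon^{-1/\beta}$, so that its Gaussian tail is $O(\varepsilon)$ with no logarithm; the tangent ray beyond $\ell$; and $k_\eta*\nu\le C m\,e^{-(L-B_0)_+^2/(2\Lambda)}$. These are followed by normalization, the score identity and Proposition~\ref{prop:mismatch}.

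Two points need tightening. First, discard the fixed cut $r_0=\rho+c$, since the tails beyond a fixed radius are not $O(\varepsilon)$. Second, the Gaussian envelope in the near part does not come from a uniform Lipschitz constant or from the $c_0r$ bound. It comes from applying the mean-value theorem on the segment joining $x-p-ra_j$ and $x-\gamma_j(r)$. That segment stays at distance at least $r/2-\sqrt D R_0-\bar p$ from the origin, because $|\gamma_j(r)-p-ra_j|\le r/2$ whenever $\varepsilon r^\beta\le(1+\beta)/2$.
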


The tangents of a curved junction are thus recovered up to a bias of
order $M\sigma^\beta$, and nearby filaments matter only through a
Gaussian tail in their distance (Appendix~\ref{app:curvedproof}). The exponent $\beta$ is attained in
practice. In a numerical check on 72 curved junctions in three dimensions, with two to
four arms, anisotropic Gaussian blur, tangent angles that drift as
$cr^\beta$ and densities that drift as $1+dr^\beta$, we computed the
score of each curved junction, by quadrature along its arms, and the exact
score of its tangent rays at 223 points
within 2.25 blur widths of the vertex, for six blur scales
(Appendix~\ref{app:curvecheck}). The median log--log slope of the score
difference over the four smallest scales is 0.50, 0.75 and 1.00 for
$\beta=0.5$, $0.75$ and $1$, 24 junctions each
(Figure~\ref{fig:curves}).

\begin{figure}[t]
\centering\includegraphics[width=\columnwidth]{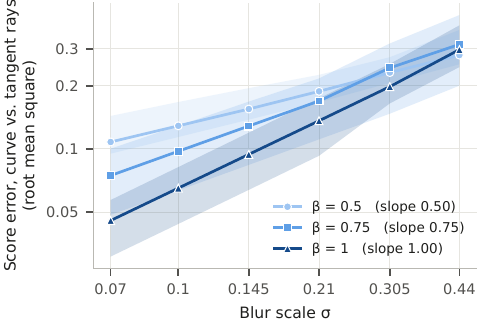}
\caption{Curved arms against their tangent rays: root-mean-square
difference of the spatial scores on 223 points within 2.25 blur widths
of the vertex, against the blur scale, for 24 random three-dimensional
junctions per H\"older exponent $\beta$. Lines: medians; bands:
interquartile ranges; slopes: medians of the per-junction slopes over the
four smallest scales. Proposition~\ref{prop:curvedarms} predicts slope
$\beta$.}
\label{fig:curves}
\end{figure}

At a fixed blur, curvature is a genuine obstruction rather than a
nuisance that more photons remove.

\begin{proposition}[A fixed-blur limit for curves]\label{prop:curves}
For $0<\beta\le1$ there is a bounded class of planar $C^{1,\beta}$
single branches, with unit density, unit initial tangent, bounded
$C^{1,\beta}$ norm and $|\gamma(r)|\ge3r/4$, on which every estimator
from $n$ window-conditioned exact positions, even with known vertex and
known Gaussian covariance, has worst-case expected tangent error at
least $c\,n^{-\beta/(2\beta+4)}$.
\end{proposition}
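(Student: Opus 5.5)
We prove the lower bound with a two-point (Le Cam) construction in the spirit of nonparametric tangent estimation. The tangent at the vertex is the derivative of the curve at $r=0$. A $C^{1,\beta}$ perturbation that tilts the tangent by $\theta$ must stay within a small distance of the original curve outside a short segment of length $h$. The blur makes such a local change almost invisible: the observable change is integrated over a region of size about one blur width. The exponent $\beta/(2\beta+4)$ should come from balancing $\theta\asymp h^\beta$ (the H\"older budget) against a $\chi^2$ or Hellinger distance of order $h^{2\beta+4}$ per point. Requiring $n h^{2\beta+4}\lesssim 1$ then gives $\theta\asymp n^{-\beta/(2\beta+4)}$.

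The construction is as follows. Take the base branch $\gamma_0(r)=(r,0)$ for $0\le r\le\ell$, with unit density, known vertex $0$ and known covariance $\Sigma=I$. Fix a smooth bump $\psi$ supported in $[0,1]$ with $\psi(0)=0$, $\psi'(0)=1$ and $\int\psi=0$. We also want as many further vanishing moments as needed; see the last paragraph. Set $\gamma_h(r)=(r,\ h^{1+\beta}\psi(r/h))$ reparametrized by arc length, and let $\gamma_{-h}$ be its mirror image. The initial tangent of $\gamma_{\pm h}$ is $(1,\pm h^\beta)$ up to normalization. The tangent gap is therefore of order $h^\beta$, and the $C^{1,\beta}$ seminorm of $\gamma_h'$ is bounded uniformly in $h$. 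The bound $|\gamma(r)|\ge3r/4$ holds for small $h$. The hypothesis "unit initial tangent" should be read as unit-length tangent vectors, as in the class of the statement. The perturbation is then rotated slightly. We also keep the density at exactly one per unit arc length. The small arc-length change near the vertex is absorbed by letting the two branches agree beyond $r=h$, up to an $O(h^{2+2\beta})$ length correction that we place far out in the window, or by choosing $\psi$ so that the arc length is preserved to the needed order.

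The main step, and the one I expect to be hardest, is bounding $\|\pi_{h}-\pi_{-h}\|$ in $\chi^2$ or Hellinger distance. The intensity difference is
\[
 q_h-q_{-h}=\int_0^h\bigl\{\ph(x-\gamma_h(r))-\ph(x-\gamma_{-h}(r))\bigr\}\dd r .
\]
A first-order Taylor expansion of $\ph$ in the normal displacement $\pm h^{1+\beta}\psi(r/h)$ gives
\[
 2h^{1+\beta}\int_0^h\psi(r/h)\,\partial_2\ph(x-(r,0))\dd r .
\]
Expanding $\partial_2\ph(x-(r,0))$ in $r$ about $0$ and using the vanishing moments of $\psi$ lowers this further. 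With $\int\psi=0$ the leading term is $h^{1+\beta}\cdot h\cdot h\cdot\partial_1\partial_2\ph$, of order $h^{3+\beta}$. Its $L^2$ norm against the bounded-below window density $\pi$ then gives $\chi^2\lesssim h^{6+2\beta}$. This is better than needed, so the moment conditions must be tuned to land exactly at $h^{2\beta+4}$, which corresponds to an amplitude of order $h^{2+\beta}$. The tuning is to impose $\int\psi=0$ but not the first moment; the order-$h$ term from expanding $\ph$ then gives exactly $h^{2+\beta}$. The second-order Taylor term in the displacement is $O(h^{2+2\beta}\cdot h)$, which is smaller. The normalization $Z$ differs between $\pi_h$ and $\pi_{-h}$ only at this order or higher, because the total length is matched.

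To conclude, take $h=n^{-1/(2\beta+4)}$. Then $n\chi^2(\pi_h\Vert\pi_{-h})\le c_1$, and the $n$-point total variation stays below $1/2$. Le Cam's two-point lemma then gives a worst-case expected tangent error of at least $c\,h^\beta=c\,n^{-\beta/(2\beta+4)}$. If the moment bookkeeping in the previous step instead yields a different power $h^{2\beta+2k}$, we adjust the number of vanishing moments of $\psi$ so that the exponent matches the one claimed in the statement.
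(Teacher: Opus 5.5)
Your argument is the paper's: a two-point bound built from a $C^{1,\beta}$ bump of length $h$ that tilts the initial tangent by $\asymp h^\beta$ and displaces the branch by $\asymp h^{1+\beta}$. The intensity then moves by $O(h^{2+\beta})$ in sup norm, $\chi^2$ is $O(h^{2\beta+4})$, and one takes $h\asymp n^{-1/(2\beta+4)}$. The paper uses $\gamma_h(r)=re_1+hf(r/h)(a_\delta-e_1)$ with $f(s)=s(1-s)^2$ and $\delta=c_1h^\beta$, compares it with the straight branch, and needs only $\|q_h-q_0\|_\infty\le C\int_0^h|\gamma_h-\gamma_0|\dd r\le C\delta h^2$. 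Your mirror pair $\gamma_{\pm h}$ is an equivalent variant with a small bonus. Because the two curves are reflections of each other and the window is symmetric, unit arc-length density costs nothing and $Z_h=Z_{-h}$ exactly. No length correction is needed: the excess length, which is $O(h^{1+2\beta})$ rather than $O(h^{2+2\beta})$, is the same for both curves.

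The moment discussion, however, rests on a misconception and should be removed. A Le Cam bound needs only an \emph{upper} bound $\chi^2(\pi_h\Vert\pi_{-h})\le Ch^{2\beta+4}$, and you already have it with no condition on $\psi$. By the mean-value theorem, $|\ph(x-(r,y))-\ph(x-(r,-y))|\le 2|y|\,\|\nabla\ph\|_\infty$, so $|q_h-q_{-h}|\le 2\|\nabla\ph\|_\infty\int_0^h|y|\sqrt{1+y'^2}\dd r\le Ch^{2+\beta}$.

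A smaller divergence never needs to be ``tuned up''. With $\int\psi=0$ your expansion gives $O(h^{3+\beta}+h^{2+3\beta})$; the second term comes from the arc-length factor $\sqrt{1+y'^2}$, which your expansion omits and which dominates when $\beta<1/2$. Balancing that against $h^\beta$ would give a lower bound of strictly larger order than $n^{-\beta/(2\beta+4)}$, which only implies the statement. Your tuning sentence also contradicts your own preceding computation: with $\int\psi=0$ the order-$h$ term is $h^{3+\beta}$, not $h^{2+\beta}$. Likewise, the closing plan to adjust the number of vanishing moments ``so that the exponent matches'' runs in the wrong direction.

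Two small repairs are needed. First, $\psi$ must be smooth on $[0,1]$ with $\psi(1)=\psi'(1)=0$, for instance $s(1-s)^2$. A function smooth on $\R$ and supported in $[0,1]$ has $\psi'(0)=0$. Second, take $h=b\,n^{-1/(2\beta+4)}$ with $b$ small. This makes $n\chi^2\le\log(5/4)$, so the product total variation is at most $1/4$.
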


A smooth class allows tangent changes of order $h^\beta$ whose effect on
the observed density is of order $h^{\beta+2}$
(Appendix~\ref{app:curves}). For $\beta=1$ the rate is $n^{-1/6}$. The
$n^{-1/2}$ rate belongs to straight arms; for curved ones the blur scale
decides how straight an arm looks, and Proposition~\ref{prop:curvedarms}
says how much that costs.

\section{From the theorem to a fit}\label{sec:method}

The theorems concern an ideal estimator, a global minimum-distance fit
over the whole class. The experiments run a practical one, and
Table~\ref{tab:bridge} lists where the two differ.

\begin{table*}[t]
\centering\small
\caption{What the theory covers and what the experiments run.}
\label{tab:bridge}
\begin{tabular}{>{\raggedright\arraybackslash}p{0.13\textwidth}>{\raggedright\arraybackslash}p{0.38\textwidth}>{\raggedright\arraybackslash}p{0.38\textwidth}}
\toprule
 & Theory & Executed fit\\
\midrule
Source & infinite rays, weights $\ge w_*$, separation $\delta$; curved arms in Proposition~\ref{prop:curvedarms} & rays; arms at least $25^\circ$ apart, arm shares at least $0.02$\\
Blur & centered mixture of $J$ Gaussians, full covariances & two centered isotropic Gaussians\\
Background & constant (Proposition~\ref{prop:background}) & constant, share at most $0.8$\\
Window & cube; disk (Appendix~\ref{app:background}) & disk of radius 400\,nm\\
Data & independent positions or counts & localizations (with blinking); photons\\
Estimator & global minimum distance & local multistart likelihood\\
Count & penalized likelihood (Proposition~\ref{prop:count}) & BIC; in the microscopy studies, lowered when two detectors agree on fewer arms\\
Guarantee & Lipschitz inverse; rate $n^{-1/2}$ & measured, not proved (Section~\ref{sec:experiments})\\
\bottomrule
\end{tabular}
\end{table*}

\paragraph{The fitted density.}
We fit the points in a disk $W$ of radius $R$ around a candidate vertex
with the density
\begin{equation}\label{eq:fit}
 \begin{aligned}
 \pi(x)&=\frac{\omega_0}{|W|}+\sum_{j=1}^K\omega_j
 \frac{h_j(x)}{\int_Wh_j},\\
 h_j(x)&=\sum_{\ell=1}^{2}v_\ell\int_0^\infty
 \ph_{\sigma_\ell^2I}(x-p-ra_j)\dd r,
 \end{aligned}
\end{equation}
with background share $\omega_0$, arm shares $\omega_j$
($\sum_{j=0}^K\omega_j=1$), vertex $p$, directions $a_j$, and a
centered two-component isotropic Gaussian blur $(v_\ell,\sigma_\ell)$.
Each $h_j$ has a closed form, a normal density across the arm times a
normal distribution function along it (Lemma~\ref{lem:rayformula}). Up to
normalization, \eqref{eq:fit} is the intensity of the model
\eqref{eq:model}--\eqref{eq:mixture} plus a constant: the ray weights are
proportional to $\omega_j/\int_Wh_j$, and the constraints keep them above
a positive floor, so the fitted class lies inside the class of
Proposition~\ref{prop:background} (Appendix~\ref{app:background}). For
microscopy the constraints are: arms at least $25^\circ$ apart,
$\omega_j\ge0.02$, $\omega_0\le0.8$, $\sigma_\ell\in[3,200]$\,nm, and the
vertex within $200$\,nm of the window center in each coordinate, with
$R=400$\,nm.

\paragraph{Count.}
For each $K\in\{2,\dots,5\}$ we run sequential quadratic programming from
several starts: four evenly spaced direction patterns, up to six
combinations of peaks of the angular histogram of the points at $0.35R$
to $R$, and the arms returned by earlier likelihood fits, by a Hessian
ridge detector and by a line fit (Section~\ref{sec:assembled}). The
Bayesian information criterion (BIC) \citep{schwarz78},
$2\,\mathrm{NLL}+(2K+5)\log n$, picks a count $B$. With many
localizations the BIC adds arms that absorb blinking clusters and blur
misfit, so when the Hessian ridge detector and the line fit return the
same count $H=L<B$, we report the likelihood fit with that count. The
reported count $G$ satisfies
\[
 \{G\ne K_*\}\subseteq\{B\ne K_*\}\cup\{H=L<K_*\},
\]
and Appendix~\ref{app:selection} gives a margin under which both events
are exponentially rare.

\paragraph{What each part contributes.}
With the BIC count alone, the fit beats the Hessian ridge detector by
19.9 points $[12.6,26.7]$, 29.4 $[22.3,36.0]$ and 32.1 $[26.2,37.8]$ of
exact recovery in the three studies of Section~\ref{sec:assembled}. The
agreement rule changes this by $+4.2$, $+7.7$ and $-2.7$ points over 25
to 400 localizations and adds 12 to 22 points at 800, where the BIC
over-counts; it also removes faint arms that hold 2 to 8\% of the window
(Section~\ref{sec:faint}). Removing every start taken from the detectors
changes exact recovery by $+0.3$, $+0.2$ and $+0.3$ points and keeps at
least 99.7\% of the reported counts; with neither the detectors' starts
nor the rule, the fit still beats the detector by 20.1, 29.6 and 32.2
points. The fit is a local search, and we measured how far it falls
short. In the cross-laboratory study, 30 extra random starts improve 13\%
of the reported fits at 400 and 800 localizations by more than $0.5$
log-likelihood units and move no reported direction by more than
$1.2^\circ$. A stronger fixed-budget start protocol recovers the few fits
that were stuck far from better optima, at 1.5 times the run time, and
changes the reported count in 1 of 240 windows
(Appendix~\ref{app:diagnostics}). The median window takes 1.6 to 2.9\,s
for all four counts on one Apple M4 processor.

\section{Experiments}\label{sec:experiments}

We test the theory in settings of increasing distance from its
assumptions: simulated counts inside the model class, junctions assembled
from measured localizations with known arms, the acute angles of natural
junctions, faint arms, X-ray grating spectra whose directions come from
instrument calibration, and road intersections traced by taxis. Intervals are 95\% percentile bootstrap intervals that
resample the natural unit of dependence (fields of view, sources, map
tiles), with 5000 resamples for the endpoints of the three microscopy
studies and the X-ray study and 2000 for the natural junctions, the
exactly balanced crossings, the road intersections and the figures.

\subsection{Inside the model class}\label{sec:simulations}
Forty-eight planar sources, 12 in each combination of three or four arms
and two or three blur components, with arm weights at least $0.03$ and
blur covariance eigenvalues in $[0.36,1.96]$, were each observed as three
$16\times16$ count vectors on $[-3,3]^2$ at $n=10^3,\dots,10^6$ photons.
A blind fit searches up to four arms and three blur components and
chooses both by deviance plus $(2K+6J-2)\log n$. At $10^6$ photons it has
mean capped signature loss $0.023$ and the right count on all 144
vectors (Figure~\ref{fig:mixture}a). A one-Gaussian fit has loss $0.508$
and the right count on 83 (paired difference $-0.485$
$[-0.553,-0.411]$, source bootstrap): a blur model with too few
components does not merely lose precision, it invents arms. A fit given
the true blur reaches $0.014$. The blind fit selects two blur components
on all 144 high-count vectors, including the 72 whose blur had three:
the geometry does not need the blur's component count.
Figure~\ref{fig:mixture}b shows twelve balanced crossings observed under
two blurs whose covariances differ along $t(ab^\top+ba^\top)$. Their cell
laws agree to $10^{-16}$, and at $10^6$ photons the losses are $0.028$ and
$0.033$. The fit cannot tell the two blurs apart and does not need to.

\begin{figure*}[t]
\centering\includegraphics[width=\textwidth]{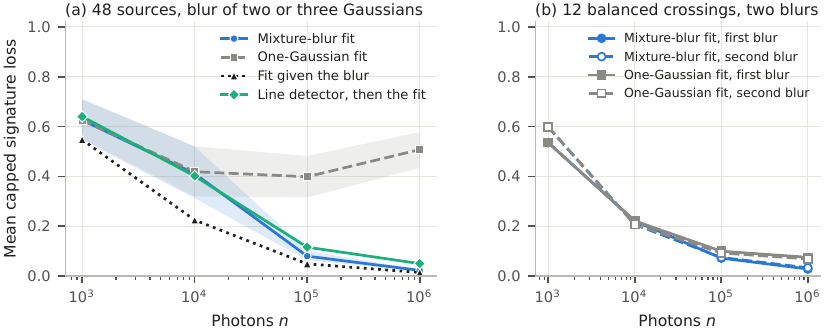}
\caption{Simulated counts inside the model class. (a) Mean capped
signature loss (one for a wrong count) on 48 sources; bands: 95\%
intervals from resampling sources within the four design cells. (b)
Twelve balanced crossings, each observed under two blurs with identical
cell laws.}
\label{fig:mixture}
\end{figure*}

With one Gaussian and unknown full covariance, the same behavior appears
at the level of the covariance itself. On twelve nonorthogonal crossings
with unknown vertex, fitted with four free directions, the fitted
covariance at exact balance approaches the family of covariances that
the data cannot distinguish (mean distance $0.452$, $0.157$, $0.044$ and
$0.010$ from $10^3$ to $10^6$ photons) while staying far from the
covariance that generated the data ($0.898$ to $0.833$). At an imbalance
of $0.06$, where the lines are no longer balanced, the covariance error at
$10^6$ photons drops to $0.171$. The arms do not notice the difference:
their mean signature loss at $10^6$ photons is $0.025$ to $0.032$ at each
of the four imbalances from $0$ to $0.06$.

\subsection{Junctions assembled from measured localizations}\label{sec:assembled}
Junctions whose arms are known are rare in measured data, so we assemble
them from measured localizations. We take straight, isolated microtubule
stretches from SMLM fields in the ShareLoc repository \citep{ouyang22},
cut each at a point, and rotate it rigidly, with no rescaling or
resampling, so that the cut lands on a common vertex. A cross joins two
stretches (four arms, crossing angle drawn from $30^\circ$ to $90^\circ$);
a T-junction joins a stretch and a half-stretch (three arms, $35^\circ$ to
$145^\circ$). Each has a control in which one arm of a straight pair is
turned by $25^\circ$ to $45^\circ$. A measured empty disk from the same
field supplies background, and all arms are at least $30^\circ$ apart.
Every localization is measured, with its own blinking and precision; only
the placement is synthetic. The window center is offset from the vertex
by up to 50\,nm, and the methods see localizations only. From each
junction we draw $n\in\{25,50,100,200,400,800\}$ localizations.

Three studies had written protocols, frozen before any evaluation fit:
data, construction, methods, settings, endpoints and analysis. Each has
120 junctions. The \emph{cross-laboratory} study uses 27 fields of COS-7,
REF52 and U2OS cells from two laboratories, two dyes and two camera
types; the \emph{U-373} study 28 fields of U-373\,MG cells; the
\emph{unused acquisitions} study 22 fields from 34 acquisitions that no
earlier analysis had opened, with reference directions taken from
localizations of the same stretches that are not in the fitted
subsample. No field was used for development. \emph{Exact recovery} means
the correct count and every arm within $10^\circ$ of its reference.

The main comparator is a Hessian ridge detector: it renders the
localizations at 10\,nm, smooths at 50\,nm, keeps the strongest 70\% of
ridge pixels between $0.35R$ and $R$, and reads the arms from peaks of
the strength-weighted orientation histogram. Its settings were tuned on
development fields. Because the fit estimates the blur of every window
while the detector smooths at one width, we also ran the same detector
with automatic scale selection at every pixel \citep{lindeberg98},
choosing among eleven scales from 15 to 100\,nm by a normalized ridge
strength. Its settings were calibrated on 240 junctions of an earlier
study that shares no field with these three, and it was then run once on
every stored window; since the three studies had been analysed before,
this comparison is retrospective (Appendix~\ref{app:adaptive}). Four more
detectors, Steger's detector in its ImageJ implementation
\citep{steger98}, a line fit, angular histograms and skeletonization, are
reported in Appendix~\ref{app:outcomes}; the likelihood fit beats each of
them in each study.

\begin{figure*}[t]
\centering\includegraphics[width=\textwidth]{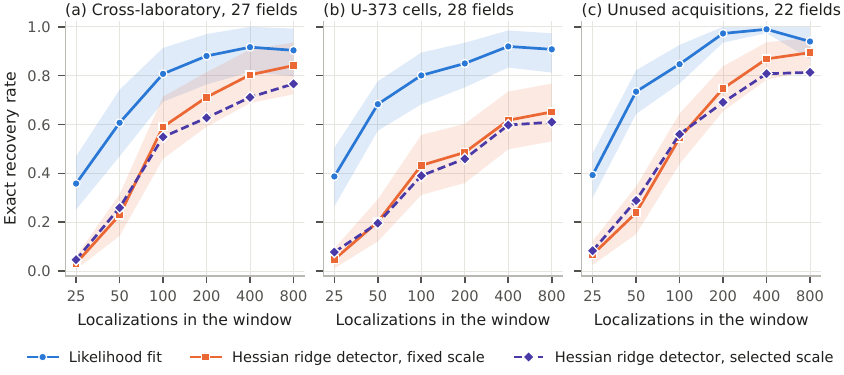}
\caption{Exact recovery (correct count, every arm within $10^\circ$) in
the three prospective studies, 120 junctions each. References: assigned
directions in (a) and (b); held-out localizations of the same filaments
in (c). Bands: 95\% field-bootstrap intervals (equal field weight) for
the fit and the fixed-scale detector. At 800 localizations, 97 junctions
in (c) have enough localizations, and one of them has no held-out
reference left for one arm; it counts as a failure for every method.}
\label{fig:recovery}
\end{figure*}

\begin{table*}[t]
\centering\small
\caption{The likelihood fit against the Hessian ridge detector. Exact
recovery: difference in percentage points, averaged over 25 to 400
localizations. Acute angles: error of the angle between adjacent arms
less than $60^\circ$ apart, in degrees, on the angles where both methods
of the pair find the correct count. Brackets: 95\% field-bootstrap
intervals (5000 resamples).}
\label{tab:endpoints}
\begin{tabular}{lccc}
\toprule
 & Cross-laboratory & U-373 & Unused acquisitions\\
\midrule
Fields & 27 & 28 & 22\\
\addlinespace
\multicolumn{4}{l}{\emph{Exact recovery, fit minus detector (points)}}\\
\quad fixed scale & $+24.1$ $[18.8,29.0]$ & $+37.1$ $[31.3,42.3]$ & $+29.4$ $[24.3,34.1]$\\
\quad selected scale & $+27.5$ $[21.4,33.4]$ & $+38.4$ $[32.2,44.6]$ & $+30.1$ $[23.8,36.2]$\\
\addlinespace
\multicolumn{4}{l}{\emph{Acute angles, signed error (degrees)}}\\
\quad fixed-scale detector & $+9.3$ $[6.2,12.8]$ & $+12.4$ $[8.7,15.9]$ & $+7.2$ $[5.0,9.4]$\\
\quad fit & $+0.4$ $[-0.7,1.4]$ & $+1.3$ $[-0.2,3.0]$ & $+0.9$ $[-0.3,2.0]$\\
\addlinespace
\multicolumn{4}{l}{\emph{Acute angles, absolute error (degrees)}}\\
\quad fixed-scale detector & $11.6$ $[9.4,14.4]$ & $15.5$ $[12.6,18.7]$ & $10.4$ $[8.6,12.4]$\\
\quad fit & $3.6$ $[2.8,4.5]$ & $4.3$ $[3.2,5.6]$ & $3.5$ $[3.1,3.9]$\\
\bottomrule
\end{tabular}
\end{table*}

Averaged over 25 to 400 localizations, the likelihood fit beats the
fixed-scale detector by 24.1, 37.1 and 29.4 points in the
cross-laboratory, U-373 and unused-acquisition studies, and the
selected-scale detector by 27.5, 38.4 and 30.1 points
(Figure~\ref{fig:recovery}, Table~\ref{tab:endpoints}). The gain is
largest where data are scarce. At 50 localizations the fit recovers 61\%,
68\% and 73\% of the junctions and the fixed-scale detector 23\%, 20\%
and 24\%. At 800 localizations the detector closes most of the gap in two
studies; on the U-373 cells it still trails by 26 points. Selecting the
scale does not help the detector: against the fixed scale it changes
exact recovery by $-3.5$ $[-7.3,0.3]$, $-1.3$ $[-5.1,2.3]$ and $-0.7$
$[-4.5,3.6]$ points over 25 to 400 localizations and loses 4 to 8 points
at 800. Calibration had to exclude scales below 50\,nm, which follow the
scatter of single localizations, and the selection then settles on 50 to
60\,nm at every size, close to the tuned fixed scale. Cutting the arms
at the vertex, as the construction does, costs the fixed-scale detector 5
to 8 points in a paired synthetic control and leaves the fit unchanged
(Appendix~\ref{app:diagnostics}); the gains include this effect.

\subsection{Rate and precision}\label{sec:rate}
In the two studies whose references are defined on the scale of the
window, the median arm error of fits with the correct count (76 to 118
of 120 fits per size) falls from $4.7^\circ$ at 25 localizations to
$1.4^\circ$ at 400 in the cross-laboratory study (log--log slope $-0.435$
$[-0.526,-0.337]$) and from $5.0^\circ$ to $1.4^\circ$ on the unused
acquisitions (slope $-0.457$ $[-0.536,-0.393]$;
Figure~\ref{fig:ratebias}a,b). Both slopes are consistent with the
$-1/2$ of Theorem~\ref{thm:statistics}. The local sensitivity at the
fitted parameters agrees: on the unused acquisitions the median sandwich
standard error of an arm is $21^\circ/\sqrt n$ from 100 to 800
localizations, and observed errors stay within 0.7 to 1.2 times the
clustered standard error from 50 localizations on. The detector's error
levels off near $3^\circ$, as a smoothing bias predicts. The
unconditional loss, which charges $90^\circ$ for a wrong count, is lower
for the fit at every size in all three studies; at 400 and 800
localizations in the cross-laboratory study the difference is within
noise. On U-373 cells the slope, $-0.331$ $[-0.380,-0.244]$, falls just
outside the prespecified range $[-0.65,-0.35]$, so that criterion fails.
Those references extend over twice the window, and slight curvature
gives them a floor; against the direction of each stretch's own
localizations within the window, the slope is $-0.442$ (post hoc;
Appendix~\ref{app:outcomes}).

\begin{figure*}[t]
\centering\includegraphics[width=\textwidth]{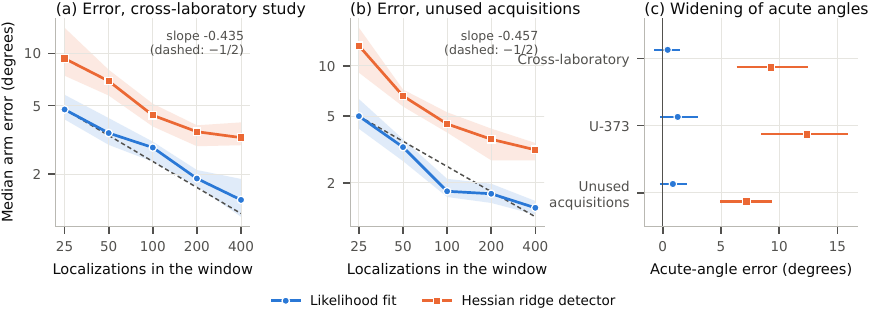}
\caption{(a,b) Median arm error of fits with the correct count; dashed
lines have slope $-1/2$. (c) Mean signed error (estimate minus
reference) of the angle between adjacent arms less than $60^\circ$
apart, on the junctions where both methods find the correct count (308,
305 and 299 angles). Bars and bands: 95\% field-bootstrap intervals.}
\label{fig:ratebias}
\end{figure*}

\subsection{Acute angles}\label{sec:acute}
For adjacent arms less than $60^\circ$ apart, on the 308, 305 and 299
angles where both methods find the correct count, the fixed-scale
detector's mean signed error is $9.3^\circ$, $12.4^\circ$ and $7.2^\circ$
in the three studies, and the fit's $0.4^\circ$, $1.3^\circ$ and
$0.9^\circ$ (Figure~\ref{fig:ratebias}c): the detector widens, the fit
does not. The selected-scale detector widens more, by $12.4^\circ$,
$13.3^\circ$ and $9.4^\circ$ on the angles where both detectors are
right. The widening is a property of the smoothed image, not of the
noise.

\begin{proposition}[A fixed-scale Hessian widens crossings]\label{prop:hessian}
Let two equally bright full lines cross at angle $\theta\in(0,\pi/2)$
under an isotropic Gaussian blur of total variance $V$, the instrument's
plus the smoothing's. At distance $t$ from the crossing along one line
put $r=t^2\sin^2\theta/V$ and $q=(r-1)e^{-r/2}$. The tangent, perpendicular
to the eigenvector of the most negative Hessian eigenvalue, turns away from
the other line by
\[
 \tfrac12\operatorname{atan2}\bigl(q\sin2\theta,\,1-q\cos2\theta\bigr),
\]
which is positive for every $r>1$ and largest at $r=3$.
\end{proposition}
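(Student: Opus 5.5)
The plan is a direct computation of the Hessian of the blurred image at a point on one line, followed by a doubled-angle description of its eigenvectors. Neither Lemma~\ref{lem:invisible} nor any of the stability results is needed; the statement is about the smoothed image alone. I place line~1 along $a=e_1$ and line~2 along $b=(\cos\theta,\sin\theta)$, both through the origin, with unit normals $n_1=e_2$ and $n_2=(-\sin\theta,\cos\theta)$.

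\emph{Step 1: the Hessian.} A full line with unit normal $n$, blurred by an isotropic Gaussian of total variance $V$, has intensity $c\,e^{-d^2/(2V)}$ with $d=n^\top x$. Its Hessian is
\[
 \frac{c}{V}\,e^{-d^2/(2V)}\Bigl(\frac{d^2}{V}-1\Bigr)nn^\top .
\]
At $x=ta$ we have $d_1=0$ and $d_2^2=t^2\sin^2\theta$, so $d_2^2/V=r$. Summing the two equally bright lines gives
\[
 \nabla^2 I(ta)=\frac{c}{V}\bigl(-n_1n_1^\top+q\,n_2n_2^\top\bigr),
 \qquad q=(r-1)e^{-r/2}.
\]

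\emph{Step 2: the eigenvectors.} Write $nn^\top=\tfrac12 I+\tfrac12 R(2\phi)$, where $\phi$ is the angle of $n$ and $R(2\phi)=\begin{pmatrix}\cos2\phi&\sin2\phi\\ \sin2\phi&-\cos2\phi\end{pmatrix}$. The isotropic part does not affect the eigenvectors. With $\phi_1=\pi/2$ and $\phi_2=\pi/2+\theta$, the traceless part is proportional to $R(\psi)$ with
\[
 (\cos\psi,\sin\psi)\propto\bigl(1-q\cos2\theta,\,-q\sin2\theta\bigr).
\]
The eigenvector of the most negative eigenvalue is the one at angle $\psi/2+\pi/2$. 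The tangent, which is perpendicular to it, is the top eigenvector, at angle
\[
 \psi/2=-\tfrac12\operatorname{atan2}\bigl(q\sin2\theta,\,1-q\cos2\theta\bigr)
\]
from $a$. Since $b$ lies at $+\theta$, this is a turn away from the other line by the stated amount. I will need to fix the orientation conventions carefully here; this sign bookkeeping is the only delicate point.

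\emph{Step 3: positivity and the maximum.} For $r>1$ we have $q>0$, and $\sin2\theta>0$ because $\theta\in(0,\pi/2)$. The first argument of $\operatorname{atan2}$ is therefore positive, so the angle lies in $(0,\pi)$ and the turn lies in $(0,\pi/2)$. For the maximum over $r$, note that the turn is a function of $q$ alone. I will show it increases in $q$ on $r>1$:
\begin{itemize}
\item Where $1-q\cos2\theta>0$, the turn equals $\tfrac12\arctan\bigl(q\sin2\theta/(1-q\cos2\theta)\bigr)$. The derivative of $q\sin2\theta/(1-q\cos2\theta)$ is $\sin2\theta/(1-q\cos2\theta)^2>0$.
\item In general, the point $(1-q\cos2\theta,q\sin2\theta)$ moves along the ray from $(1,0)$ in direction $(-\cos2\theta,\sin2\theta)$ as $q$ grows. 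It stays in the upper half-plane, so its polar angle increases monotonically.
\end{itemize}
Finally, $q(r)=(r-1)e^{-r/2}$ has derivative $e^{-r/2}\bigl(1-(r-1)/2\bigr)$. This vanishes only at $r=3$, so $q$ has a unique maximum $2e^{-3/2}$ there, and hence so does the turn.

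The main obstacle I expect is the sign convention in Step 2: confirming that "perpendicular to the most negative eigenvector" is the top eigenvector of the traceless part, and that its negative doubled angle means a turn away from $b$. I would check this with the limiting case $q\to0$, where the tangent must equal $a$, and a small positive $q$.
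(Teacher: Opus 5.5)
Your proposal is correct and follows the paper's proof essentially step for step. It uses the same Hessian $-n_1n_1^\top+q\,n_2n_2^\top$ at $ta_1$, the same doubled-angle reading of its eigenvectors and the same monotonicity in $q$ with $q$ maximal at $r=3$; the paper applies the doubled angle to the negated Hessian to find the ridge normal, you apply it to the traceless part to get the tangent directly, and your signs come out right. The paper avoids your case split by noting $-1\le q\le 2e^{-3/2}<1$, so $1-q\cos2\theta>0$ for every $r$ and the $\arctan$ form holds throughout.
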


The detector reads arms at $t\ge140$\,nm with $\sqrt V\approx57$\,nm for
a 28\,nm blur, so $r>1$ for every angle above $24^\circ$: both arms of an
acute pair turn apart. On noise-free renderings of a cross it reads
$30^\circ$ as $49^\circ$ to $52^\circ$ and $60^\circ$ as $65^\circ$ to
$68^\circ$ (Table~\ref{tab:noisefree}). Inside the core, $r<1$, the
tangent turns toward the other line instead, so a detector that reads
orientations close to the crossing mixes both signs. The likelihood
models the blur instead of adding to it.

\begin{table*}[t]
\centering\small
\caption{The frozen fixed-scale Hessian ridge detector on noise-free
renderings of two full lines through the window center, with 10\%
uniform background; every error is bias.}
\label{tab:noisefree}
\begin{tabular}{lccccc}
\toprule
True crossing angle & $30^\circ$ & $40^\circ$ & $50^\circ$ & $60^\circ$ & $75^\circ$\\
\midrule
Detector, 20\,nm blur & $49.0^\circ$ & $51.0^\circ$ & $57.7^\circ$ & $65.4^\circ$ & $76.3^\circ$\\
Detector, 30\,nm blur & $51.9^\circ$ & $54.7^\circ$ & $59.6^\circ$ & $67.9^\circ$ & $76.8^\circ$\\
\bottomrule
\end{tabular}
\end{table*}

\paragraph{Natural junctions.}
The bias reaches reported summaries. At 1{,}391
natural microtubule junctions in 73 fields we ran both methods on all
localizations of one half of each acquisition. On the 812 junctions in
71 fields where both return the reference count, the detector reads
junctions whose smallest angle is below $60^\circ$ as $4.3^\circ$
$[3.5,5.0]$ wider than the likelihood fit, and the difference fades as
the angle approaches $90^\circ$, as Proposition~\ref{prop:hessian}
predicts: $+4.5^\circ$ below $45^\circ$, $+4.1^\circ$ from $45^\circ$ to
$60^\circ$, $+2.8^\circ$ from $60^\circ$ to $75^\circ$ and $+0.1^\circ$
$[-0.6,0.7]$ above $75^\circ$ (Figure~\ref{fig:natural}). The share of
junctions with an angle below $60^\circ$ is 44.5\% by the fit and 35.7\%
by the detector, a difference of 8.7 points $[6.0,11.7]$. Natural
junctions have no independent truth, so this measures how far the choice
of method moves a reported summary; on assembled junctions, where the
truth is known, the detector's widening has the same sign and a larger
size.

\begin{figure*}[t]
\centering\includegraphics[width=\textwidth]{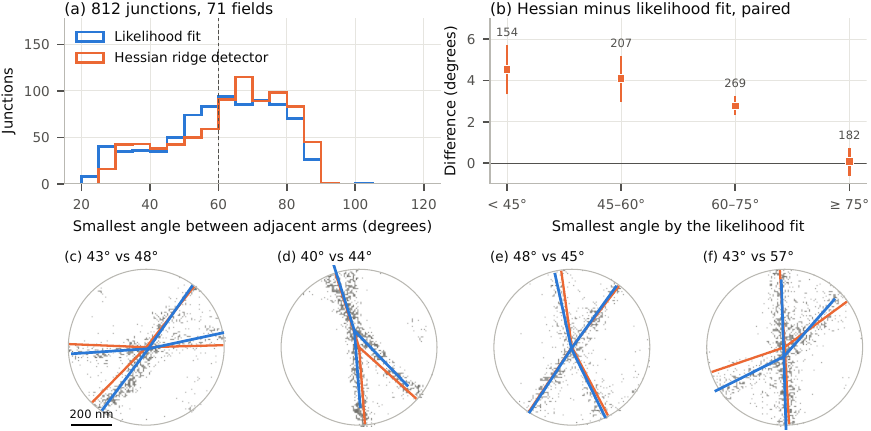}
\caption{Natural microtubule junctions. (a) Smallest angle between
adjacent arms on 812 junctions where both methods return the reference
count. (b) Hessian ridge detector minus likelihood fit, paired, by the
fit's angle; bars: 95\% field-bootstrap intervals; numbers: junctions per
bin. (c--f) Four acute junctions drawn with a fixed seed from the 361
whose angle by the fit is below $60^\circ$: localizations (up to 1{,}500
shown), likelihood fit (blue) and detector (orange); titles give the two
smallest angles.}
\label{fig:natural}
\end{figure*}

\subsection{Near the singular set}\label{sec:crossings}
A cross joins two straight filaments, so its opposite arms come from one
filament: their per-length intensity ratio has median 0.90, 0.84 and 0.83
in the three studies (10th percentiles 0.65 to 0.69). These junctions lie
near the balanced crossings of Proposition~\ref{prop:classification},
where the covariance is weakly identified. In the cross-laboratory study,
exact recovery at the 30 crosses differs from that at their 30 paired
controls, built from the same arms with one arm turned off its line, by
$-0.8$ points $[-8.5,6.9]$ over 25 to 400 localizations, inside the
prespecified equivalence margin of $\pm10$ points; at T-junctions the
difference is $-1.9$ $[-10.0,5.9]$, on the margin. A separate test on
exactly balanced crossings, built from four filaments with equalized line
weights and run with an earlier implementation of the fit, gave $-1.1$ $[-9.3,6.2]$ points at 100 localizations and $+0.6$
$[-7.3,7.9]$ at 400, with equal median errors. The blur is lost at these
crossings and the arms are recovered as well as anywhere else. Of the
four crossing tests, the one on U-373 cells missed its margin, with one
positive and one negative difference (Appendix~\ref{app:outcomes}).

\subsection{Faint arms and points beyond the window}\label{sec:faint}
Section~\ref{sec:counting} showed that points inside the window cannot
separate a faint arm from structured clutter. Points beyond the window
can. The continuation check keeps a candidate arm added by a larger count
only if its sector holds more localizations than the neighbouring sectors
in both of two rings between $R$ and $2R$ (a one-sided binomial test with
a family level of $0.05$). Part~B of the unused-acquisition study tests
it on 60 junctions whose second filament was thinned so that each of its
arms holds 2 to 8\% of the window, and on the same 60 geometries unthinned.
Both detectors miss these arms almost always, and the agreement rule
removes arms that the BIC had right. The continuation check restores
them: it adds $50.3$ points $[37.6,64.5]$ of count accuracy at 400 and
800 localizations, raising the correct count at 800 from 0.26 to 0.79,
and it adds no extra arm at 100, 400 or 800 localizations
(Figure~\ref{fig:faint}).

\begin{figure*}[t]
\centering\includegraphics[width=\textwidth]{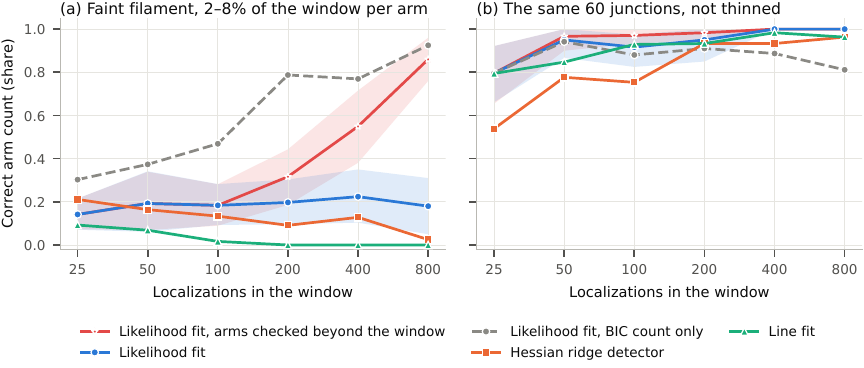}
\caption{Correct arm count in part B of the unused-acquisition study,
equal field weight. (a) Junctions whose second filament holds 2 to 8\% of
the window per arm. (b) The same 60 geometries, not thinned. Bands: 95\%
field-bootstrap intervals for the two agreement-rule variants. Few faint
junctions have 800 localizations inside the window.}
\label{fig:faint}
\end{figure*}

\subsection{X-ray photons}\label{sec:xray}
The High Energy Transmission Grating on the Chandra X-ray Observatory
disperses a point source into two straight spectra that cross at the
source image at $9.96^\circ$ \citep{canizares05}: a four-arm junction whose
directions follow from the spacecraft roll and the grating angles, not
from the photons. Photon arrivals are independent, as in
Theorem~\ref{thm:statistics}. After a first study with generic starts
failed its prespecified criteria, we fixed a new protocol and applied the
fit to 30 new sources taken in a fixed random order from the public
archive, with directions from calibration alone, photons of 0.5 to
8\,keV within 1000 detector pixels ($492''$) of the source image, and
$n\in\{100,200,400,800\}$ with three draws each
(Appendix~\ref{app:xraydesign}). The calibrated directions agree with a
map fitted to all photons of each observation within $0.0016^\circ$. The
likelihood runs with instrument floors (arms $4^\circ$ apart, arm shares
at least $0.004$) and chooses one or two blur components by BIC. The ray
model is an approximation here: each arm starts some distance from the
source image, its intensity follows the source spectrum, and the
line-spread function has wings.

When all four arms are found within $10^\circ$ (18, 24, 36 and 38 of 90
draws at 100, 200, 400 and 800 photons), the median error falls from
$0.073^\circ$ to $0.023^\circ$, with slope $-0.518$ $[-0.785,-0.308]$
(Figure~\ref{fig:xray}b): precision against a physical reference, at the
rate the theory predicts. Complete recovery within $1^\circ$ reaches 40\%
at 800 photons. On these thin, isolated spectra the simple geometric
methods keep up: a line fit and an angular histogram given the true
vertex also reach 41\%. The tuned Hessian ridge detector never recovers
all four arms within $1^\circ$ and returns fewer than four arms in 78\% of
the draws at 800 photons; the fit leads it by 38.3 points $[25.0,51.7]$
at 400 and 800 photons.

\begin{figure*}[t]
\centering\includegraphics[width=\textwidth]{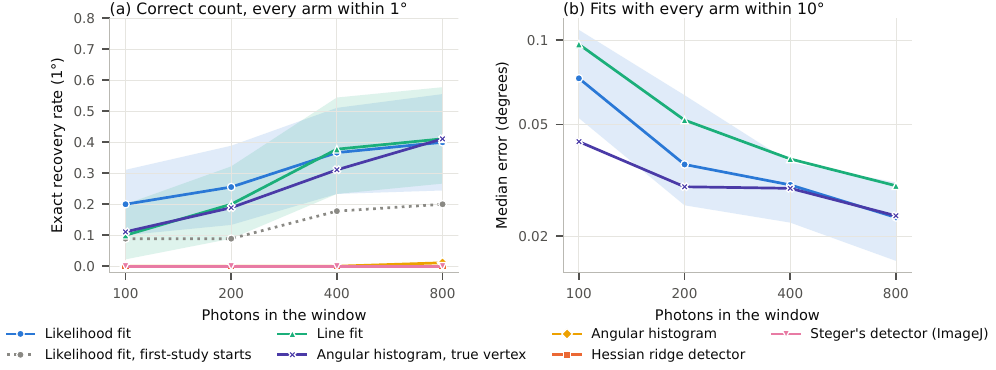}
\caption{Chandra grating spectra, 30 sources and three draws per size.
(a) Exact recovery within $1^\circ$ of the calibrated directions for every
method; bands: 95\% source-bootstrap intervals for the fit and the line
fit. (b) Median error of fits with all four arms within $10^\circ$; band:
95\% source-bootstrap interval for the fit.}
\label{fig:xray}
\end{figure*}

\subsection{Road intersections from taxi traces}\label{sec:gps}
A road intersection seen through GPS fixes is the same object: each fix is
a noisy position of a vehicle on one of the roads, and GPS error has heavy
tails. We used the Porto taxi traces released for the ECML/PKDD 2015
challenge \citep{moreira13}, kept the fixes of
moving vehicles, and took the true geometry of each intersection from
OpenStreetMap \citep{osm}. A site entered only if the maps of 2014 and
2026 agree on it (same number of arms, vertex within 3\,m, every arm
within $2^\circ$) and every arm runs straight to 40\,m. Of 117 sites, the
21 in randomly assigned one-kilometre tiles served for development; the
protocol, including the tuned settings of every detector, was frozen
before any estimate on the other 96 intersections in 27 tiles. From
windows of radius 40\,m we drew 25 to 1{,}600 fixes, three times each. The
fit is the earlier implementation of the two-Gaussian model with the
count chosen by BIC.

Averaged over all sizes, the fit recovers the intersection exactly 14.2
points $[8.0,21.7]$ more often than the tuned Hessian ridge detector, and
35.0 to 46.7 points more often than Steger's detector, a line fit, an
angular histogram and skeletonization (Figure~\ref{fig:gps}). The median
arm error of fits with the correct count falls with slope $-0.49$
$[-0.57,-0.42]$ from 25 to 400 fixes, while the detector's levels off at
$4.4^\circ$ to $4.7^\circ$ from 400 fixes on. The mixture earns its place
here. A one-Gaussian fit absorbs the heavy tail of GPS error with an extra
arm: from 200 fixes on it over-counts in 12 to 17\% of the windows,
against 5 to 9\% for two Gaussians, which recover 7.3, 8.7, 5.0 and 8.2
points more intersections at 200, 400, 800 and 1{,}600 fixes. Per-arm
sandwich intervals covered the map directions in only 74 to 84\% of cases
at 100 to 400 fixes, as map errors of a few metres would produce.

The same frozen settings failed every prespecified criterion on 39
intersections in Berlin and Chicago from a public map-construction
benchmark \citep{ahmed15} and on 8 in Beijing \citep{zheng10}. There the
Hessian detector led the fit by 19.5 points $[13.1,28.6]$, and in Beijing
the two tied ($-0.4$ $[-13.4,11.3]$). These windows carry little road:
the share of fixes within 10\,m of a mapped arm is 1.4 to 1.7 times what
uniform scatter would give, against 2.1 times in Porto, and 73.5\% of the
Berlin fixes sit on exactly repeated coordinates, which a continuous
likelihood explains with an arm of vanishing width. A model with a density
needs data that have one; where that fails, the smoothing in a detector
degrades more gracefully.

\begin{figure*}[t]
\centering\includegraphics[width=\textwidth]{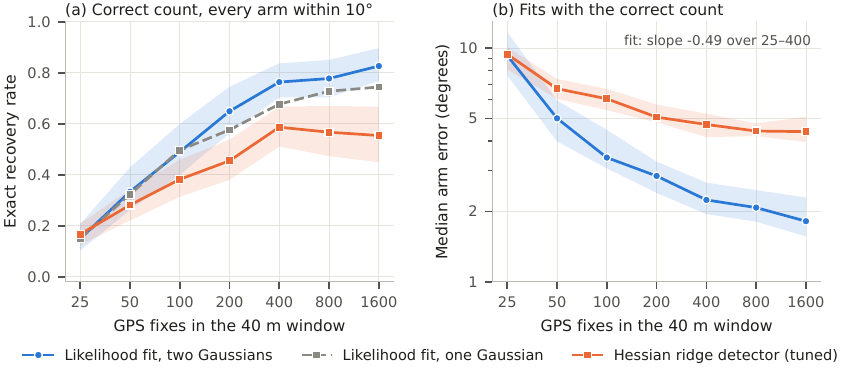}
\caption{Road intersections from Porto taxi traces, 96 evaluation
intersections in 27 one-kilometre tiles, against OpenStreetMap geometry.
(a) Exact recovery. (b) Median arm error of fits with the correct count.
Bands: 95\% tile-bootstrap intervals (2000 resamples).}
\label{fig:gps}
\end{figure*}

\subsection{The blur of measured images is a mixture}\label{sec:biosr}
The model allows several blur components because measured blurs have
wings. On 50 microtubule fields of the BioSR collection \citep{qiao21},
we predicted the mean of nine raw structured-illumination frames from the
paired high signal-to-noise reconstruction, with one or two Gaussian
widths and an affine background, fitting on alternating tiles and scoring
on the others. Two widths lowered the held-out mean squared error in all
50 fields, by $45.6\%$ $[42.2,49.0]$ on average. A stronger one-Gaussian
comparator, added after this result, with a fitted subpixel shift, a
quadratic background and an anisotropic covariance, left a reduction of
$51.3\%$ $[47.0,55.7]$, again in all 50 fields. For junction angles in these images the second width makes
no difference ($-0.035^\circ$ $[-0.226,0.148]$ on 222 four-arm sites), as
Theorem~\ref{thm:inverse} leads one to expect: the geometry does not
depend on how the blur is split into components, so a richer blur class
costs nothing where it is not needed and protects the count where it is
(Section~\ref{sec:simulations}).

\section{Related work}\label{sec:related}

\paragraph{Singular and weakly identified models.}
Parameter estimation in finite mixtures slows down when components merge
or vanish \citep{ho16}, the Laplace approximation behind the BIC can fail
at such points \citep{drton17}, and weak identification in
semiparametric models has a theory of its own \citep{kaji21}. Our blur is such a mixture, and it is worse than weakly
identified at a balanced crossing: it is not identified at all. The
geometry nevertheless keeps the rate $n^{-1/2}$ through every one of these
singularities, because the degeneracy lies entirely in the nuisance.
Target identifiability under generative indeterminacy \citep{xi23}
separates what data can determine from what they cannot; we add a
uniform Lipschitz bound and a rate for the determined part, and the
analytic order criterion of Proposition~\ref{prop:ordercriterion} is a
general route to such bounds. General Lipschitz stability results for
inverse problems with finite measurements
\citep{bourgeois13,alberti22} assume an injective forward map, which
fails here.

\paragraph{Deconvolution and super-resolution.}
Deconvolution with an unknown error distribution is usually handled by
repeated measurements or by restricting the error law
\citep{meister09}. Super-resolution and sparse deconvolution recover point
sources through a known kernel \citep{candes14,duval15,poon19}. Our
kernel is unknown and our sources are rays, and we recover the rays
without recovering the kernel. Gaussian mixtures are also used as models
of measured point-spread functions \citep{samuylov19}, which supports the
blur class we assume.

\paragraph{Filaments and graphs from points.}
Statistical work on filaments in point clouds tests for them against
uniform clutter \citep{ariascastro06}, estimates them as the medial axis
of the support \citep{genovese12} or as ridges of a kernel density
estimate \citep{genovese14}, and reconstructs the graph they form, with
guarantees on its topology \citep{aanjaneya12,lecci14}. These methods
target the filaments and their connectivity without a blur model; we
target the geometry at a single junction under an unknown blur, and our
guarantees are rates for angles, weights and lines.

\paragraph{Ridge and vessel filters.}
Ridge detection reads lines from Hessian eigenvectors at a chosen or
selected scale \citep{lindeberg98,frangi98}, and \citet{steger98} removes
the position bias of asymmetric line profiles. The angle bias at a
crossing is a different effect, which Proposition~\ref{prop:hessian}
computes for the fixed-scale Hessian; scale selection did not remove it
in our data. The spatial score $\nabla\log q$ is the object of score
matching \citep{hyvarinen05}; here it is an analytic observation rather
than a learned field.

\section{Discussion}\label{sec:discussion}

A blur that cannot be calibrated need not blur the geometry. The
mechanism is general: far along an arm the data show the blur only
through a centered transverse profile, whose mass and mean return the
weight and the line, and a change of direction shifts that profile by an
amount that grows with distance. Proposition~\ref{prop:ordercriterion}
turns this kind of order comparison into a uniform Lipschitz bound. The
same route is open for other targets observed through a nuisance that
cannot be identified, such as an edge under an unknown point-spread
function or a source position under an unknown beam; in each case the
work is an asymptotic argument that shows the target changing no later
than the observation.

The practical lesson is equally direct. Modeling the blur beats removing
it by smoothing, even when the blur itself is lost. On junctions assembled
from measured localizations from three laboratories, on natural
microtubule junctions, on X-ray grating spectra and on road intersections
traced by taxis, the likelihood fit recovers junctions more often than a
Hessian ridge detector, at a fixed or a selected scale, or matches the best
simple method, and it does not widen acute angles. Measured widefield
images show that actual crossings sit at the ambiguity the theory
describes, and a blur class with several widths pays off where the error
has heavy tails.

The results also mark where the approach stops. It needs a candidate
window and point data; finding the windows is a detection problem that we
do not treat. It needs data with a density: on raw widefield frames of
blinking fluorophores, whose pixels are not independent point positions,
the fit over-counts arms and the Hessian detector recovers 18 to 21 points
more junctions, and on GPS traces with little road signal and stacked
duplicate coordinates it lost to the same detector. The theorems describe
a global fit of the model, and the local search we run falls short of it
by an amount that we measured rather than bounded. And the rate
$n^{-1/2}$ belongs to straight arms: curvature costs a bias set by the
curvature at the scale of the blur, which no number of points removes at
a fixed blur.

\paragraph{Data.}
All data are public. The microscopy tables are ShareLoc records on
Zenodo, released under the Creative Commons Attribution 4.0 license; the
record numbers are listed in Appendix~\ref{app:data}. The X-ray event
files are in the Chandra Data Archive, the widefield image stacks in the
DL-SMLM collection on figshare, the structured-illumination images in the
BioSR collection, the Porto taxi traces in the UCI Machine Learning
Repository, the Berlin and Chicago traces at mapconstruction.org, the
Beijing traces in the GeoLife collection, and the road maps in
OpenStreetMap.

\vspace{1.6ex}
\noindent\begin{minipage}{\columnwidth}
\paragraph{Use of AI tools.} Generative AI tools assisted with code implementation, experimental design and analysis, literature review, figure preparation, and drafting and editing this manuscript. The authors are responsible for the claims, proofs, results, references, and final submitted text.
\end{minipage}

\clearpage
\onecolumn
\appendix
\pdfbookmark[0]{Appendix}{appendix}
\begin{center}{\LARGE\bfseries Appendix}\end{center}
\vspace{1ex}
\noindent The appendix gives the proofs that do not appear in the main text
(Appendix~\ref{app:proofs}), the design of the experiments
(Appendix~\ref{app:design}), every pre-declared outcome
(Appendix~\ref{app:outcomes}) and diagnostics of the fit
(Appendix~\ref{app:diagnostics}).

\section{Proofs}\label{app:proofs}

Throughout, the blur of \eqref{eq:mixture} is stored in $J$ slots
$(v_\ell,b_\ell,\Sigma_\ell)$. A slot may carry weight zero; the weights
sum to one. Directions are written in local analytic charts of the unit
sphere, and finitely many charts cover the compact class $\Ccal$.

\subsection{The ray formula and the far profile}\label{app:mixture}

\begin{lemma}[Ray formula]\label{lem:rayformula}
Fix a unit vector $a$, a vertex $p$, a component mean $b$, a covariance
$\Sigma\succ0$ and a coefficient $c\ge0$. Put $z=x-p-b$, $Q=\Sigma^{-1}$,
$A=a^\top Qa$ and $B=Q-Qaa^\top Q/A$. Let $E$ be an orthonormal basis
of $a^\perp$ and $C=E^\top\Sigma E$. Then
\begin{equation}\label{eq:rayformula}
 c\int_0^\infty\ph_\Sigma(z-ra)\dd r
 =c\,\ph_C(E^\top z)\,\Phi\!\left(\frac{a^\top Qz}{\sqrt A}\right),
 \qquad B=EC^{-1}E^\top,
\end{equation}
where $\Phi$ is the standard normal distribution function.
\end{lemma}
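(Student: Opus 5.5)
The plan is to complete the square in the Gaussian exponent as a function of $r$, and then integrate the resulting one-dimensional Gaussian in $r$ over $[0,\infty)$. Write
\[
\ph_\Sigma(z-ra)=(2\pi)^{-D/2}(\det\Sigma)^{-1/2}\exp\{-\tfrac12(z-ra)^\top Q(z-ra)\}
\]
and expand the quadratic form in $r$:
\[
(z-ra)^\top Q(z-ra)=A\bigl(r-a^\top Qz/A\bigr)^2+z^\top Bz .
\]
This identity is immediate from $B=Q-Qaa^\top Q/A$.

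\emph{Step 1: the integral in $r$.} With $m=a^\top Qz/A$ we have
\[
\int_0^\infty e^{-A(r-m)^2/2}\dd r=\sqrt{2\pi/A}\;\Phi(m\sqrt A)=\sqrt{2\pi/A}\;\Phi\bigl(a^\top Qz/\sqrt A\bigr),
\]
which produces the factor $\Phi$ in \eqref{eq:rayformula}. What remains is
\[
(2\pi)^{-(D-1)/2}(\det\Sigma)^{-1/2}A^{-1/2}e^{-z^\top Bz/2},
\]
and this must be identified with $\ph_C(E^\top z)$.

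\emph{Step 2: the identity $B=EC^{-1}E^\top$.} This is the main linear-algebra point. First, $Ba=Qa-Qa\,(a^\top Qa)/A=0$, so $B$ annihilates $a$. Hence $B=EME^\top$ with $M=E^\top BE$, since $B$ is symmetric and its kernel contains $a$. To show $M=C^{-1}$, I would use the block inverse of $Q$ in the orthonormal basis $(a,E)$. In that basis $\Sigma$ has blocks $\Sigma_{aa}=a^\top\Sigma a$, $\Sigma_{aE}$ and $C=E^\top\Sigma E$. The Schur complement of $A=a^\top Qa$ in $Q$ is
\[
E^\top QE-E^\top Qa\,a^\top QE/A=E^\top BE,
\]
and by the standard block-inverse formula this Schur complement equals $(E^\top\Sigma E)^{-1}=C^{-1}$. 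The same formula gives the determinant identity $\det Q=A\cdot\det(C^{-1})$, that is, $(\det\Sigma)^{-1}=A/\det C$. Therefore
\[
(\det\Sigma)^{-1/2}A^{-1/2}=(\det C)^{-1/2}.
\]

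\emph{Step 3: assembly.} Since $z^\top Bz=(E^\top z)^\top C^{-1}(E^\top z)$, the prefactor becomes
\[
(2\pi)^{-(D-1)/2}(\det C)^{-1/2}\exp\{-\tfrac12(E^\top z)^\top C^{-1}E^\top z\}=\ph_C(E^\top z).
\]
Multiplying by $c\ge0$ gives \eqref{eq:rayformula}. The interchange needs no justification, because the integrand is positive and the integral in $r$ is elementary.

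The only step with any content is the Schur-complement identification in Step 2, together with the matching determinant identity; the rest is completing the square. I expect no real obstacle, only care with the normalizing constants.
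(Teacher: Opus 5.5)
Your proposal is correct and follows the paper's proof: complete the square in $r$, integrate to get $\sqrt{2\pi/A}\,\Phi(a^\top Qz/\sqrt A)$, and use the block-inverse formula in the basis $(a,E)$ to obtain $|\Sigma|A=|C|$ and $B=EC^{-1}E^\top$. The paper states the identity from the $\Sigma$ side (the Schur complement of $C$ in $\Sigma$ equals $1/A$), whereas you take the Schur complement of $A$ in $Q$ and check $Ba=0$ explicitly, which spells out the step $B=EC^{-1}E^\top$ a little more fully.
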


\begin{proof}
Completing the square in $r$ gives
$(z-ra)^\top Q(z-ra)=z^\top Bz+A(r-a^\top Qz/A)^2$. The integral of
the second factor over $r\ge0$ is $\sqrt{2\pi/A}\,\Phi(a^\top Qz/\sqrt
A)$. In the orthonormal basis $(a,E)$, the Schur complement of $C$ in
$\Sigma$ equals $1/A$, so $|\Sigma|A=|C|$ and $B=EC^{-1}E^\top$; the
remaining prefactor is then $\ph_C(E^\top z)$.
\end{proof}

\paragraph{Analyticity.}
Near the compact set of admissible parameters and window points,
complexify the direction charts and the covariance entries. The
coefficient of $r^2$ in the exponent keeps a real part of at least
$c>0$, so every
fixed parameter or spatial derivative of the integrand in
\eqref{eq:rayformula} is dominated by $C(1+r)^M\exp(-cr^2+Cr)$ with
$c>0$. Integrating holomorphic functions under this bound shows that $q$
is jointly real-analytic in parameters and position. On the original compact
set $q$ has a positive minimum, so $g=\nabla\log q$ and $q/Z$ are
analytic on a possibly smaller neighborhood. When a slot has weight zero
the complex extension may give it a negative weight; positivity of $q$
is needed only near the compact window, where it holds by continuity.

\paragraph{Decay along a direction.}
For a unit vector $v$, the Gaussian tails in \eqref{eq:rayformula} give
\begin{equation}\label{eq:orientedrate}
 \lim_{R\to\infty}\frac{-2\log q(Rv)}{R^2}
 =\min_{j,\ \ell:\,v_\ell>0}\ \min_{s\ge0}\
 (v-sa_j)^\top\Sigma_\ell^{-1}(v-sa_j).
\end{equation}
Each term is nonnegative and vanishes exactly when $v$ is one of the
oriented directions $a_j$. The constraint $s\ge0$ matters: minimizing
over all $s\in\R$ would also vanish on the opposite half-line. The
bounded vertex and component means only affect lower-order terms.

\paragraph{The far profile.}
Along a ray $a_j$ every other ray has a positive rate in
\eqref{eq:orientedrate}. With \eqref{eq:rayformula} this gives, uniformly
for bounded $y$,
\begin{equation}\label{eq:profile}
 q(p+Ra_j+Ey)=w_jf_{a_j,\eta}(y)+o(1),\qquad R\to\infty,
\end{equation}
with $f_{a,\eta}$ as in \eqref{eq:mixtureprofile}. The profile integrates
to one, and its first moment is $\sum_\ell v_\ell E^\top b_\ell=0$
because the mixture is centered.

\begin{lemma}[Identification]\label{lem:identification}
If two admissible models have equal spatial scores on a nonempty open
subset of $U$, their signatures are equal.
\end{lemma}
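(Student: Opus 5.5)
From equal scores on an open set I would first pass to equal intensities on all of $\R^D$ up to a constant factor, and then read the arms off the far field. Since $g=\nabla\log q$ and $q$ is real-analytic on $\R^D$ (Analyticity paragraph), equality of $g$ and $g'$ on a nonempty open set propagates to all of $\R^D$ by the identity theorem, because $\R^D$ is connected. Hence $\log q-\log q'$ is constant, and $q'=cq$ everywhere for some $c>0$.

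Next I would recover the oriented directions from the Gaussian decay rates. By \eqref{eq:orientedrate}, the rate $\lim_R -2\log q(Rv)/R^2$ vanishes exactly when $v\in\{a_1,\dots,a_K\}$. This is an oriented statement, because the constraint $s\ge0$ separates $a_j$ from $-a_j$. Since $q'=cq$, both models have the same rate function, so their sets of directions coincide, $K=K'$, and after relabelling $a_j=a'_j$. This step uses only that some slot has positive weight and that the eigenvalues of every $\Sigma_\ell$ lie in $[\lambda,\Lambda]$, so the rate is strictly positive off the directions.

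Then I would read the weights and the supporting lines off the far profile \eqref{eq:profile}. Fix an arm $a=a_j$ with basis $E$ of $a^\perp$. For each fixed $x_0$, evaluate at $x=x_0+Ra+Ey$ as $R\to\infty$: the same argument as for \eqref{eq:profile} gives $q(x_0+Ra+Ey)\to w_j f_{a,\eta}(y+E^\top(x_0-p))$, since only the transverse offset of $x_0$ relative to the line matters. Taking $x_0=0$ and writing $E^\top p=E^\top u_j$ (because $u_j=(I-aa^\top)p$), the limit is $w_jf(y-E^\top u_j)$, with $f$ integrating to one and having first moment zero. The same holds for the primed model, so $w_jf(\cdot-E^\top u_j)=c\,w'_jf'(\cdot-E^\top u'_j)$ as functions on $\R^{D-1}$. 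Integrating in $y$ gives $w_j=cw'_j$, and summing over $j$ with $\sum_j w_j=\sum_j w'_j=1$ forces $c=1$ and $w_j=w'_j$. Taking first moments gives $E^\top u_j=E^\top u'_j$, and since $u_j,u'_j\in a^\perp$ this means $u_j=u'_j$. So the signatures agree.

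The step I expect to need the most care is justifying the far-profile limit uniformly in bounded $y$ and then exchanging the limit with the mass and moment integrals in $y$. Pointwise convergence alone does not give equality of the integrals. I would avoid this issue by comparing the limiting functions themselves, since both models have the same limit of $q$ along these lines. The limits are explicit Gaussian mixtures (the $\Phi$ factor tends to $1$, and the other arms vanish by their positive rates), so their masses and means are computed exactly without any interchange. I would check that the other arms are negligible uniformly in $y$ on compacts; this suffices because the identity $w_jf=cw'_jf'$ holds pointwise for all $y$.
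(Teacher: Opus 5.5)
Your proposal is correct and follows the paper's proof step for step: analytic continuation of the equal scores to $q'=cq$ on all of $\R^D$, the oriented decay rates \eqref{eq:orientedrate} for the count and directions, and the mass and first moment of the explicit far-profile limit \eqref{eq:profile} for the weights and transverse offsets. One cosmetic slip: with $q'=cq$ the profile identity should read $w_j'f'(\cdot-E^\top u_j')=c\,w_jf(\cdot-E^\top u_j)$, with $c$ on the unprimed side, but this changes nothing because summing the masses forces $c=1$.
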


\begin{proof}
Equal scores make $\log q'-\log q$ constant on the open set, so $q'=cq$
there, and by analyticity on all of $\R^D$. Equation
\eqref{eq:orientedrate} then gives the same count and the same oriented
directions. For a matched ray, \eqref{eq:profile} for both models reads
$w_j'f'(y-E^\top p')=c\,w_jf(y-E^\top p)$. Integrating in $y$ gives
$w_j'=cw_j$, and summing over $j$ gives $c=1$. Taking first moments gives
$E^\top p'=E^\top p$, which is the equality of the transverse offsets.
Nothing about the mixture slots is used.
\end{proof}

\subsection{The analytic order criterion}\label{app:order}

\begin{proof}[Proof of Proposition~\ref{prop:ordercriterion}]
Put $f(z,z')=\norm{T(z)-T(z')}^2$ and $Q(z,z')=\norm{F(z)-F(z')}^2$, both
real-analytic in the pair. If the conclusion failed, there would be pairs
with $f>0$ and $Q/f\to0$. By compactness they accumulate at a pair
$(z_*,z_*')$ with $Q=0$, hence $F(z_*)=F(z_*')$ and, by the
identification hypothesis, $f=0$ there. The semianalytic set
$\{(z,z',s):f>0,\ s>0,\ Q<s^2f\}$ has $(z_*,z_*',0)$ in its closure,
so real-analytic curve selection \citep{bierstone88} gives an analytic
arc $(z(t),z'(t),s(t))$ that starts there and lies in the set for $t>0$.
Let $r$ and $m$ be the first orders at which the target and the
observation differ along the arc, so $\ord f=2r$ and $\ord Q=2m$. Since
$s(0)=0$, $\ord(s^2f)\ge2+2r$, and $Q<s^2f$ forces $2m>2r$. This
contradicts the hypothesis $r\ge m$. If instead $F(z(t))=F(z'(t))$ for all
$t$, identification gives $f\equiv0$, contradicting $f>0$.
\end{proof}

The criterion is applied in each local chart where the ray matching is
fixed. Pairs with different counts lie in finitely many compact strata
whose observation images are disjoint by Lemma~\ref{lem:identification},
and hence a positive distance apart.

\subsection[All-order rigidity and the inverse bound]{All-order rigidity and the proof of \eqref{eq:inverse}}

\begin{lemma}[All-order rigidity]\label{lem:mixturerigidity}
Let two admissible analytic parameter arcs have equal scores at $t=0$,
and let their score difference be $O(t^m)$ in $L^2(U)$. After the rays
are matched at $t=0$, their directions, weights and transverse offsets
differ by $O(t^m)$.
\end{lemma}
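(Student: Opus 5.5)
We prove the lemma by induction on the order, following the far-field argument sketched after Proposition~\ref{prop:ordercriterion}. Write the two arcs as $(\vartheta(t),\eta(t))$ and $(\vartheta'(t),\eta'(t))$. At $t=0$ the scores agree, so Lemma~\ref{lem:identification} gives equal signatures there and the rays can be matched. Expand the matched directions, weights and offsets in powers of $t$. Let $r$ be the first order at which some matched pair $a_j(t),a'_j(t)$, $w_j(t),w'_j(t)$ or $E^\top p(t),E^\top p'(t)$ differs. Suppose $r<m$; we derive a contradiction.

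\textbf{Step 1: from the score to the intensity.} An $O(t^m)$ score difference in $L^2(U)$ means that $\nabla(\log q'_t-\log q_t)$ vanishes to order $m-1$ on $U$. Hence $\log q'_t-\log q_t=c(t)+O(t^m)$ on $U$ for an analytic constant $c(t)$. The Taylor coefficients in $t$ are analytic in $x$, so each coefficient identity of order below $m$ extends from $U$ to all of $\R^D$. This is the step we expect to be the main obstacle. The coefficients of $q'_t-e^{c(t)}q_t$ must be controlled uniformly enough in $x$ that the identity on $U$ propagates to the far field, where the profiles are read. The $L^2(U)$ bound by itself says nothing at infinity. The fix is to note that each $t$-coefficient is an explicit finite combination of Lemma~\ref{lem:rayformula} terms and their parameter derivatives, which are entire in $x$. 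A vanishing coefficient on an open set therefore vanishes everywhere, and the claim is made coefficientwise, not as an estimate.

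\textbf{Step 2: read the order-$r$ coefficient in the far profile.} At $x=p+Ra_j+Ey$, use \eqref{eq:profile} together with its $t$-derivatives, which hold by the same Gaussian domination as in the analyticity paragraph. The order-$r$ coefficient of $q'_t-e^{c(t)}q_t$ along arm $j$ tends, as $R\to\infty$, to a fixed expression plus $R$ times a direction term. The fixed expression is
\[
\dot w\,f_0(y)-w\,\dot u^\top\nabla f_0(y)+w\,\dot f(y)-\dot c\,w f_0(y),
\]
where $\dot w,\dot u,\dot c$ are order-$r$ coefficient differences and $\dot f$ is the order-$r$ change of the profile coming from the blur. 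The direction term is $w R\,h^\top\nabla f_0(y)$, with $h\perp a_j$ the order-$r$ direction difference. All lower orders already agree, so only these linear terms appear. Cross terms from the blur can be rewritten as $\dot f$ evaluated at the common lower-order profile.

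\textbf{Step 3: kill each term.} The coefficient vanishes for every $R$, so the $R$-linear part vanishes, giving $h^\top\nabla f_0\equiv0$. Then $f_0$ is constant along $h$, and since $f_0$ is integrable, $h=0$. Next integrate the remaining identity in $y$. Both profiles have mass one for all $t$, so $\int\dot f=0$, and $\dot w=\dot c\,w$ follows. Summing over arms, with the weights summing to one, gives $\dot c=0$ and then $\dot w=0$. Finally take first moments. Profiles stay centered, so $\int y\dot f=0$, and the identity reduces to $w\,\dot u=0$, hence $\dot u=0$.

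All order-$r$ differences therefore vanish, which contradicts the choice of $r$. So $r\ge m$, and directions, weights and transverse offsets differ by $O(t^m)$. The blur slots are never separated or divided by, so the argument is unaffected when slots vanish, merge or are not identified.
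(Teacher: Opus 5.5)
Your proof uses the same mechanism as the paper's. It continues the identity coefficientwise from $U$ to $\R^D$, and your Step~1 handles the obstacle you flag correctly. It uses the $R$-linear part of a far-field coefficient to rule out a direction change, because an integrable profile cannot be constant along a line. And it uses mass, first moment and $\sum_jw_j=1$ to fix weights, offsets and the normalization.

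\textbf{The gap.} The step that does not follow as written is ``all lower orders already agree'' in Step~2. Your $r$ is the first order at which the \emph{geometric} quantities differ. The constant $c(t)$ and the profiles $f_j(t,\cdot)$, $f_j'(t,\cdot)$ are not part of that hypothesis. Nothing established so far prevents them from differing at orders $1,\dots,r-1$, and the blur parameters themselves genuinely can. If they did differ, the order-$r$ coefficient would contain products of lower-order coefficients such as $[e^{c}]_l\,[w_jf_j]_{r-l}$, so it would not have the linear form you display. Integrating it in $y$ gives $[w_j'-e^{c}w_j]_r=0$, and this yields $\dot w_j=0$ only if $e^{c}=1+O(t^r)$.

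\textbf{How to close it.} The claim is true but needs its own induction. Evaluate, as the paper does, at $x(t)=p(t)+Ra_j(t)+E(t)y$, which moves with the unprimed arc. Then the unprimed arm contributes $w_j(t)f_j(t,y)$ up to end factors. Up to terms of order $O(R^Ne^{-c_0R^2})$, $R$ enters only through $e(t)=E'(t)^\top a_j(t)$. For $k<r$ the quantities $A-I$, $d$, $e$ and $w_j'-w_j$ are all $O(t^r)$, where $A=E'^\top E$ and $d=E'^\top(p-p')$. So the order-$k$ coefficient is $[w_j(f_j'-e^{c}f_j)]_k(y)$ up to exponentially small terms. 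Since $w_j(0)>0$, this forces $f_j'-e^{c}f_j=O(t^r)$. Integrating in $y$ gives $e^{c}=1+O(t^r)$, and then $f_j'-f_j=O(t^r)$. Only after this does your expression hold. It also needs one extra $R$-free term, $w\,\nabla f_0(y)^\top\dot A\,y$, from the change of transverse frame. You dropped it, but it is harmless because it vanishes once $h=0$. With this patch your simultaneous induction is a valid, if longer, alternative.

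\textbf{How the paper avoids this.} The paper orders the argument so that none of this bookkeeping arises. It lets $r$ be the first order at which the \emph{directions alone} differ. The $R$-dependent part of the order-$r$ coefficient is then $-R\,w_j'(0)\,u^\top\nabla f_j'\bigl(0,A(0)y+d(0)\bigr)$, whatever the weights, offsets, blur and normalization do at lower orders. So $a_j'-a_j=O(t^m)$ follows at once. With $e=O(t^m)$ and $A=I+O(t^m)$, it reads $w_j'(t)f_j'(t,y+d(t))=c(t)w_j(t)f_j(t,y)+O(t^m)$, with its multiplicative $c$. It integrates this once against $1$ and against $y$, using only that both profiles have mass one and mean zero for every $t$. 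No induction over weights, offsets, blur or $c$ is needed.
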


\begin{proof}
Write $q=q(t,\cdot)$ and $q'=q'(t,\cdot)$ for the two intensities along
the arcs. By Lemma~\ref{lem:identification} the signatures agree at
$t=0$: the counts are equal and, after relabeling, $a_j'(0)=a_j(0)$,
$w_j'(0)=w_j(0)$ and the transverse offsets agree.

\emph{From scores to intensities.} Fix $x_*$ in the interior of $U$, put
$c(t)=q'(t,x_*)/q(t,x_*)$ and
\[
 H(t,x)=\log q'(t,x)-\log q(t,x)-\log c(t).
\]
Both intensities are positive and jointly analytic, so $H=\sum_kt^kH_k$
near $t=0$, uniformly on a neighborhood of $\bar U$, with every $H_k$
real-analytic on $\R^D$, $H_k(x_*)=0$ and $\nabla_xH=g'-g$. If $k_0$ is
the first index with $\nabla H_{k_0}\ne0$ in $L^2(U)$, then
$\norm{g'-g}_{L^2(U)}=t^{k_0}\{\norm{\nabla H_{k_0}}_{L^2(U)}+O(t)\}$.
The hypothesis therefore gives $\nabla H_k=0$ in $L^2(U)$ for all $k<m$,
and at every point of $U$ by continuity. Since $U$ is connected and
$H_k(x_*)=0$, $H_k=0$ on $U$, and on $\R^D$ by analytic continuation.
So $H=O(t^m)$ at every point, and $q'-cq=cq(e^H-1)$ has no coefficient
below degree $m$ anywhere on $\R^D$. This survives composition with an
analytic curve: $(q'-cq)(t,x(t))=O(t^m)$.

\emph{Localizing on a far arm.} Fix a matched ray $j$. Let $\mathcal E$
be an analytic map on the direction chart that sends a unit vector $a$
to an orthonormal basis of $a^\perp$, and put $E(t)=\mathcal E(a_j(t))$
and $E'(t)=\mathcal E(a_j'(t))$, so that $E(0)=E'(0)$. Evaluate both
intensities at $x(t)=p(t)+Ra_j(t)+E(t)y$, with $R>0$ and $y\in\R^{D-1}$
bounded. By \eqref{eq:rayformula}, the matched unprimed ray contributes
$w_j(t)f_j(t,y)$, where $f_j(t,\cdot)=f_{a_j(t),\eta(t)}$, times its
end factor $\Phi$. The matched primed ray is centered on its own line:
its transverse argument is $E'(t)^\top\{x(t)-p'(t)\}=A(t)y+d(t)+Re(t)$,
with
\[
 A=E'^\top E,\qquad d=E'^\top(p-p'),\qquad e=E'^\top a_j,
\]
so it contributes $w_j'(t)f_j'(t,A(t)y+d(t)+Re(t))$ times its end
factor. Because $e(0)=0$, the $k$-th Taylor coefficient of this term is
a polynomial in $R$ of degree at most $k$, whose coefficients involve
derivatives of $f_j'$ at the $R$-free point $(0,A(0)y+d(0))$. Every
other oriented ray of either model, the opposite one included, has a
positive constrained rate in \eqref{eq:orientedrate} at $a_j(0)$, and
both end factors differ from one by Gaussian tails in $R$.
Differentiating in the parameters only adds polynomial factors in $R$
and $r$, and bounded mixture shifts do not change the leading rate.
Hence, for each order $k$, the $k$-th coefficient of
$(q'-cq)(t,x(t))$ equals the $k$-th coefficient of
\begin{equation}\label{eq:farpair}
 \Delta_R(t,y)=w_j'(t)\,f_j'\bigl(t,A(t)y+d(t)+Re(t)\bigr)
 -c(t)\,w_j(t)\,f_j(t,y)
\end{equation}
up to $O(R^Ne^{-c_0R^2})$, uniformly for bounded $y$. The constants depend
on $k$ and on the chart, not on a lower bound for the slot weights. By
the first step, every coefficient of $\Delta_R$ below degree $m$ is
$O(R^Ne^{-c_0R^2})$.

\emph{Directions.} Suppose that $a_j'(t)-a_j(t)=t^rh+O(t^{r+1})$ with
$h\ne0$ and $r<m$. Differentiating $|a_j'|^2=|a_j|^2=1$ gives
$h\perp a_j(0)$, so $u=E(0)^\top h\ne0$. Since $E'^\top a_j'=0$,
\[
 e(t)=E'(t)^\top\{a_j(t)-a_j'(t)\}=-t^ru+O(t^{r+1}).
\]
So $Re(t)$ first enters at order $r$, and there only linearly. In the
order-$r$ coefficient of $\Delta_R$ the only part that depends on $R$ is
\[
 -R\,w_j'(0)\,u^\top\nabla f_j'\bigl(0,A(0)y+d(0)\bigr);
\]
the rest of that coefficient, including all of $c\,w_jf_j$, is free of
$R$. Divide the order-$r$ coefficient by $R$ and let $R\to\infty$. As
$w_j'(0)>0$, the derivative $u^\top\nabla f_j'(0,\cdot)$ vanishes at
$A(0)y+d(0)$ for every $y$, and $y\mapsto A(0)y+d(0)$ is onto because
$A(0)=I$. So $f_j'(0,\cdot)$ is constant along every line in the
direction $u$. A positive integrable density cannot be: its integral
over $\R^{D-1}$ would be infinite. Hence $h=0$, a contradiction, and
$a_j'-a_j=O(t^m)$.

\emph{Weights and offsets.} Now $E'-E=\mathcal E(a_j')-\mathcal E(a_j)
=O(t^m)$, so $A=I+O(t^m)$ and $e=E'^\top(a_j-a_j')=O(t^m)$. Below degree
$m$ the coefficients of \eqref{eq:farpair} no longer depend on $R$, and
letting $R\to\infty$ gives, for every $y\in\R^{D-1}$,
\[
 w_j'(t)\,f_j'\bigl(t,y+d(t)\bigr)=c(t)\,w_j(t)\,f_j(t,y)+O(t^m).
\]
Each fixed $t$-derivative of either side is bounded by a polynomial in
$y$ times a Gaussian, uniformly for small $t$, so these coefficient
identities may be integrated in $y$ against $1$ and against $y$. Both
profiles integrate to one, so $w_j'-cw_j=O(t^m)$. Summing over $j$ with
$\sum_jw_j=\sum_jw_j'=1$ gives $c=1+O(t^m)$, and then
$w_j'-w_j=O(t^m)$. Both profiles have first moment zero because the
mixture is centered, so the first moment of the left side is
$-w_j'(t)d(t)$ and that of the right side vanishes; with $w_j'(0)>0$
this gives $d=O(t^m)$. The transverse offsets are $EE^\top p$ and
$E'E'^\top p'$, and
\[
 E'E'^\top p'-EE^\top p=-E'd+(E'E'^\top-EE^\top)p=O(t^m)
\]
because the vertices are bounded.
\end{proof}

\begin{proof}[Proof of \eqref{eq:inverse}]
Near any pair with equal scores the ray matching is fixed. The squared
uncapped target distance $f$ and the squared score distance
$Q=\int_U|g-g'|^2$ are analytic in the chart parameters, and
Lemma~\ref{lem:identification} gives the identification hypothesis of
Proposition~\ref{prop:ordercriterion}. Lemma~\ref{lem:mixturerigidity}
gives its order condition: along an arc whose scores differ at order $m$,
the targets differ at order at least $m$. Proposition~\ref{prop:ordercriterion}
therefore bounds the target distance by a multiple of the score distance
near every equal-score pair. Pairs with different counts are positively
separated in score, and away from equal-score pairs the ratio of the
capped distance to the score distance is bounded by compactness. This
proves \eqref{eq:inverse}.
\end{proof}

\subsection{From scores to cell probabilities}\label{app:pixels}

\begin{lemma}[Derivative domination]\label{lem:derivatives}
Let $\mathcal Z$ be compact semianalytic and $F(z,x)$ jointly
real-analytic near $\mathcal Z\times U$. For each spatial multi-index
$\alpha$ there is $C_\alpha<\infty$ with
$\norm{\partial_x^\alpha F(z,\cdot)}_{L^2(U)}\le
C_\alpha\norm{F(z,\cdot)}_{L^2(U)}$ for all $z\in\mathcal Z$.
\end{lemma}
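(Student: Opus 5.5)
The plan is to argue by contradiction with compactness and real-analytic curve selection, following the same pattern as the proof of Proposition~\ref{prop:ordercriterion}. Put
\[
Q_0(z)=\norm{F(z,\cdot)}_{L^2(U)}^2,\qquad Q_\alpha(z)=\norm{\partial_x^\alpha F(z,\cdot)}_{L^2(U)}^2 .
\]
Both should be real-analytic near $\mathcal Z$. The reason is that $F$ extends holomorphically to a complex neighborhood of $\mathcal Z\times U$. Then $\partial_x^\alpha F$ is jointly analytic as well. Moreover, $\int_U F(z,x)\overline{F(\bar z,x)}\dd x$ is holomorphic in $z$ and restricts to $Q_0$ on real $z$. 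The same holds for $Q_\alpha$.

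Suppose the bound fails. Then there are $z_n\in\mathcal Z$ with $Q_\alpha(z_n)>n^2Q_0(z_n)$. By compactness they accumulate at some $z_*$. If $Q_0(z_*)>0$, continuity of $Q_\alpha/Q_0$ near $z_*$ rules this out. So $F(z_*,\cdot)=0$ on $U$, and hence, by analyticity in $x$, on a neighborhood of $U$. Now consider the semianalytic set
\[
\{(z,s):z\in\mathcal Z,\ s>0,\ Q_0(z)<s^2Q_\alpha(z)\}.
\]
The point $(z_*,0)$ lies in its closure, using the pairs $(z_n,1/n)$. Curve selection \citep{bierstone88} then gives an analytic arc $(z(t),s(t))$ that starts at $(z_*,0)$ and lies in the set for $t>0$.

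Along the arc, $t\mapsto F(z(t),x)$ is analytic, uniformly for $x$ in a fixed complex neighborhood $V$ of $\bar U$. Write
\[
F(z(t),x)=\sum_{k\ge0}t^kG_k(x),
\]
with each $G_k$ holomorphic on $V$ and the series converging uniformly there for small $t$. If every $G_k$ vanished on $U$, then $F(z(t),\cdot)\equiv0$. That would give $Q_0=Q_\alpha=0$ along the arc, contradicting the strict inequality. So let $k$ be the first index with $G_k\ne0$ in $L^2(U)$. A nonzero holomorphic function cannot vanish almost everywhere on a set with interior, so in fact the earlier $G_i$ vanish on $V$. Then $Q_0(z(t))=t^{2k}\{\norm{G_k}^2_{L^2(U)}+O(t)\}$, and so $\ord Q_0=2k$.

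For the derivative, apply Cauchy estimates on $V$ to the remainder $\sum_{i>k}t^iG_i$. This gives
\[
\partial_x^\alpha F(z(t),\cdot)=t^k\partial^\alpha G_k+O(t^{k+1})
\]
uniformly on $U$, so $\ord Q_\alpha\ge 2k$. Since $s(0)=0$, $\ord(s^2Q_\alpha)\ge2k+2>\ord Q_0$. This contradicts $Q_0<s^2Q_\alpha$ for small $t>0$, and the bound follows.

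The step I expect to need the most care is the uniform power-series expansion along the arc on a complex neighborhood of $\bar U$, together with the Cauchy estimate for the remainder. This is what lets the derivative lose no order in $t$. It is routine once the holomorphic extension on a product neighborhood is fixed, which is possible by compactness of $\mathcal Z\times\bar U$.
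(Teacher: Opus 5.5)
Your proposal is correct and follows the paper's proof. Both argue by curve selection in $\{s>0,\ Q_0<s^2Q_\alpha\}$, take the first nonvanishing Taylor coefficient of $F$ along the arc to get $\ord Q_0=2k$ and $\ord Q_\alpha\ge 2k$, and contradict $Q_0<s^2Q_\alpha$; you add detail the paper leaves implicit, namely the holomorphic extension that makes $Q_0,Q_\alpha$ analytic and the Cauchy estimates showing that spatial differentiation cannot lower the order in $t$.
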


\begin{proof}
The squared norms $Q_0(z)$ and $Q_\alpha(z)$ are analytic. If
$Q_\alpha/Q_0$ were unbounded, curve selection in
$\{Q_\alpha>0,\ s>0,\ Q_0<s^2Q_\alpha\}$ would give an analytic arc
ending at $s=0$. Along it, the first nonzero Taylor coefficient of $F$
has some order $m$, and spatial differentiation cannot create a
nonzero coefficient of lower order. Hence $\ord Q_0=2m$ and
$\ord Q_\alpha\ge2m$, which is incompatible with $Q_0<s^2Q_\alpha$. If
all coefficients vanish, $F$ is zero along the arc, contrary to
$Q_\alpha>0$.
\end{proof}

\begin{proof}[Proof of \eqref{eq:bininverse}]
Apply the lemma to differences of normalized densities on each pair of
count strata. Let $F=\pi-\pi'$, $m=\inf\pi>0$ and $G=\sup|\nabla\pi|<\infty$
over the compact class and window. From
$g-g'=\nabla F/\pi-F\nabla\pi'/(\pi\pi')$ and \eqref{eq:inverse},
\[
 d_T\le C_g\bigl(D_1/m+G/m^2\bigr)\norm F_2,
\]
where $D_1$ is the domination constant for first derivatives. Choose an
integer $s>D/2+1$. Sobolev embedding on the cube and the lemma for all
derivatives up to order $s$ give $\norm{\nabla F}_\infty\le B\norm F_2$
with one finite $B$. For a partition into equal cubes of volume $v$ and
diameter at most $h$, the cell-average projection $\Pi_hF$ satisfies
$\norm{F-\Pi_hF}_2\le h\sqrt{|U|}\,B\norm F_2$. Take $h$ with
$h\sqrt{|U|}B\le1/2$; then $\norm F_2\le2\norm{\Pi_hF}_2
=2v^{-1/2}\norm{P-P'}_2$. One partition and one constant serve all the
finitely many pairs of strata. The analytic extension is that of $q/Z$
across the fixed window; no window indicator is differentiated.
\end{proof}

\subsection{Proof of Lemma~\ref{lem:invisible} and Proposition~\ref{prop:classification}}\label{app:classification}

\begin{proof}[Proof of Lemma~\ref{lem:invisible}]
The growth bound makes $\mu$ a tempered distribution, and
$q=\ph_\Sigma*\mu$ and $q'=\ph_{\Sigma+H}*\mu$ are smooth functions of
polynomial growth with $\widehat q=e^{-\xi^\top\Sigma\xi/2}\widehat\mu$ and
$\widehat{q'}=e^{-\xi^\top(\Sigma+H)\xi/2}\widehat\mu$. Both multipliers are
Schwartz functions because $\Sigma$ and $\Sigma+H$ are positive definite.
Hence $q=q'$ if and only if
$e^{-\xi^\top\Sigma\xi/2}\bigl(1-e^{-\xi^\top H\xi/2}\bigr)\widehat\mu=0$. Write
$1-e^{-u/2}=u\,m(u)$ with $m(u)=(1-e^{-u/2})/u$ for $u\ne0$ and $m(0)=1/2$;
$m$ is smooth and positive on $\R$. Tested against compactly supported
smooth functions, the identity may be divided by the smooth positive
function $e^{-\xi^\top\Sigma\xi/2}m(\xi^\top H\xi)$, so it is equivalent to
$(\xi^\top H\xi)\widehat\mu=0$ as a distribution. The left side is a
tempered distribution, and a tempered distribution that vanishes on
compactly supported test functions vanishes, so the condition is
$(\xi^\top H\xi)\widehat\mu=0$ in the tempered sense, which is the Fourier
transform of $-H:\nabla^2\mu=0$. No exponentially growing function ever
multiplies a tempered distribution in this argument. Conversely, if
$H:\nabla^2\mu=0$, every $\Sigma+\tau H$ with $\tau\in[0,1]$ is positive
definite, and
$\partial_\tau(\ph_{\Sigma+\tau H}*\mu)=\tfrac12(H:\nabla^2\ph_{\Sigma+\tau H})*\mu
=\tfrac12\ph_{\Sigma+\tau H}*(H:\nabla^2\mu)=0$, by the heat equation of
the Gaussian in its covariance.
\end{proof}

\begin{proof}[Proof of Proposition~\ref{prop:classification}]
The intensities grow at most polynomially, so they have Fourier
transforms as tempered distributions. Lemma~\ref{lem:identification}
already identifies the directions, weights and transverse offsets, and
two nonparallel supporting lines fix the vertex. For one Gaussian the far
profile of a ray is $w\,\ph_{E^\top\Sigma E}(E^\top(x-p))$.

\emph{One line.} On a single supporting line, write
$\Sigma=\left(\begin{smallmatrix}s&c^\top\\c&C\end{smallmatrix}\right)$
in longitudinal and normal coordinates, $\ell=C^{-1}c$ and
$v=s-c^\top C^{-1}c>0$. If the two opposite weights are $w_+$ and $w_-$
(zero for a missing arm), Gaussian conditioning gives
\begin{equation}\label{eq:linefactor}
 q(t,z)=\ph_C(z-p_\perp)\bigl[w_-+(w_+-w_-)\,
 \Phi\{(t-p_\parallel-\ell^\top(z-p_\perp))/\sqrt v\}\bigr].
\end{equation}
If $w_+\ne w_-$, the variation in $t$ and $z$ identifies $\ell$,
$p_\parallel$ and $v$ once the normal parameters are known. If
$w_+=w_->0$ the line is a constant full line: $p_\parallel$ does not
enter, and only the normal block $C$ of the covariance matters.

\emph{Which $H$ are possible.} With the source fixed, Lemma~\ref{lem:invisible} reduces the question to $H:\nabla^2\mu=0$. In a tube around one supporting line, away
from the vertex, arbitrary normal test functions force
$P_{a^\perp}HP_{a^\perp}=0$. So the quadratic form $\xi^\top H\xi$
vanishes on the hyperplane $a^\perp$ and is divisible by its defining
linear form. Three distinct supporting directions give three distinct
linear factors, which forces $H=0$. For two nonparallel directions $a,b$
the only possibility is $H=t(ab^\top+ba^\top)$. Writing
$\Delta_a,\Delta_b$ for the positive-minus-negative weight of each line,
\[
 H:\nabla^2\mu=2t\,(\Delta_aD_b+\Delta_bD_a)\,\delta_p
\]
as distributions, and linear independence forces $\Delta_a=\Delta_b=0$
whenever $t\ne0$: both lines are balanced. Conversely two balanced lines
annihilate this derivative. For one balanced line, a vanishing normal
block suffices and leaves $D$ covariance directions free. Unbalanced
lines were already identified through \eqref{eq:linefactor}.
\end{proof}

\subsection{Proof of Theorem~\ref{thm:statistics}}\label{app:minimax}

\emph{Upper bound.} The image of $\Ccal$ in cell probabilities is
compact, so it has a finite Euclidean $1/n$-net. Fix a parameter
representative for each net point and a deterministic order, and return
the representative nearest to the empirical frequencies $\widehat P$.
This rule is measurable, and
$\norm{P_{\widehat\xi}-P_0}_2\le2\norm{\widehat P-P_0}_2+1/n$. For
multinomial frequencies,
$\E\norm{\widehat P-P_0}_2^2=(1-\norm{P_0}_2^2)/n\le1/n$. Jensen's
inequality and \eqref{eq:bininverse} give \eqref{eq:risk}.

\emph{Lower bound.} Take one ray with vertex $0$, weight one, an
admissible isotropic Gaussian, and direction
$a(t)=(\cos t,\sin t,0,\dots,0)$. This submodel lies in every nonempty
class, since the separation condition is empty for one ray. Its signature
distance from $t=0$ is $2\sin(|t|/2)$, of order $|t|$. On the compact
window $\pi_t$ is positive with a bounded parameter derivative, so
$\log u\le u-1$ gives $\KL(\pi_0\Vert\pi_t)\le Ct^2$. With $t=b/\sqrt n$
and a small fixed $b$, Pinsker's inequality bounds the total variation of
the $n$-fold products by $1/4$, and the two-point testing inequality
\citep{tsybakov09} gives worst-case risk at least $c/\sqrt n$, even for
exact positions. Counts are a function of the positions and cannot do
better.

\emph{Conditioning on the total.} With exposure $A$ unrestricted, put
$\xi=AZ$. Up to factors that do not involve the model, the Poisson
likelihood is $e^{-\xi}\xi^N\prod_{i\le N}\pi(X_i)$, and every $\xi>0$ is
allowed for every shape. Maximizing over $\xi$ gives $\widehat\xi=N$ when
$N>0$ and leaves exactly the conditional likelihood of the positions; for
$N=0$ the supremum is approached as $A\downarrow0$ and carries no shape
information. This equivalence uses the free exposure and does not hold
under a calibrated or shared exposure. \hfill$\square$

\subsection{Proof of Proposition~\ref{prop:crossing}}\label{app:crossinglower}
Put $H=ab^\top+ba^\top$,
$m_\Sigma=\min\{\lambda_{\min}(\Sigma)-\lambda,\ \Lambda-\lambda_{\max}(\Sigma)\}>0$
and $h_0=m_\Sigma/\{2(1+|a^\top b|)\}$. Since
$\norm H_{\rm op}=1+|a^\top b|$, every $\Sigma+hH$ with $|h|\le h_0$ is
admissible. Let $R_t$ rotate the plane of $a,b$ by the angle $t$ and fix
its orthogonal complement, and let the model $(t,h)$ have the four
directions rotated by $R_t$, the same vertex and weights, and covariance
$\Sigma_{t,h}=R_t(\Sigma+hH)R_t^\top$. Rotation keeps the weights and the
separations. For each fixed $t$, Proposition~\ref{prop:classification}
makes the law independent of $h$.

\emph{Covariance.} At $t=0$ the laws for $h=\pm h_0$ are equal, so any
estimator has the same distribution under both, and the triangle
inequality gives maximum Frobenius risk at least
$\tfrac12\norm{2h_0H}_{\rm F}=h_0\norm H_{\rm F}$, with
$\norm H_{\rm F}=\sqrt{2+2(a^\top b)^2}$.

\emph{Geometry.} Put $A_*=(\sqrt DR_0+|p|)^2/\lambda$. The intensity of a
balanced line is a transverse Gaussian whose rotated precision has
operator norm at most $1/\lambda$; differentiating in $t$ gives
$|\partial_t\log q_t(x)|\le A_*$ on $U$, and the positive sum of two
lines inherits the bound. After window normalization the parameter score
$\partial_t\log\pi_t$ is $\partial_t\log q_t$ minus its mean under
$\pi_t$, so its $L^2(\pi_t)$ norm is at most $A_*$ and
$\norm{\partial_t\sqrt{\pi_t}}_2\le A_*/2$. The Hellinger distance between
$\pi_0$ and $\pi_t$ is then at most $A_*|t|/2$, and the product affinity
gives
\[
 \TV\bigl(\pi_0^{\otimes n},\pi_t^{\otimes n}\bigr)\le\sqrt n\,A_*|t|/2.
\]
Choose $t_*>0$ so small that the identity matching of the four
directions is optimal and $4\sin(t_*/2)<1$, and put
$t_n=\min\{t_*,1/(2A_*\sqrt n)\}$. The models $(0,-h_0)$ and
$(t_n,+h_0)$ then have product laws within total variation $1/4$, and
their four directions alone give $d_T\ge4\sin(t_n/2)\ge4t_n/\pi$. The
two-point inequality gives
\[
 \inf_{\widehat S}\max_{i=0,1}\E_id_T(\widehat S,S_i)\ge\frac{3t_n}{4\pi},
\]
which for large $n$ equals
$3\lambda/\{8\pi(\sqrt DR_0+|p|)^2\sqrt n\}$. The two alternatives
approach different covariance representatives, $\Sigma-h_0H$ and
$\Sigma+h_0H$; no pointwise lower bound for a single known source is
claimed. \hfill$\square$

\subsection{Four quadrants: proof of Theorem~\ref{thm:fourcells}}\label{app:fourcells}
\paragraph{Setting.}
Rays along $e_1,-e_1,e_2,-e_2$ carry weights
$(1/4+\tau/2,\,1/4-\tau/2,\,1/4,\,1/4)$, with $0\le\tau\le\bar\tau<1/2$,
under one Gaussian with unit marginal variances and correlation
$|\rho|\le r\le0.8$, $r>0$. We observe counts in the four quadrants of
$[-3,3]^2$. With $\ph,\Phi$ the standard normal density and distribution
function,
\begin{equation}\label{eq:calibratedq}
 q_{\tau,\rho}(x,y)=\frac{\ph(x)+\ph(y)}4
 +\tau\ph(y)\Bigl[\Phi\Bigl(\frac{x-\rho y}{\sqrt{1-\rho^2}}\Bigr)-\frac12\Bigr],
 \qquad Z=3\{2\Phi(3)-1\}.
\end{equation}
Let
\[
 d(\rho)=Z^{-1}\int_0^3\!\!\int_0^3\ph(y)
 \Bigl[\Phi\Bigl(\frac{x-\rho y}{\sqrt{1-\rho^2}}\Bigr)-\frac12\Bigr]\dd x\dd y,
\]
$a(\rho)=2\{d(\rho)+d(-\rho)\}$ and $b(\rho)=2\{d(\rho)-d(-\rho)\}$.
For signs $s_x,s_y\in\{-1,1\}$ the cell law is
\begin{equation}\label{eq:quadrants}
 P_{s_x,s_y}=\tfrac14\bigl[1+\tau\{a(\rho)s_x+b(\rho)s_y\}\bigr],
 \qquad \mu:=\E(s_x,s_y)=\tau(a,b).
\end{equation}
The balanced term integrates to $Z$; central reflection changes the sign
of the imbalance term, whose integral over the square vanishes. The
$(+,+)$ cell carries $\tau d(\rho)$, the $(+,-)$ cell $\tau d(-\rho)$ after
$y\mapsto-y$, and reflection gives the other two.

\paragraph{Estimator.}
Let $s_*(\rho)=-b(\rho)/a(\rho)$, which is odd and strictly increasing,
and $\mathcal K_r=\{v:v_1\ge0,\ |v_2|\le s_*(r)v_1\}$. Project the
empirical sign mean $\widehat\mu$ onto $\mathcal K_r$ to get $v$. If $v=0$
return $(\widehat\tau,\widehat\rho)=(0,0)$; otherwise
\begin{equation}\label{eq:estimator}
 \widehat\rho=s_*^{-1}(-v_2/v_1),\qquad
 \widehat\tau=\clip_{[0,\bar\tau]}\{v_1/a(\widehat\rho)\}.
\end{equation}
The inverse is a bracketed scalar solve. With no photons the procedure
returns the whole parameter set.

\paragraph{Bounds on the calibration functions.}
Let $s=\sqrt{1-\rho^2}$ and $D_0=2\Phi(3)-1$. Direct differentiation
gives $\partial_\rho\{\ph(y)\Phi((x-\rho y)/s)\}=\partial_y\ph_{\Sigma_\rho}(x,y)$;
integrating first in $y$ over $[0,3]$ and then in $x$,
\begin{equation}\label{eq:dprime}
 d'(\rho)=\frac{\ph(3)[\Phi(3(1-\rho)/s)-\Phi(-3\rho/s)]
 -\ph(0)[\Phi(3/s)-1/2]}{Z}.
\end{equation}
Also $a(0)=D_0+\frac23\{\ph(3)-\ph(0)\}$ and $b(0)=0$. Put
$A_1=2\ph(3)/Z$, $B_1=2\ph(0)/Z$ and $\beta_1=\{2\ph(0)D_0-2\ph(3)\}/Z$.
The bracketed differences in \eqref{eq:dprime} lie in $[0,1]$, the sum of
the two upper boundary terms at $\rho$ and $-\rho$ is
$\ph(3)\{\Phi(3(1-\rho)/s)+\Phi(3(1+\rho)/s)-1\}\in[0,\ph(3)]$, and
$\Phi(3/s)-1/2\ge D_0/2$. Hence
\begin{equation}\label{eq:abintervals}
 |a'|\le A_1,\quad -B_1\le b'\le-\beta_1<0,\quad |b|\le rB_1,\quad
 a(0)-rA_1\le a\le a(0)+rA_1 ,
\end{equation}
with the worst case at $r=4/5$. Rational enclosures of $\pi$, $\ph(0)$,
$e^{-9/2}$ and $\Phi(3)$ (Machin's series for $\pi$, bisection for
$(2\pi)^{-1/2}$, a positive Taylor tail for the exponential and an
alternating integrated series for $\Phi(3)$) give, on $|\rho|\le4/5$,
\begin{equation}\label{eq:calibratedbounds}
 .731<a<.737,\qquad|a'|<.003,\qquad|b|<.214,\qquad-b'>.262,
\end{equation}
as strict inequalities between terminating rationals, not floating-point
samples.

\paragraph{Proof of Theorem~\ref{thm:fourcells}.}
Since $a$ is even and $b$ odd, $s_*$ is odd, with $|s_*|<.214/.731<.293$
and $s_*'=(-b'a+ba')/a^2>\{(.731)(.262)-(.214)(.003)\}/(.737)^2>.350$, so
the inverse in \eqref{eq:estimator} exists on the whole cone. For $v\ne0$
put $z=-v_2/v_1$, $\rho_v=s_*^{-1}(z)$ and $F(v)=v_1/a(\rho_v)$. Then
$\nabla F=\bigl(1/a+a'z/(a^2s_*'),\ a'/(a^2s_*')\bigr)$ and
$\norm{\nabla F}_2\le1/.731+(.003)\sqrt{1+.293^2}/\{(.731)^2(.350)\}<1.4$.
The cone is convex, so integrating along segments and extending
continuously to the vertex shows that $F$ is $1.4$-Lipschitz; $F$ is
nonnegative and positively homogeneous, and projection onto the cone and
clipping to $[0,\bar\tau]$ do not increase the error against a true point
in those sets. For $\mu=\tau(a(\rho),b(\rho))$ and $v\ne0$, the identity
$\mu_1\{s_*(\rho_v)-s_*(\rho)\}=(\mu_2-v_2)+s_*(\rho_v)(\mu_1-v_1)$ (up to
a common sign), with $s_*'>.350$ and $\mu_1\ge.731\tau$, gives
$\tau|\rho_v-\rho|\le\sqrt{1+.293^2}\norm{v-\mu}_2/\{(.731)(.350)\}
<4.1\norm{v-\mu}_2$; at $v=0$ the same follows from
$|\rho|\le|s_*(\rho)|/.350$ and $|\mu_2|=\mu_1|s_*(\rho)|$. The sign
observations have squared norm two, so
$\E\norm{\widehat\mu-\mu}_2^2=(2-\norm\mu_2^2)/n\le2/n$; projection, the
two deterministic bounds and Jensen's inequality give
\eqref{eq:fourrisk}--\eqref{eq:rhorisk}. For two calibrated models the
sign matrix $S$ of the four cells has $S^\top S=4I_2$, so
$\norm{\mu-\mu'}_2=2\norm{P-P'}_2$; the signature changes only in the two
unequal weights, so $d_T=|\tau-\tau'|/\sqrt2$ and
\begin{equation}\label{eq:fourinverse}
 d_T\le1.4\sqrt2\,\norm{P-P'}_2 .
\end{equation}
A partition refining the quadrants inherits the estimator by summation;
for an aligned $16\times16$ grid the summation matrix has operator norm
eight, giving the constant $8(1.4\sqrt2)<15.84$ in this family. This does
not cover general geometry: on a generic fixed-$K$ class with two
nonparallel rays the signature has local dimension
$K(D-1)+(K-1)+D=D(K+1)-1$, a Lipschitz inverse from a differentiable cell
map needs rank at least that, and $M$ cells give rank at most $M-1$, so
$M\ge D(K+1)$ is necessary (for one ray, $M\ge2D-1$). Four cells cannot
resolve generic planar four-arm geometry.

\paragraph{Lower bounds near balance.}
With $\pi_0=(\ph(x)+\ph(y))/(4Z)$ on the square,
\begin{equation}\label{eq:localdensity}
 \pi_{\tau,\rho}=\pi_0(1+\tau h_\rho),\qquad
 h_\rho=\frac{4\ph(y)}{\ph(x)+\ph(y)}
 \Bigl[\Phi\Bigl(\frac{x-\rho y}{\sqrt{1-\rho^2}}\Bigr)-\frac12\Bigr],
\end{equation}
with $\E_0h_\rho=0$, $|h_\rho|\le2$ and $1+\tau h_\rho\ge\kappa_0=1-2\bar\tau>0$.
From $\log u\le u-1$,
\begin{equation}\label{eq:klbound}
 \KL(\pi_{\tau,\rho}\Vert\pi_{\tau',\rho'})\le\kappa_0^{-1}
 \E_0(\tau h_\rho-\tau'h_{\rho'})^2 .
\end{equation}
With $c_0=\sqrt{\kappa_0}/(12\sqrt2)$ and large $n$, compare
$(c_0/\sqrt n,-r)$ with $(2c_0/\sqrt n,r)$: the product
Kullback--Leibler divergence is at most $1/8$ and the total variation at
most $1/4$, so the imbalance risk is at least $3c_0/(16\sqrt n)$ and, on
this pair, the correlation risk at least $3r/8$. For a known imbalance
$a>0$, $|\partial_\rho h_\rho|\le H_*=4\ph(0)\,3(1+r)/(1-r^2)^{3/2}$, and
two correlations separated by
$\Delta_n(a)=\min\{2r,\sqrt{\kappa_0}/(\sqrt8H_*a\sqrt n)\}$ give risk at
least $3\Delta_n(a)/16$. With \eqref{eq:rhorisk} this proves the two
minimax rates. At $\tau=0$ all correlations give one law, and the two
endpoints force risk at least $r$, attained by the estimate zero.

\paragraph{Confidence sets.}
With $e_{n,\alpha}=2\sqrt{\log(4/\alpha)/n}$, Hoeffding's inequality and a
union bound give $\Pr\{\norm{\widehat\mu-\mu}_2>e_{n,\alpha}\}\le\alpha$, so
\begin{equation}\label{eq:confidence}
 \bigl\{(\tau,\rho):\norm{\tau(a(\rho),b(\rho))-\widehat\mu}_2\le e_{n,\alpha}\bigr\}
\end{equation}
covers with probability at least $1-\alpha$ for every parameter,
including $\tau=0$, where it may contain the whole correlation interval.
For fixed $\rho$, the admissible $\tau$ solve
$\norm{m_\rho}_2^2\tau^2-2(\widehat\mu^\top m_\rho)\tau+\norm{\widehat\mu}_2^2
-e_{n,\alpha}^2\le0$ with $m_\rho=(a(\rho),b(\rho))$.

\subsection{Counting arms}\label{app:selection}

\begin{proof}[Proof of Proposition~\ref{prop:count}]
Write $\ell_n(P)=-\sum_i\widehat P_i\log P_i$, let $P_0$ be the true cell
law, $e=\norm{\widehat P-P_0}_2$ and $d=\norm{P-P_0}_2$. Then
\[
 \ell_n(P)-\ell_n(P_0)=\KL(P_0\Vert P)
 -\langle\widehat P-P_0,\log P-\log P_0\rangle
 \ge\tfrac12d^2-ed/a_*,
\]
by Pinsker's inequality and because $\log$ is $1/a_*$-Lipschitz on
$[a_*,1]$. For a wrong count $d\ge\Delta$. On the event
$e\le a_*\Delta/4$ the right side is at least $d^2/4\ge\Delta^2/4$, which
exceeds the penalty range plus $\epsilon_n$, so no wrong count is within
$\epsilon_n$ of the infimum. The frequencies satisfy $\E e\le n^{-1/2}$;
replacing one observation changes $e$ by at most $\sqrt2/n$, and the
bounded-differences inequality gives $\Pr(e>n^{-1/2}+u)\le e^{-nu^2}$,
which proves the count bound. When the count is right the penalty cancels
against $P_0$, so $d^2/2\le ed/a_*+\epsilon_n$ and
$d\le2e/a_*+\sqrt{2\epsilon_n}$. Theorem~\ref{thm:inverse} on this event,
loss at most one on its complement, and $\E e\le n^{-1/2}$ give the risk
bound. For penalties $d_K\log n/(2n)$ the range tends to zero and the
count error is eventually exponentially small.
\end{proof}

This argument uses only the separation of the images, not a regular
Laplace approximation, which can fail for singular models \citep{drton17}.
The executed fit uses the BIC of \citet{schwarz78} on positions, with the
nominal parameter count; the proposition explains why a vanishing
penalty works, while the multistart search supplies no bound on
$\epsilon_n$ and the executed window is a disk, not the partition of
Theorem~\ref{thm:inverse}.

\paragraph{The agreement rule.}
The executed count $G$ equals the BIC count $B$ unless the Hessian ridge
detector and the line fit return one count $H=L<B$, in which case $G$ is
the likelihood fit with that count. For any target count $K_*$,
\begin{equation}\label{eq:guardcontain}
 \{G\ne K_*\}\subseteq\{B\ne K_*\}\cup\{H=L<K_*\}.
\end{equation}
The two detectors need not be independent, but a shared undercount
cannot be ignored. The line fit counts peaks of an angular histogram of
localizations in the annulus from $0.35R$ to $R$ about the window
center: 90 bins, smoothed by a fixed nonnegative kernel $W$ with unit
sum, a local maximum $i$ accepted when
$z_i=h_i-0.25\max_jh_j-0.75\,\mathrm{med}_jh_j>0$, then peaks within
$25^\circ$ of a higher one suppressed. Let $b_i$ be the population bin
probabilities, $h=Wb$, and $I$ the accepted maxima before suppression.
Suppose $|I|\ge K_*$, the peaks in $I$ are strict and more than
$25^\circ$ apart, and
\begin{equation}\label{eq:histmargin}
 m=\min\Bigl\{\min_{i\in I}\{h_i-h_{i-1},h_i-h_{i+1},z_i\},\
 \min_{i\notin I}\max\{h_{i-1}-h_i,h_{i+1}-h_i,-z_i\}\Bigr\}>0,
\end{equation}
with circular bin indices.

\begin{proposition}[Agreement rule under a count margin]\label{prop:guardmargin}
From $n$ independent positions, \eqref{eq:histmargin} gives
$\Pr(L<K_*)\le180\exp(-nm^2/8)$. Suppose also that the event densities of
each count form a compact class with densities bounded below, and that
the observation law $P$ has a population log-likelihood margin
\[
 \sup_{q\in\mathcal Q_{K_*}}P\log q-\max_{k\ne K_*}\sup_{q\in\mathcal Q_k}P\log q
 \ge\gamma>0 .
\]
If every saved per-count fit is within $a_n=o(n)$ of its global empirical
optimum and the penalty range is $o(n)$, then for all large $n$,
$\Pr(G\ne K_*)\le C_\gamma e^{-c_\gamma n}+180e^{-nm^2/8}$, and the risk of
any loss bounded by one exceeds that of the saved $K_*$ fit by at most
this amount.
\end{proposition}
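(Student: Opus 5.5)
The proof has two independent parts: a concentration bound for the line-fit count $L$, and a uniform law of large numbers for the BIC count $B$. They are joined through \eqref{eq:guardcontain}. That inclusion gives $\Pr(G\ne K_*)\le\Pr(B\ne K_*)+\Pr(H=L<K_*)$. We then discard $H$ entirely and bound the last term by $\Pr(L<K_*)$, so no independence between the two detectors is ever needed.

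\emph{The histogram term.} The empirical smoothed histogram is $\widehat h=W\widehat b$, where $\widehat b$ holds the 90 empirical bin frequencies. Because $W$ is nonnegative with unit sum, each $\widehat h_i$ is an average of $n$ independent variables taking values in $[0,1]$. Hoeffding's inequality gives $\Pr(|\widehat h_i-h_i|>m/4)\le2e^{-nm^2/8}$, and a union over the 90 bins yields $\Pr(\max_i|\widehat h_i-h_i|>m/4)\le180e^{-nm^2/8}$.

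On the complementary event we check that the accepted maxima are preserved. Each difference $h_i-h_{i\pm1}$ moves by at most $m/2$. The statistic $z_i$ is a difference of $h_i$ and a combination of $\max$ and $\mathrm{med}$, each of which is $1$-Lipschitz in sup norm. Since the weights satisfy $1-0.25-0.75=0$ in absolute total $2$, $z_i$ moves by at most $2\cdot m/4=m/2<m$. Hence every $i\in I$ remains a strict local maximum with $\widehat z_i>0$. The peaks of $I$ are more than $25^\circ$ apart, so suppression removes none of them, and the count satisfies $L\ge|I|\ge K_*$. The non-accepted bins also stay rejected, but only the lower bound on $L$ is needed.

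\emph{The likelihood term.} For each count $k$, the class $\mathcal Q_k$ is compact with densities bounded below by some $c>0$ and above on the window. The function $\log q$ is therefore bounded and Lipschitz in the parameter. A finite-net argument combined with Hoeffding's inequality then gives
\[
\Pr\Bigl(\max_k\sup_{q\in\mathcal Q_k}|P_n\log q-P\log q|>\gamma/4\Bigr)\le C_\gamma e^{-c_\gamma n}.
\]
This step of extracting a uniform exponential bound is the main obstacle. To get it, I would cover each $\mathcal Q_k$ by a $\gamma/16$-net in sup norm of $\log q$, which is finite by compactness and the Lipschitz property, and take a union bound over the net points.

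On this event, the saved $K_*$ fit has normalized log-likelihood at least $\sup_{\mathcal Q_{K_*}}P\log q-\gamma/4-a_n/n$. Every competitor count has at most $\max_{k\ne K_*}\sup P\log q+\gamma/4$. Since $a_n$ and the penalty range are $o(n)$, for large $n$ the gap $\gamma/2$ exceeds both $a_n/n$ and the penalty range divided by $n$, so $B=K_*$. Combining the two terms gives the stated bound on $\Pr(G\ne K_*)$.

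\emph{The risk claim.} On $\{G=K_*\}$ the reported fit is the saved $K_*$ fit, so for any loss $\ell\le1$,
\[
\E\ell(G\text{ fit})\le\E\ell(K_*\text{ fit})+\Pr(G\ne K_*).
\]
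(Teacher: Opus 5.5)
Your proof is correct and follows the paper's argument: Hoeffding's inequality with a union bound over the 90 bins at $u=m/4$, then a finite sup-norm net for $\log q$ giving $B=K_*$ for large $n$, then \eqref{eq:guardcontain}; the one difference is that you apply Hoeffding to the smoothed entries directly instead of to the raw bins followed by the sup-norm contraction of $W$, which gives the same constant $180$. One correction: the fact that rejected bins stay rejected is not dispensable, because it is exactly what makes suppression harmless (otherwise a newly accepted spurious peak that is higher than, and within $25^\circ$ of, two peaks of $I$ would suppress both and push $L$ below $|I|$), so that sentence should be used, as the paper does, rather than set aside.
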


\begin{proof}
Hoeffding's inequality and a union bound over the 90 bins give
$\Pr(\norm{\widehat b-b}_\infty>u)\le180e^{-2nu^2}$. Smoothing with $W$
does not increase the sup norm, and each comparison in
\eqref{eq:histmargin}, $z_i$ included, moves by at most $2u$. With
$u=m/4$ every accepted and rejected peak keeps its status, and the
separation makes suppression irrelevant; this is the first bound.
Compactness and the density floor give finite sup-norm nets for $\log q$;
Hoeffding's inequality on a fixed net and a fine enough net remainder give
$\Pr\{\sup_q|(P_n-P)\log q|>\gamma/4\}\le C_\gamma e^{-c_\gamma n}$. On the
complement a saved fit with the true count beats every other count by at
least $n\gamma/2-a_n$ before penalties, so $B=K_*$ for large $n$. Apply
\eqref{eq:guardcontain}; on $\{G=K_*\}$ the output is the saved $K_*$ fit.
\end{proof}

The margin cannot be dropped under arbitrary mismatch. Take an admissible
three-ray density $p_3$ and a separated fourth-ray response $r_4$ with
the same vertex and blur. The law
\begin{equation}\label{eq:samelawcount}
 P=(1-\varepsilon)p_3+\varepsilon r_4
\end{equation}
is a three-arm junction with an $\varepsilon$ share of structured
contamination, and it is also a four-arm junction whose fourth arm holds
an $\varepsilon$ share of the window, with weights inside the declared
floors. No observation confined to the window separates the two
descriptions at any sample size, so one of them has count error at least
$1/2$. Localizations beyond the window can separate them when a true arm
continues; Appendix~\ref{app:outcomes} tests exactly this.

\subsection{The two-versus-three-arm pair}\label{app:resolution}
Angles are in radians, $a(\theta)=(\cos\theta,\sin\theta)$, and every
decimal below is an exact terminating rational. Model A has two rays,
model B three:
\[
\begin{array}{c|c|c}
 &\text{A}&\text{B}\\ \hline
 (\theta_j)&(.002650992803,\ .449766929644)&(0,\ .251,\ .502)\\
 (w_j)&(.936667614136,\ .063332385864)&(11/12,\ 1/24,\ 1/24)\\
 p&(-.000480863669,\ .000704779337)&(0,0)
\end{array}
\]
Model B has covariance $(97/50)I$; model A has covariance
$\left(\begin{smallmatrix}1.939479076894&.000257587554\\
.000257587554&1.940885106126\end{smallmatrix}\right)$. Both blurs are
single centered Gaussians inside a two-slot class with the second slot
inactive. The class has $D=2$, $K\le3$, at most two slots, arm weights at
least $.04$, direction chords at least $.25$, $|p|\le.5$, slot means of
norm at most $.3$, spectra in $[.36,1.96]$, window $[-3,3]^2$ and $L=1$.
The shortest chord of B is $2\sin(.251/2)>.25$ and that of A exceeds
$.25$; the weights, principal minors and vertex bounds satisfy the
remaining inequalities directly. The counts differ, so $d_T=1$.

\paragraph{Integration.}
In a ray frame $(a,b)$ put $C=b^\top\Sigma b$, $c=a^\top\Sigma b$,
$V=\det\Sigma/C$, $z=x-p$ and $t=(a^\top z-c\,b^\top z/C)/\sqrt V$. The
intensity of one ray and its gradient are
\begin{equation}\label{eq:hardprofile}
 q_a=w\,\phi_C(b^\top z)\Phi(t),\qquad
 \nabla q_a=w\,\phi_C(b^\top z)\Bigl[-\frac{b^\top z}{C}\,b\,\Phi(t)
 +\phi(t)\frac{a-(c/C)b}{\sqrt V}\Bigr].
\end{equation}
Summing gives $q_A$ and $q_B$, each divided by its own $Z=\int_Uq$. With
integration error included,
\begin{equation}\label{eq:hardbounds}
\begin{gathered}
 \int_U\frac{(\pi_A-\pi_B)^2}{\pi_B}<1.556851\times10^{-8},\qquad
 \norm{g_A-g_B}_{L^2(U)}^2<8.324915\times10^{-6},\\
 \norm{P_A-P_B}_2<8.288917\times10^{-6},
\end{gathered}
\end{equation}
the last for $16\times16$ equal cells. An independent floating-point
quadrature (512 intervals per side, eight-point Gauss rule) gives
$1.5568505\times10^{-8}$, $8.3249123\times10^{-6}$ and
$8.2889160\times10^{-6}$ for the three quantities, just below the
enclosures. Each side of $U$ is divided into
256 intervals and tensor eight-point Gauss quadrature is applied on each
small square. For an integrand analytic on every one-coordinate complex
circle of radius $\rho=1/8$ with modulus at most $M$, the total error is
at most $2|U|\{(6/256)/\rho\}^{16}(8!)^4M/\{17\,(16!)^2\}$. Both
intensities are bounded below on the enlarged cube in $\R^2$,
$q_A>.00026119$ and $q_B>.00026336$. With $\lambda_0=1.93$, a
truncated-normal moment bound on the ray integral keeps their
one-coordinate complex continuations above $.7764$ times these bounds on
the circles. Gaussian integrals and Cauchy bounds then give $M<557.871$
for the chi-square integrand and $M<3.065\times10^7$ for the squared-score
integrand, so the quadrature errors are below $3.329\times10^{-17}$ and
$1.829\times10^{-12}$. The normalizers use the same rule with the bound on
derivatives of the entire Gaussian. The finite sums are evaluated with
outward rounding, and the trigonometric and Gaussian functions are
enclosed by rational Taylor remainders; the displayed bounds absorb both
arithmetic and integration error. The integration nodes are unrelated to
the 256 detector cells.

\paragraph{Consequences.}
Any score or pixel inverse on this class, applied to this pair, needs
$C_g>346.5$ and $C_{\rm pix}>120{,}643$ on this grid. For an arbitrary
partition, $v=P_A-P_B$ sums to zero, so $\norm v_2\le\norm v_1/\sqrt2$, and
data processing with Cauchy--Schwarz gives
$\norm v_1\le\norm{\pi_A-\pi_B}_1\le\sqrt{\chi^2(\pi_A\Vert\pi_B)}$; the
inverse constant therefore exceeds $\sqrt{2/(1.556851\times10^{-8})}>11{,}334$.
For $n$ positions the product chi-square divergence is
$(1+\chi^2)^n-1$, and the total variation of the product laws is at most
half its square root. Every count estimator therefore has maximum error at
least
\begin{equation}\label{eq:hardtesting}
 \frac12-\frac14\sqrt{\exp(1.556851\times10^{-8}\,n)-1},
\end{equation}
which exceeds $.46868$ at $n=10^6$. A separate Pinsker bound shows that
error at most $.05$ under both models needs
$n>1.62/(1.556851\times10^{-8})$, more than 104 million positions: a
necessary count, not a sufficient one. The pair changes both the
covariance and the vertex, so the bound does not transfer to an
estimator that is told either.

A second mechanism works with known, fixed blur. Let $f_\theta$ be the
window intensity of a single ray at angle $\theta$ under one fixed
Gaussian and a common vertex, and let
\[
 \nu_{4,u}=\tfrac14(\delta_{-3u}+\delta_{-u}+\delta_u+\delta_{3u}),\qquad
 \nu_{3,u}=\tfrac{25}{82}\bigl(\delta_{-\sqrt{41/5}\,u}+\delta_{\sqrt{41/5}\,u}\bigr)
 +\tfrac{16}{41}\delta_0 .
\]
Their angular moments agree through degree five and differ by
$(144/5)u^6$ in degree six, so a Taylor expansion of the analytic ray
image gives $\int f_\theta\dd(\nu_{4,u}-\nu_{3,u})(\theta)
=(u^6/25)\,\partial_\theta^6f_0+O(u^8)$. Window normalization keeps an
$O(u^6)$ difference, and on a fixed window with positive density the
chi-square divergence is $O(u^{12})$. This describes merging arms; as
$u\to0$ it leaves every fixed separation class.

\subsection{Proof of Proposition~\ref{prop:background}, the disk window and the fitted class}\label{app:background}
\begin{proof}
\emph{Analyticity.} $\kappa$ enters linearly, so $q^\kappa$ is jointly
real-analytic, and $q^\kappa\ge\min q>0$ on $\Ccal^\kappa\times U$.
\emph{Decay away from the rays.} By \eqref{eq:orientedrate}, for every
unit $v$ that is not an oriented direction of the models involved,
$q(Rv)\le Ce^{-cR^2}$, and every fixed parameter derivative has a
polynomial-times-Gaussian envelope; since the directions are finitely
many and $D\ge2$, such $v$ exist. \emph{Identification.} Equal scores
on an open set give $q'+\kappa'=c(q+\kappa)$ on $\R^D$ by analyticity;
evaluating at $Rv$ and letting $R\to\infty$ gives $\kappa'=c\kappa$, hence
$q'=cq$ and, by Lemma~\ref{lem:identification}, equal signatures.
\emph{Rigidity.} Along analytic arcs with score difference $O(t^m)$, the
first step of Lemma~\ref{lem:mixturerigidity} applied to
$\log\{(q'+\kappa')/(c(q+\kappa))\}$ shows that every Taylor coefficient of
$\Delta=q'+\kappa'-c(q+\kappa)$ below degree $m$ vanishes on $\R^D$.
Evaluating a coefficient at $Rv$ and letting $R\to\infty$ removes every
term that contains $q$ or $q'$, so the coefficients of $\kappa'-c\kappa$
vanish below degree $m$, and so do those of $q'-cq$; the rest of the
proof of Lemma~\ref{lem:mixturerigidity} applies unchanged.
\emph{Inverse and cells.} Proposition~\ref{prop:ordercriterion} and the
proof of \eqref{eq:inverse} use only these three facts and compactness,
and the cell step uses a positive lower bound
$\inf\pi^\kappa\ge\min q/\max Z^\kappa$, a bound on $|\nabla\pi^\kappa|$ and
Lemma~\ref{lem:derivatives}, all available on the compact analytic
family. The proofs of Theorem~\ref{thm:statistics} and
Proposition~\ref{prop:count} use only the cell inverse, compact images
with positive cell probabilities, and the one-ray submodel, which has
$\kappa=0$.
\end{proof}

The background in Proposition~\ref{prop:background} is one constant
level. A background that varies in space, even an affine one, would
need further positivity and growth arguments, and it is not covered
here.

\paragraph{The disk window.}
The proof of \eqref{eq:inverse} uses only that the window is compact
with nonempty interior, so the score inverse also holds on the disk $W$
of the executed fit ($D=2$). The cell step carries over with clipped
cells. Intersect a fixed grid of squares of side $h$ with $W$ and drop
the pieces of zero area. The remaining cells $B_i$ are convex, have
diameter at most $\sqrt2\,h$, and the smallest has area $v_{\min}>0$.
Lemma~\ref{lem:derivatives} holds with $U$ replaced by $W$, and Sobolev
embedding on the disk gives $\norm{\nabla F}_{L^\infty(W)}\le
B_W\norm F_{L^2(W)}$ for $F=\pi^\kappa-\pi'^\kappa$. On a convex cell, $F$
differs from its average by at most $\sqrt2\,h\,\norm{\nabla F}_\infty$,
so the cell-average projection satisfies $\norm{F-\Pi_hF}_{L^2(W)}\le
\sqrt2\,h\sqrt{|W|}\,B_W\norm F_{L^2(W)}$. Since
$\norm{\Pi_hF}_{L^2(W)}^2=\sum_i(P_i-P_i')^2/|B_i|$, taking
$\sqrt2\,h\sqrt{|W|}B_W\le1/2$ gives
\[
 \norm F_{L^2(W)}\le2\,v_{\min}^{-1/2}\norm{P_W-P_W'}_2,
\]
where $P_W$ lists the probabilities of the clipped cells. With the score
inverse on $W$ this is \eqref{eq:bininverse} on the disk, with
$v_{\min}$ in place of the cube volume. Cells cut thin by the boundary
make $v_{\min}$ small and the constant large; like the statement on
the cube, this one is existential.

\paragraph{The fitted class.}
The executed density \eqref{eq:fit} has background share $\omega_0$, arm
shares $\omega_j$ and $m_j=\int_Wh_j$, the window mass of a unit-intensity
ray under the fitted blur. With $u_j=\omega_j/m_j$ and $u=\sum_ju_j$,
\[
 \pi(x)\ \propto\ \frac{\omega_0}{|W|\,u}+\sum_{j=1}^K\frac{u_j}{u}\,h_j(x),
\]
which is $q^\kappa$ with ray weights $w_j=u_j/u$ summing to one, blur
$k_\eta$ with two centered isotropic components, and background level
$\kappa=\omega_0/(|W|u)$. On the constraint set (vertex in a box, widths in
a closed interval, directions at least $25^\circ$ apart) the masses $m_j$
are continuous and positive, so $0<m_-\le m_j\le m_+<\infty$. Then
$u\ge(1-\omega_0)/m_+\ge0.2/m_+$ because $\omega_0\le0.8$, which gives
$\kappa\le4m_+/|W|$; and $u\le\sum_j\omega_j/m_-\le1/m_-$, which gives
$w_j\ge(0.02/m_+)\,m_-=0.02\,m_-/m_+$. The
executed class therefore lies in a class $\Ccal^\kappa$ with
$J=2$, $w_*=0.02\,m_-/m_+$, $\delta=2\sin(12.5^\circ)$,
$\kappa_{\max}=4m_+/|W|$, $\bar p=283$\,nm (a box of $\pm200$\,nm) and
$\lambda,\Lambda$ the squared width bounds.
This places the fitted model inside the theorem; it does not bound the
error of the local search.

\subsection{Misfit, contamination and finite arms}\label{app:boundaries}
\begin{proof}[Proof of Proposition~\ref{prop:mismatch}]
The model $P_0$ is itself a candidate, so the fitted residual obeys
$\norm{P_{\widehat\xi}-\widehat P}_2\le\norm{P_0-\widehat P}_2+\varepsilon_{\rm opt}
\le\norm{P_0-P_*}_2+\norm{P_*-\widehat P}_2+\varepsilon_{\rm opt}$. Hence
\[
 \norm{P_{\widehat\xi}-P_0}_2\le\norm{P_{\widehat\xi}-\widehat P}_2
 +\norm{\widehat P-P_*}_2+\norm{P_*-P_0}_2
 \le2\norm{\widehat P-P_*}_2+2b+\varepsilon_{\rm opt},
\]
and \eqref{eq:bininverse} gives \eqref{eq:mismatch}. Under contamination,
$\norm{P_*-P_0}_2=\varepsilon\norm{Q-P_0}_2\le\sqrt2\,\varepsilon$, and the
frequencies of $n$ independent points drawn from $P_*$ satisfy
$\E\norm{\widehat P-P_*}_2\le n^{-1/2}$, so a global fit has expected loss
$O(n^{-1/2}+\varepsilon)$. The order $\varepsilon$ is necessary. Take two
rotated one-ray laws $p_0,p_t$ at target distance of order $|t|$ with
$\TV(p_0,p_t)\le C|t|$, and choose $t=c\varepsilon$ so that the total
variation is at most $\varepsilon/(1-\varepsilon)$. The function
$(1-\varepsilon)\max(p_0,p_t)$ integrates to
$(1-\varepsilon)\{1+\TV(p_0,p_t)\}\le1$; filling the remaining mass with
any law gives one law that is an $\varepsilon$-contamination of both, so
the two parameter points have identical contaminated laws at every $n$ and
the risk is $\Omega(\varepsilon)$. The clean lower bound of
Theorem~\ref{thm:statistics} supplies $\Omega(n^{-1/2})$.
\end{proof}

\paragraph{A nearby point-spread function.}
If the actual kernel $k_*$ satisfies $\norm{k_*-k_\eta}_1\le\eta$ for one
admissible mixture, then, since every line meets the convex cube $U$ in a
segment of length at most $\operatorname{diam}(U)$ and the ray weights sum
to one, Tonelli's theorem gives
$\int_U|(k_*-k_\eta)*\mu_\vartheta|\le\operatorname{diam}(U)\eta$. For two
nonnegative functions with integrals $Z_1,Z_2$ over $U$,
$\norm{q_1/Z_1-q_2/Z_2}_{L^1(U)}\le2\norm{q_1-q_2}_{L^1(U)}/Z_2$, so
$\norm{\pi_*-\pi_\eta}_1\le2\operatorname{diam}(U)\eta/Z_\eta$. The same
bound holds for cell laws, because
$\norm{P-P'}_2\le\norm{P-P'}_1\le\norm{\pi-\pi'}_{L^1(U)}$.

\paragraph{Finite strokes.}
If every ray is cut at length $\ell$, put
$\bar m=(\Lambda/\lambda)(\sqrt DR_0+\bar p+B_0)$. For $z>0$ and
$\ell\ge\bar m+z\sqrt\Lambda$, each conditional longitudinal Gaussian has mean
at most $\bar m$ and variance at most $\Lambda$; its mass beyond $\ell$
relative to its mass beyond zero is at most $2\Phi(-z)$, because the
ratio increases with the mean and at mean $\bar m$ the denominator is at least
one half. Positivity bounds the removed window fraction by $2\Phi(-z)$, so
\begin{equation}\label{eq:finiteguard}
 \norm{\pi_{\rm stroke}-\pi_{\rm ray}}_1\le4\Phi(-z).
\end{equation}

\paragraph{Where the target ends.}
No uniform modulus exists when a ray weight tends to zero or two
directions merge: a second arm of weight $t$ makes the law converge to a
one-ray law as $t\to0$, and two equal arms at angles $\pm t$ converge to
one direction, while $d_T$ stays one. This is why Theorem~\ref{thm:inverse}
keeps the floors $w_*$ and $\delta$. Blur slots are different, because
their labels are not part of $S$. The longitudinal position of a single
line also has a separate obstruction: for opposite weights
$(1\pm\tau)/2$, the blurred intensity in longitudinal and normal
coordinates is $\ph(y)\{1/2+\tau[\Phi(x-p_\parallel)-1/2]\}$, two
vertices a fixed distance apart give densities that differ by
$O(\tau)$ and a Kullback--Leibler divergence of $O(\tau^2)$, and with
$\tau_n=n^{-1}$ their $n$-point laws cannot be told apart. No uniform
estimator of the longitudinal position exists on a class that approaches
balance.

\subsection{Curved arms: proof of Proposition~\ref{prop:curvedarms}}\label{app:curvedproof}
Put $\rho_1=\sqrt DR_0+\bar p$, so $|x-p|\le\rho_1$ for $x\in U$, and
$G(s)=(2\pi\lambda)^{-D/2}\exp\{-(s-B_0)_+^2/(2\Lambda)\}$ for $s\in\R$,
which is nonincreasing. Every component of an admissible kernel obeys
$\ph_\Sigma(z-b)\le G(|z|)$ and
$|\nabla\ph_\Sigma(z-b)|\le\lambda^{-1}(|z|+B_0)G(|z|)$, and so does the
mixture. Since $(s+B_0)e^{-(s-B_0)_+^2/(2\Lambda)}\le
C_1e^{-(s-B_0)_+^2/(4\Lambda)}$ for $s\ge0$, the function
$E(s)=\sup\{|\nabla k_\eta(w)|:|w|\ge s_+\}$ satisfies
$E(s)\le C_2e^{-(s-B_0)_+^2/(4\Lambda)}$. Write
$Q_j(x)=\int_0^\ell k_\eta(x-\gamma_j(r))h_j(r)\dd r$ and
$Q_j^0(x)=\int_0^\infty k_\eta(x-p-ra_j)\dd r$, so that
$k_\eta*\mu=\sum_jw_jQ_j+k_\eta*\nu$ and $q_{\vartheta,\eta}=\sum_jw_jQ_j^0$.
Fix $x\in U$ and an arm $j$, and drop the index $j$.

\emph{Near part.} Put $\delta(r)=\gamma(r)-p-ra$. Since $\gamma$ has unit
speed, $|\delta(r)|\le\int_0^r|\gamma'(s)-a|\dd s\le\varepsilon
r^{1+\beta}/(1+\beta)$, which is at most $r/2$ for
$r\le r_\varepsilon=((1+\beta)/(2\varepsilon))^{1/\beta}$. For such $r$,
every point $w$ of the segment from $x-p-ra$ to $x-\gamma(r)$ has
$|w|\ge r-|\delta(r)|-|x-p|\ge r/2-\rho_1$, so the mean-value theorem gives
$|k_\eta(x-\gamma(r))-k_\eta(x-p-ra)|\le|\delta(r)|E(r/2-\rho_1)$. With
$h(r)\le1+r^\beta$ for $\varepsilon\le1$ and $k_\eta(x-p-ra)\le G(r-\rho_1)$,
the difference $|k_\eta(x-\gamma(r))h(r)-k_\eta(x-p-ra)|$ is at most
\[
 \varepsilon\bigl\{r^{1+\beta}(1+r^\beta)E(r/2-\rho_1)+r^\beta G(r-\rho_1)\bigr\}
 \qquad\text{for }0\le r\le\min(\ell,r_\varepsilon),
\]
and the integral of the braces over $(0,\infty)$ is a finite constant
$C_3$, so this part contributes at most $C_3\varepsilon$.

\emph{Far part.} If $r_\varepsilon<\ell$, then for $r>r_\varepsilon$ the
bound $|\gamma(r)-p|\ge c_0r$ gives $k_\eta(x-\gamma(r))\le
G(c_0r-\rho_1)$, and $k_\eta(x-p-ra)\le G(r-\rho_1)\le G(c_0r-\rho_1)$
because $c_0\le1$. Hence the two integrands over $(r_\varepsilon,\ell]$
contribute at most $T(r_\varepsilon)=\int_{r_\varepsilon}^\infty(2+r^\beta)
G(c_0r-\rho_1)\dd r$. The function $T$ is bounded and decays like a
Gaussian, and $r_\varepsilon\ge(2\varepsilon)^{-1/\beta}$, so
$T(r_\varepsilon)\le C_4\varepsilon$ for all $\varepsilon\in(0,1]$.

\emph{Beyond the arm.} The tangent ray continues past $\ell$:
$\int_\ell^\infty k_\eta(x-p-ra)\dd r\le\int_\ell^\infty G(r-\rho_1)\dd r
\le C_5e^{-(\ell-\rho)_+^2/(2\Lambda)}$ with $\rho=\rho_1+B_0$, by the
Gaussian tail bound $\int_t^\infty e^{-s^2/(2\Lambda)}\dd s\le
\tfrac12\sqrt{2\pi\Lambda}\,e^{-t^2/(2\Lambda)}$ for $t\ge0$.

\emph{Distant structure.} Every point of the support of $\nu$ is at
distance at least $L$ from $x$, so $(k_\eta*\nu)(x)\le mG(L)$.

Summing over arms with $\sum_jw_j=1$,
$\sup_U|k_\eta*\mu-q_{\vartheta,\eta}|\le C\{\varepsilon+
e^{-(\ell-\rho)_+^2/(2\Lambda)}+me^{-(L-B_0)_+^2/(2\Lambda)}\}$. The
normalizer $Z_{\vartheta,\eta}$ is continuous and positive on the compact
class, hence bounded below, and
$\norm{q_1/Z_1-q_2/Z_2}_{L^1(U)}\le2\norm{q_1-q_2}_{L^1(U)}/Z_2$ gives the
density bound. For the scores, the same three steps applied to
$\nabla k_\eta$, whose gradient has a polynomial-times-Gaussian envelope,
bound $\sup_U|\nabla(k_\eta*\mu)-\nabla q_{\vartheta,\eta}|$ by the same
right side. On the compact class, $q_{\vartheta,\eta}\ge q_*>0$ and
$|\nabla q_{\vartheta,\eta}|\le G_*$ on $U$; once the intensity bound is
below $q_*/2$, the identity
$\nabla\log q_1-\nabla\log q_2=\nabla(q_1-q_2)/q_1-(q_1-q_2)\nabla q_2/(q_1q_2)$
transfers both bounds to the scores. The last claim follows from
Proposition~\ref{prop:mismatch} with
$b\le\norm{\pi_\mu-\pi_{\vartheta,\eta}}_{L^1(U)}$. \hfill$\square$

\paragraph{Why the small-curvature bound is uniform.}
The constants depend on the class, $\beta$ and $c_0$ only; they do not
depend on how the curves bend beyond the region $r\le r_\varepsilon$,
because there the lower bound $|\gamma(r)-p|\ge c_0r$ alone controls the
contribution. Without that bound a branch could fold back into the
window, and no local statement would hold.

\subsection{Curves at a fixed blur: proof of Proposition~\ref{prop:curves}}\label{app:curves}
\paragraph{Straight segments.}
Cutting ray $j$ at length $\ell_j$ changes its intensity from
\[
\begin{aligned}
&w_j\ph_{C_j}(z_{\perp,j})\Phi(m_j/\sqrt{V_j})\\
&\text{to}\quad w_j\ph_{C_j}(z_{\perp,j})
 [\Phi(m_j/\sqrt{V_j})-\Phi((m_j-\ell_j)/\sqrt{V_j})],
\end{aligned}
\]
where $m_j(x)$ and $V_j$ are the conditional longitudinal mean and
variance. For fixed $V_j$ and $\ell_j$ the ratio of the removed term to the
ray term increases with $m_j$ (the monotone likelihood ratio of a normal
truncated below at zero), and $m_j$ is affine, so its maximum over a cube
is at a corner. With
$\varepsilon=\max_j\Phi((m_j^{\max}-\ell_j)/\sqrt{V_j})/\Phi(m_j^{\max}/\sqrt{V_j})$,
every partition satisfies
$\norm{P_{\rm segment}-P_{\rm ray}}_2\le\norm{P_{\rm segment}-P_{\rm ray}}_1\le2\varepsilon$,
because the removed window fraction $a$ is at most $\varepsilon$ and the
normalized ray law is $(1-a)\pi_{\rm segment}+a\pi_{\rm removed}$.

\begin{proof}[Proof of Proposition~\ref{prop:curves}]
Use covariance $I$, vertex zero, $r\in[0,1]$ and $\gamma_0(r)=re_1$. For
small $h>0$ put $\delta=c_1h^\beta$ with a small fixed $c_1>0$,
$a_\delta=(\sqrt{1-\delta^2},\delta)$, $f(s)=s(1-s)^2$ on $[0,1]$ and
$f=0$ beyond, and $\gamma_h(r)=re_1+hf(r/h)(a_\delta-e_1)$. The curve
starts with tangent $a_\delta$ and rejoins the straight branch at $r=h$
with matching derivative. Since $f'(0)=1$, $f'(1)=0$ and the extended $f'$
is Lipschitz, the $C^{1,\beta}$ seminorms are bounded in $h$: for $r\le h$,
$|\gamma_h'(r)-a_\delta|\le C\delta(r/h)^\beta=Cc_1r^\beta$, and for
$r\ge h$ the same follows from $\delta\le c_1r^\beta$. Also $0\le f(s)\le s$,
so $|\gamma_h(r)|\ge(1-\delta^2)r\ge3r/4$ for $\delta\le1/2$. With
$q_h(x)=\int_0^1\ph_I(x-\gamma_h(r))\dd r$, the Gaussian gradient is
bounded, the sources differ only on $[0,h]$, and
$\int_0^h|\gamma_h-\gamma_0|\dd r=h^2|a_\delta-e_1|\int_0^1f\le C\delta h^2$,
so $\norm{q_h-q_0}_\infty\le C\delta h^2$. On a fixed window the
intensities and normalizers are bounded below, so
$\norm{\pi_h-\pi_0}_\infty\le C\delta h^2$ and
$\KL(\pi_0\Vert\pi_h)\le\int(\pi_0-\pi_h)^2/\pi_h\le C\delta^2h^4\le
C'h^{2\beta+4}$. With $h=bn^{-1/(2\beta+4)}$ and small $b$ the product
divergence is at most $1/8$ and the total variation at most $1/4$, while
the tangents differ by at least $\delta$; the two-point inequality gives
the bound. Counts cannot improve on exact positions, and the construction
already fixes the vertex, covariance, density and count.
\end{proof}

A smooth class thus allows tangent changes of order $h^\beta$ whose
effect on the observed density is of order $h^{\beta+2}$, and more
photons at a fixed blur cannot remove this. The $n^{-1/2}$ rate belongs
to the exact-ray model.
\subsection{Proof of Proposition~\ref{prop:hessian}}\label{app:hessian}
Let the lines have unit directions $a_1$ and $a_2$, with $a_2$ at angle
$\theta$ counterclockwise from $a_1$, unit normals $n_1,n_2$, and equal
line densities. After blurring and smoothing with an isotropic Gaussian
of total variance $V$, the image is proportional to
$\phi_V(n_1^\top x)+\phi_V(n_2^\top x)$, and
$\nabla^2\phi_V(n^\top x)=\phi_V''(n^\top x)\,nn^\top$ with
$\phi_V''(z)=\phi_V(z)(z^2/V-1)/V$. At $x=ta_1$, $n_1^\top x=0$ and
$(n_2^\top x)^2=t^2\sin^2\theta=rV$, so
\[
 \nabla^2(\text{image})(ta_1)\ \propto\ -n_1n_1^\top+q\,n_2n_2^\top,
 \qquad q=(r-1)e^{-r/2},
\]
with a positive factor $\phi_V(0)/V$. For a symmetric $2\times2$ matrix
$\sum_kc_ku_ku_k^\top$ with $u_k$ at angle $\psi_k$, the eigenvector of
the largest eigenvalue lies at half the angle of
$\sum_kc_k(\cos2\psi_k,\sin2\psi_k)$. Applied to
$n_1n_1^\top-q\,n_2n_2^\top$, whose top eigenvector is the ridge normal,
with $n_1$ at $90^\circ$ and $n_2$ at $90^\circ+\theta$, this gives a
normal, and hence a tangent, turned counterclockwise by
$\tfrac12\operatorname{atan2}(-q\sin2\theta,\,1-q\cos2\theta)$, that is,
clockwise, away from $a_2$, by
$\tfrac12\operatorname{atan2}(q\sin2\theta,\,1-q\cos2\theta)$. Since
$-1\le q\le2e^{-3/2}<1$ and $|\cos2\theta|<1$, the second argument is
positive, so the angle has the sign of $q$ and is positive exactly when
$r>1$; it increases with $q$, which is largest at $r=3$. By symmetry the tangent on $a_2$ turns away from
$a_1$ by the same amount, so both arms of the acute pair turn apart.
\hfill$\square$

Inside the core, $r<1$, the tangent turns toward the other line: at
$t=v_0/\sin\theta$ the unsmoothed tangent is exact, and smoothing by $s$
turns it toward the other line by
$e^{-1/2}\sin(2\theta)\,s^2/(2v_0^2)+O(s^4)$. A ridge detector that reads
orientations near the crossing thus mixes both signs; the tuned detector
reads them at $t\ge0.35R=140$\,nm, where $r>1$. A direct evaluation of the
eigenvectors agrees with the closed form to machine precision (for
example $+6.444^\circ$ at $\theta=30^\circ$, $t=140$\,nm and
$+12.024^\circ$ at $\theta=45^\circ$, $t=140$\,nm, with $\sqrt V=57.3$\,nm).

\section{Experimental design}\label{app:design}
\subsection{Data}\label{app:data}
Every microscopy table comes from ShareLoc \citep{ouyang22} and is
distributed on Zenodo under the Creative Commons Attribution 4.0 license.
Each row is one localization: a position in nanometres, a frame number
and the fitting software's precision estimate, which the methods do not
use.

\begin{itemize}[leftmargin=*,itemsep=1pt]
\item \emph{Development and first study.} ``2D microtubules stained with
Alexa647'' (NeuroCyto, CNRS--Aix-Marseille University), Zenodo records
5534358 (part~1, development) and 5534354, 5534351, 5534346, 5534342
(parts~2--5, 29 fields). The Hessian ridge detector was tuned on part-1
fields 1 and 4.
\item \emph{Cross-laboratory study.} 27 fields of COS-7, REF52 and U2OS
cells from two laboratories (Mechanobiology Institute, Singapore:
records 5174118, 5174498, 5184659, 5214573, 5215248; University of
W\"urzburg: records 5510686, 7182290, 7182242), two dyes (Alexa Fluor 647
and Alexa 532) and two camera types (EMCCD and sCMOS).
\item \emph{U-373 study.} 28 fields of U-373\,MG cells, Alexa 647,
EMCCD camera (Imaging and Modeling Unit, Institut Pasteur), from the 17
records of the series ``Microtubules immuno-labeled with Alexa 647 with
widefield images''.
\item \emph{Unused acquisitions.} 34 tables that no earlier analysis had
opened: ``Microtubules in COS7 cells'' parts 1 and 2 and ``Microtubules
immuno-labeled with CF680'' (Abbelight; records 5512383, 5512818,
5512205) and single cells ``Microtubules immuno-labeled with Alexa 647
CSV and raw data'' (Institut Pasteur; records 5773453, 5779541,
5783066). Part~A uses 22 fields and part~B 10 fields.
\item \emph{Natural junctions.} 1,391 sites in 73 fields of the first
study's parts~2--5 and of the cross-laboratory records
(Appendix~\ref{app:natural}), and 348 sites in 88 U-373 cells.
\item \emph{Camera images.} The DL-SMLM microtubule collection
\citep{zhao25}: for each of 33 cells, a 100-frame diffraction-limited
image stack and a super-resolution reconstruction of the same field.
Cells 1--4 were development data; the crossings of the camera test come
from cells 5--33.
\end{itemize}

\emph{X-ray.}\label{app:xraydesign} Chandra event files come from the public Chandra Data
Archive. Candidates are all archived observations with the High Energy
Transmission Grating on the ACIS-S detector and exposure of at least
20\,ks; observations within $60''$ are one source, and each source keeps
its longest exposure. Sources are ordered by a seeded random permutation
and admitted in that order by five rules fixed in advance: timed read
mode with the grating inserted; at least 200 events in each grating and
a pipeline dispersion map with residual below one pixel; a point-like
zeroth order (median distance of its photons at most 3 pixels); at least
4,000 window photons and 2,000 first-order photons; and zeroth-order
photons at least $0.10$ of the window photons. The last rule was added
after a first pass showed sources whose arms hold 0.1 to 0.5\% of the
photons; that pass was discarded before any evaluation estimate. The first study used 30 sources after 9 development
sources; the retest used the next 30 new sources in the same order:
Iota Ori, NGC 4507, NGC 6814, 4U 0114+65, GK Per, PKS 2149-306, SS 433,
XTE J1710-281, H 1821+643, Mkn 421, IM Peg, NGC 3783, NGC 4388, Mrk 509,
PG 0844+349, MR 2251-178, V745 Sco, Mrk 590, IGR J18179-1621,
OAO 1657-415, Ark 120, VW Cep, HD 110432, NGC 985, MCG 8-11-11, AO Psc,
Apep, TY Pyx, HD 199143 and 4U 1626-67. The reference directions are
minus the nominal roll angle plus the grating angle ($+5.235^\circ$ for the
high-energy grating, $-4.725^\circ$ for the medium-energy grating) plus a
constant offset measured on the 39 earlier sources ($-0.02753^\circ$ and
$-0.02830^\circ$, spread $0.0009^\circ$); no retest photon enters them. On
all 30 retest sources they agree with a map fitted to every photon within
$0.0016^\circ$, and every comparison below is the same to three decimals
under the calibration, disjoint-photon and all-photon references.

\emph{Road intersections.} Porto: taxi traces of July 2013 to June 2014
sampled every 15\,s \citep{moreira13}, in a box of about $9\times5.5$\,km;
a fix counts as moving if it is at least 15\,m from the previous or the
next fix of the same trip. Sites are OpenStreetMap intersections with
three or four arms, every arm reaching 40\,m without another intersection
or dead end and straight within $2^\circ$ beyond 14\,m, no other drivable
road in the disk of radius 40\,m, arms at least $25^\circ$ apart, and the
2014 and 2026 maps in agreement. Traffic rules: at least 400 moving fixes
in the disk, and every arm with at least 10 fixes from at least 3 trips
within 10\,m of the arm beyond 14\,m and at least 5\% of the fixes near
the arms. Of the 117 sites, those in one-kilometre tiles assigned to
development by a fixed hash (21) tuned the detectors; the other 96, in 27
tiles, form the evaluation. The window center is the map vertex plus a
seeded offset of at most 5\,m. External sites: Berlin and Chicago from the
map-construction benchmark \citep{ahmed15} (one map snapshot; at least 100
moving fixes), and Beijing from GeoLife \citep{zheng10} with the 2013 and
2026 maps in agreement; all used the frozen Porto settings.

\subsection{Assembled junctions}\label{app:assembly}
\emph{Straight stretches.} A candidate center lies on the rendered
skeleton of a field, at least 700\,nm from every branch and end point. A
total-least-squares line through the localizations within 600\,nm,
refitted three times on the inliers within 100\,nm, must hold at least 85\% of
them with perpendicular root-mean-square at most 60\,nm; each half needs
at least 150 localizations and its own direction within $2^\circ$ of the
line; no other skeleton pixel may lie within 600\,nm; stretches are at
least 1200\,nm apart, so no two share localizations. The stored
localizations lie within 300\,nm of the line and 450\,nm of the center
along it, and a stretch is used if its bootstrap direction standard error
is at most $0.5^\circ$. Background disks have no skeleton pixel within
700\,nm and at most 60 localizations within 400\,nm. For the U-373 study
the same rules run at twice the length (stored extent 850\,nm, clearance
900\,nm, at least 300 localizations per half), so that localizations from
400 to 800\,nm exist beyond the window.

\emph{Designs.} All arms and the background of a junction come from one
field, and no stretch is used twice within a junction. A cross places two
full stretches at an angle drawn uniformly from $30^\circ$ to $90^\circ$;
its control turns one half of each stretch by $25^\circ$ to $45^\circ$, so
no collinear pair remains. A T-junction places a full stretch and one
half of another at $35^\circ$ to $145^\circ$; its control turns one half of
the full stretch by $25^\circ$ to $45^\circ$. Every arm pair is at least
$30^\circ$ apart. For a half-stretch the localizations with along-coordinate
in $(0,450]$\,nm are rotated rigidly so the cut point lands on the vertex;
nothing is rescaled or resampled. The whole configuration is rotated
uniformly, a measured empty disk of the same field is added with its own
random rotation, and the window center is offset from the vertex by a
seeded draw uniform in a disk of radius 50\,nm. Each study draws 30 pairs
of each design: 120 junctions. From each junction we draw one subsample
of $n\in\{25,50,100,200,400,800\}$ localizations inside the 400\,nm
window (three subsamples in the first study); junctions with fewer
localizations skip the larger sizes.

\emph{References.} In the first, cross-laboratory and U-373 studies the
reference directions are the assigned directions of the placed
stretches. In the unused-acquisition study the reference for each arm is
the total-least-squares direction of the same stretch's localizations
with along-coordinate in $(0,400]$\,nm and within 100\,nm of the line,
excluding the localizations in the scored subsample; the assigned
directions serve as a sensitivity analysis.

\emph{Faint arms.} Part~B of the unused-acquisition study draws 30
crosses and 30 T-junctions from 10 fields with long stretches, and thins
the second stretch so that each of its arms holds a share of the window
drawn uniformly from 2\% to 8\%; the same 60 geometries unthinned form
the twin set. Localizations from 400 to 800\,nm are kept for the
continuation check, thinned by the inside sampling fraction so the
intensity per area is unchanged.

\subsection{Methods}\label{app:methods}
\emph{Likelihood fit.} Coordinates are in units of 100\,nm and $R=4$. The
parameters are the vertex, $K$ directions, a background share and $K$
arm shares summing to one, two log widths and two slot weights summing
to one. Bounds: vertex in $[-2,2]^2$, background share in $[10^{-4},0.8]$,
arm shares in $[0.02,1]$, widths in $[0.03,2]$; directions are ordered
with cyclic gaps of at least $25^\circ$. Each arm's density is divided by
its exact mass in the disk. Starts: vertex at the center, background
share $0.1$, arm shares $0.9/K$, widths spaced geometrically from $0.2$ to
$0.5$, equal slot weights; directions from four evenly spaced patterns,
up to six combinations of angular-histogram peaks at $0.35R$ to $R$
(relative threshold $0.1$) at least $25^\circ$ apart, and the directions
returned by two earlier likelihood implementations (one and two
Gaussians), a fixed-width fit, the Hessian ridge detector and the line
fit. Each start runs sequential least-squares quadratic programming (at
most 500 iterations, tolerance $10^{-8}$), and the best feasible
objective is kept; every output satisfies the constraints. The BIC uses
$2K+5$ free parameters. The agreement rule of Section~\ref{sec:method}
then sets the count. On one Apple M4 processor (10 cores, 16\,GB
memory), the median window with all four counts takes 1.6\,s at 25
localizations and 2.9\,s at 800 (90th percentiles 3.1 and 5.0\,s); no
GPU is used.

\emph{Earlier implementation.} The first study and the natural-junction
analyses use an earlier implementation of the same model: two-Gaussian
ray likelihood, counts 2 to 5 by BIC among fits that satisfy the
$25^\circ$ and $0.02$ floors, nested starts that add one arm at a residual
or histogram peak, and no agreement rule.

\emph{Hessian ridge detector.} Localizations are rendered at 10\,nm;
second derivatives come from Gaussian derivative filters of width
50\,nm. At each pixel the ridge normal is the eigenvector of the most
negative Hessian eigenvalue and the tangent is perpendicular to it,
oriented away from the window center; the ridge strength is the negative
of that eigenvalue. Pixels in the annulus from $0.35R$ to $R$ with
strength above the 30\% quantile enter a strength-weighted 72-bin
orientation histogram, smoothed with a Gaussian kernel; peaks that exceed the
median by more than $0.1$ of the gap between maximum and median, at
least $25^\circ$ apart, are arms, each refined by the weighted circular mean within $10^\circ$. The
width, quantile and threshold were tuned on development fields.

\emph{Other detectors.} Steger's detector is the ImageJ Ridge Detection
algorithm \citep{steger98} in a Python port, run on a 20\,nm rendering
with line width $5.5$ and contrasts $0$ and $230$, tuned on 100
development windows; arms are runs of detected line points in the
annulus, one total-least-squares direction per run, merged within
$25^\circ$. The line fit finds angular-histogram peaks about the window
center (90 bins, relative threshold $0.25$), fits a total-least-squares
line to the localizations of each arm's $12^\circ$ wedge beyond $0.35R$,
then iterates six times on strips of half-width 80\,nm around each line, with the
vertex at the least-squares intersection. The angular histogram uses the
same peaks about a vertex estimated as the mode of the smoothed density
within $R/2$, each direction the circular mean within $10^\circ$. The
skeleton method renders at 10\,nm, smooths at 25\,nm, thresholds by
Otsu's rule, skeletonizes and fits one direction to each skeleton
component between $0.4R$ and $0.9R$. The angular histogram about the true
vertex is reported as a reference with privileged information.

\emph{X-ray.} Photons of 0.5 to 8\,keV within 1000 pixels of the
zeroth-order image are scaled to radius $4$; a disk of 12 pixels around
the zeroth order and a strip of half-width 3 pixels along the readout
streak are masked. The likelihood uses floors of $4^\circ$ and $0.004$,
widths in $(0.002,2)$, a width start of $0.004$ (one detector pixel), arm
share starts of 5\%, directions from a $0.5^\circ$ angular histogram, and
nested starts; the BIC chooses the count from 2 to 5 and between one and
two Gaussians. The first study used the same model with generic starts
(width $0.02$). Baselines were tuned on the development sources by grid
search; for the retest the objective was exact recovery within $1^\circ$
(Table~\ref{tab:xraytuned}).

\begin{table}[h]
\centering\small
\caption{Tuned X-ray baselines for the retest (development means over
100 to 800 photons of exact recovery within $1^\circ$ and $10^\circ$).
Units of pixel and smoothing are window units ($1=250$ detector
pixels).}
\label{tab:xraytuned}
\begin{tabular}{l>{\raggedright\arraybackslash}p{0.5\textwidth}cc}
\toprule
Method & Settings & $1^\circ$ & $10^\circ$\\
\midrule
Line fit & 720 bins, threshold 0.1, wedge $2^\circ$, strip 0.01, inner radius 0.1 & .288 & .304\\
Histogram, true vertex & 720 bins, threshold 0.1, half-width $1^\circ$ & .263 & .266\\
Hessian ridge detector & pixel 0.02, smoothing 0.1, quantile 0.99, threshold 0.25, 360 bins & .013 & .045\\
Histogram, estimated vertex & 360 bins, threshold 0.1, half-width $1^\circ$, vertex pixel 0.05 & .003 & .080\\
Steger's detector & pixel 0.1, line width 12, high contrast 200 & .000 & .000\\
\bottomrule
\end{tabular}
\end{table}

\subsection{The detector with automatic scale selection}\label{app:adaptive}
The detector is the fixed-scale Hessian detector with its scale chosen at
every pixel \citep{lindeberg98}. The Hessian is computed at eleven scales
$s$ from 15 to 100\,nm, and the pixel keeps the scale with the largest
normalized ridge strength $s^{2\gamma}(\lambda_{\max}-\lambda_{\min})$,
where $\lambda_{\min}\le\lambda_{\max}$ are the Hessian eigenvalues and
the strength is set to zero off bright ridges; the tangent at that scale
enters the unchanged arm-extraction rule. The settings were calibrated on
the 240 junctions of the first study, which shares no field with the
three evaluation studies, for exact recovery averaged over the six sizes.
The first calibration grid, whose finest scale was 15\,nm, reached $0.33$
against $0.56$ for the fixed width: pixels at scales of 25\,nm or less
carried a median 62 to 82\% of the orientation-histogram weight when
$\gamma<1$, and the best setting of that grid ($\gamma=1.25$) selected 85
to 100\,nm from 100 localizations on, which merges neighbouring arms. Two
rules that pick one scale per window scored at most $0.41$. Before any
evaluation run, the finest scale was therefore added to the calibrated
settings; the final choice, $\gamma=0.5$ with scales from 50 to 100\,nm,
reached $0.555$ against $0.561$. The detector then ran once on every
stored window, and every replayed window reproduced the stored output of
the fixed-scale detector exactly. It takes 29\,ms per window against
8\,ms for the fixed scale. On part~B of the unused acquisitions the fit
leads it by $23.0$ $[20.6,25.3]$ points over 25 to 400 localizations.

\subsection{The camera test}\label{app:camera}
Candidate sites are the 412 junction sites of cells 5--33 with a
super-resolution reference. A site is a crossing if it has four reference
arms, the pairing that minimizes the deviation from antiparallel leaves
both pairs within $15^\circ$ of a straight line, and the two line axes are
at least $25^\circ$ apart; sites within 16 pixels of the image border or
with a saturated pixel in that margin in any frame are excluded. This
leaves 91 crossings in 21 cells. For line axes $a,b$ put
$H=(ab^\top+ba^\top)/\norm{ab^\top+ba^\top}_{\rm F}$ and smooth an image
with Gaussian kernels of covariance $4I\pm3.2H$ (in squared pixels), both
positive definite; the difference of the two smoothed images is the
response. Controls replace $H$ by $I/\sqrt2$ (isotropic, fixed in advance)
or by $H$ turned by $45^\circ$ (added after the first result), with the
same Frobenius norm. Frames are split by index modulo three into three
averages; the energy is the mean over a disk of radius 6 pixels about the
site of the product of the responses of the first two averages, which
removes the energy of zero-mean noise that is independent between them,
divided by the square of the 10--90\% intensity range of the third
average in the same disk. On analytic images of full crossings at three
angles and three rotations, the ratio of the $H$ response to the isotropic
response is below $2\times10^{-9}$ for balanced lines and between $0.008$
and $0.021$ when the two halves of one line carry weights $1.25$ and $0.75$. Using
the first and the last 30 frames instead of interleaved groups changes the
mean energies by less than $0.5\%$.

\subsection{The curve check}\label{app:curvecheck}
Each of the 72 junctions has two to four arms with random directions at
least $28^\circ$ to $62^\circ$ apart, weights with ratio up to 6, a random
anisotropic Gaussian covariance of determinant one and condition number up
to 16, arm length 2.2 to 3.8, and zero to three point sources of weight
$0.005$ to $0.04$ at distance 1.1 to 2.2 from the vertex. The tangent of
arm $j$ turns in the plane of $a_j$ and a random normal by the angle
$c_jr^\beta$ with $|c_j|$ from $0.05$ to $0.65$, and its density is
$1+d_jr^\beta$ with $d_j$ from $-0.18$ to $0.28$. For each blur scale
$\sigma\in\{0.07,0.1,0.145,0.21,0.305,0.44\}$ the score
$\sigma\nabla\log q(p+\sigma z)$ of the curved junction is computed by
180-node Gauss--Legendre quadrature along each arm, up to arc length
$13\sigma$, and compared with the
closed-form score of the tangent rays at the vertex and 74 directions at
radii $0.55$, $1.25$ and $2.25$. The slope of each junction is the
least-squares slope of the log root-mean-square difference on
$\log\sigma$ over the four smallest scales; the protocol, seed and
endpoint were fixed before the run.

\subsection{Protocols and analysis}\label{app:prereg}
Each study had a written protocol fixed before any evaluation fit: data
and field lists, construction seeds, every method and setting, the
references, the endpoints with their pass criteria, and the analysis
code. Result files were not changed after the runs. One study was
restarted after a machine failure with fewer workers and single-threaded
linear algebra; this changed no setting and is recorded in its protocol.
Every junction and source stays in every denominator; a failed fit
counts as a wrong count.

\emph{Endpoints.} Exact recovery: correct count and every arm within
$10^\circ$ ($1^\circ$ for X-ray) under the best one-to-one matching. The
main endpoint averages exact recovery over 25 to 400 localizations,
first over fields within a size, then over sizes. Capped loss: mean arm
error in degrees if the count is right, $90$ otherwise. Rate: least-squares
slope of the log median arm error of fits with the correct count on
$\log n$ over 25 to 400 localizations (100 to 800 photons, fits with all
arms within $10^\circ$, for X-ray), with pass band $[-0.65,-0.35]$.
Acute angles: for adjacent reference arms less than $60^\circ$ apart, the
estimated minus the reference angle, on junctions where both the fit
and the Hessian detector have the right count, averaged with equal field
weight. Crossing tests: exact recovery at a cross or T-junction minus
that at its paired control, averaged over 25 to 400, pass if the interval
lies within $\pm10$ points.

\emph{Uncertainty.} Intervals are 95\% percentile intervals from 5000
bootstrap resamples of fields (sources for X-ray) with equal field
weight; figures use 2000 resamples. Resampling fields keeps within-field
dependence (blinking, drift, shared background) inside the unit.

\subsection{Natural junctions}\label{app:natural}
\paragraph{Sites and references.}
Natural sites were chosen on one half of each acquisition (alternating
blocks of 1,000 frames): branch points of the skeleton of a 20\,nm
rendering, isolated by 500\,nm, with at least 400 localizations within
400\,nm in each half, three or four arms by a model-free angular count
between 150 and 400\,nm, and at least 90\% (70\% for the later
cross-laboratory fields) of those localizations within $12^\circ$ of a
counted arm; up to 12 sites per field by a seeded permutation. The other
half supplies subsamples of 25 to 400 localizations. The reference is
built from the site-selection half: the angular-histogram count and
directions about the skeleton vertex. On assembled junctions, where the
truth is known, this reference has median arm error $4.18^\circ$; a
second reference, centered on the vertex of the line fit and refined by
circular means within $12^\circ$, has $1.94^\circ$. Neither uses the
likelihood fit.

\section{Complete outcomes}\label{app:outcomes}
Tables~\ref{tab:ledger}--\ref{tab:ledger3} list every pre-declared
criterion of the microscopy and X-ray studies in the order the studies
were run; the road-intersection studies are reported in
Section~\ref{sec:gps}. Post hoc analyses are labelled and change no
verdict.

\begin{table}[htbp]
\centering\footnotesize
\caption{Every pre-declared criterion, first and cross-laboratory
studies. Differences are the likelihood fit minus the named method, in
points of exact recovery (capped loss in degrees, lower is better);
brackets are 95\% field-bootstrap intervals.}
\label{tab:ledger}
\begin{tabular}{p{0.19\textwidth}p{0.55\textwidth}p{0.17\textwidth}}
\toprule
Criterion & Result & Verdict\\
\midrule
\multicolumn{3}{l}{\emph{First study: 240 junctions, 29 fields, earlier implementation of the fit}}\\
Beat every detector & Hessian $+28.6$ $[24.6,33.2]$; Steger $+59.7$ $[57.2,62.4]$; line fit $+22.0$ $[17.8,26.3]$; histogram $+40.1$ $[36.5,43.5]$; skeleton $+52.3$ $[48.5,56.1]$ & pass\\
Rate & $-0.420$ $[-0.451,-0.391]$ & pass\\
Sandwich coverage in $[0.90,0.98]$ & 0.834 / 0.828 / 0.742 at 100 / 200 / 400 & fail\\
\midrule
\multicolumn{3}{l}{\emph{Cross-laboratory study: 120 junctions, 27 fields}}\\
Beat every detector & Hessian $+24.1$ $[18.8,29.0]$; Steger $+55.8$ $[48.1,62.8]$; line fit $+26.2$ $[21.5,30.6]$; histogram $+38.0$ $[30.2,46.1]$; skeleton $+49.3$ $[42.4,55.7]$ & pass\\
Lower capped loss, 400 and 800 & Hessian $-1.8$ $[-5.5,2.6]$; Steger $-29.7$ $[-37.8,-22.1]$; line fit $-5.5$ $[-9.8,-1.9]$; histogram $-15.0$ $[-22.4,-9.2]$; skeleton $-18.1$ $[-25.1,-11.7]$ & fail (Hessian tie)\\
Rate & $-0.435$ $[-0.526,-0.337]$ & pass\\
Joint regions cover $\ge0.90$ & 0.877 / 0.935 / 0.880 at 100 / 400 / 800 & fail\\
Crossing tests within $\pm10$ & cross $-0.8$ $[-8.5,6.9]$; T $-1.9$ $[-10.0,5.9]$ & pass (at the margin)\\
Detector widens acute angles & Hessian $+9.3$ $[6.2,12.8]$; fit $+0.4$ $[-0.7,1.4]$ & pass\\
\bottomrule
\end{tabular}
\end{table}

\begin{table}[htbp]
\centering\footnotesize
\caption{Every pre-declared criterion, U-373 and unused-acquisition
studies; conventions as in Table~\ref{tab:ledger}.}
\label{tab:ledger2}
\begin{tabular}{p{0.19\textwidth}p{0.55\textwidth}p{0.17\textwidth}}
\toprule
Criterion & Result & Verdict\\
\midrule
\multicolumn{3}{l}{\emph{U-373 study: 120 junctions, 28 fields}}\\
Beat every detector & Hessian $+37.1$ $[31.3,42.3]$; Steger $+60.6$ $[54.5,65.9]$; line fit $+33.5$ $[27.5,39.5]$; histogram $+43.5$ $[35.8,51.3]$; skeleton $+58.4$ $[51.6,64.7]$ & pass\\
Lower capped loss, 400 and 800 & Hessian $-2.7$ $[-4.3,-1.3]$; Steger $-36.4$ $[-44.7,-28.4]$; line fit $-2.1$ $[-4.3,-0.2]$; histogram $-17.8$ $[-25.0,-11.2]$; skeleton $-19.5$ $[-25.4,-14.3]$ & pass\\
Rate & $-0.331$ $[-0.380,-0.244]$; post hoc, window references: $-0.442$ & fail\\
Crossing tests within $\pm10$ & cross $+4.1$ $[-2.1,12.4]$; T $-5.4$ $[-12.5,1.3]$ & fail\\
Detector widens acute angles & Hessian $+12.4$ $[8.7,15.9]$; fit $+1.3$ & pass\\
Continuation restores faint arms & no junction had an arm below 8\% & not testable\\
No new extra arms & 0 at every size; no count changed & pass\\
Pooled regions & coverage 0.881 / 0.982 / 0.798; 3.7 / 3.9 / 2.5 counts; widths $22.9^\circ$ / $26.8^\circ$ / $13.6^\circ$ & fail\\
\midrule
\multicolumn{3}{l}{\emph{Unused acquisitions: part A, 120 junctions, 22 fields; part B, 120 junctions, 10 fields}}\\
Beat every detector & Hessian $+29.4$ $[24.3,34.1]$; Steger $+60.8$ $[56.3,65.2]$; line fit $+25.5$ $[22.8,28.3]$; histogram $+39.6$ $[31.1,47.4]$; skeleton $+57.9$ $[52.4,63.4]$ & pass\\
Rate & $-0.457$ $[-0.536,-0.393]$ (assigned directions: $-0.422$) & pass\\
Detector widens acute angles & Hessian $+7.2$ $[5.0,9.4]$; fit $+0.9$ & pass\\
Uncertainty rule & coverage 0.98 / 1.00 / 0.92; 1.42 / 1.12 / 1.20 counts; widths $69.7^\circ$ / $13.2^\circ$ / $10.9^\circ$ at 100 / 400 / 800 & fail (width at 100)\\
Continuation restores faint arms & $+50.3$ $[37.6,64.5]$ points of count accuracy at 400 and 800 & pass\\
No new extra arms & 0 at 100, 400 and 800 & pass\\
\bottomrule
\end{tabular}
\end{table}

\begin{table}[htbp]
\centering\footnotesize
\caption{Every pre-declared criterion, Chandra studies; conventions as in
Table~\ref{tab:ledger}.}
\label{tab:ledger3}
\begin{tabular}{p{0.19\textwidth}p{0.55\textwidth}p{0.17\textwidth}}
\toprule
Criterion & Result & Verdict\\
\midrule
\multicolumn{3}{l}{\emph{First study: 30 sources}}\\
Lower capped loss, 400 and 800 & histogram, true vertex $-1.0$ $[-10.1,7.4]$; line fit $-1.0$ $[-10.1,7.4]$; histogram $-26.3$; Steger $-24.5$; Hessian $+15.8$ $[2.8,27.9]$ & fail\\
Rate & $-2.85$ $[-3.71,-1.04]$ & fail\\
Sandwich coverage in $[0.90,0.99]$ & 0.54 $[0.44,0.64]$ & fail\\
\midrule
\multicolumn{3}{l}{\emph{Retest: 30 new sources}}\\
Beat every detector, $1^\circ$, 400 and 800 & Hessian $+38.3$ $[25.0,51.7]$; Steger $+38.3$ $[25.0,51.7]$; histogram $+37.8$ $[24.4,51.1]$; histogram, true vertex $+2.2$ $[-4.4,8.9]$; line fit $-1.1$ $[-8.9,6.7]$ & fail (two ties)\\
Revised over original starts & $+19.4$ $[8.3,31.1]$ & pass\\
Rate, fits with all arms within $10^\circ$ & $-0.518$ $[-0.785,-0.308]$ (18, 24, 36, 38 of 90 draws) & pass\\
Sandwich coverage in $[0.90,0.99]$ & 0.83 $[0.76,0.89]$ & fail\\
\bottomrule
\end{tabular}
\end{table}

\paragraph{Every detector.}
Figure~\ref{fig:detectors} shows exact recovery for the six methods in the
four studies with assembled junctions. The likelihood fit has the highest
exact recovery at every size in every study. The closest detector is the
line fit in three studies and the Hessian detector in the
cross-laboratory study.

\begin{figure}[h]
\centering\includegraphics[width=\textwidth]{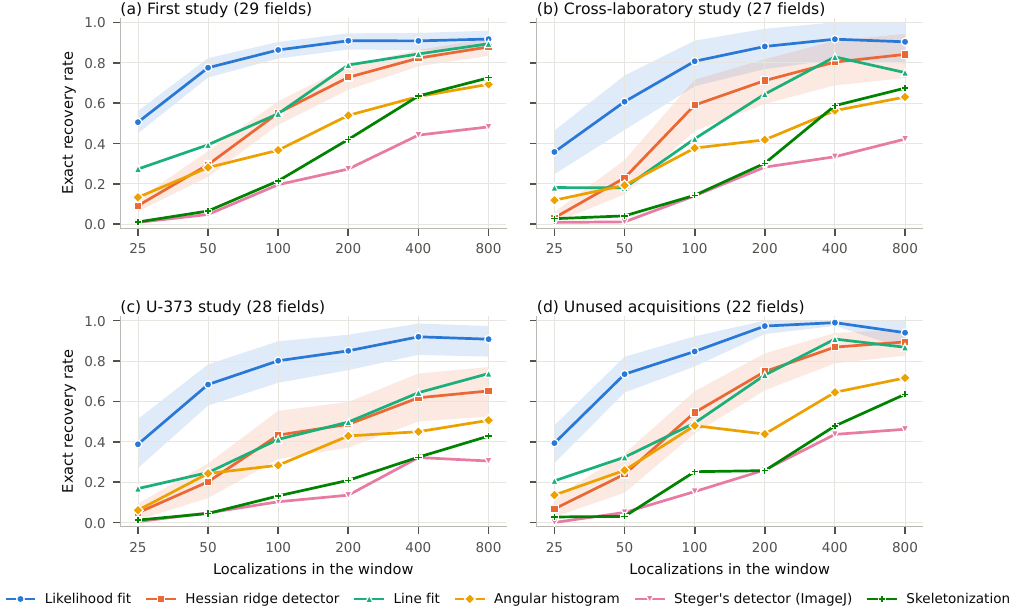}
\caption{Exact recovery by method in the four studies with assembled
junctions. Panel (a) uses the earlier implementation of the fit and three
subsamples per size. Bands: 95\% field-bootstrap intervals for the fit
and the Hessian detector.}
\label{fig:detectors}
\end{figure}

\paragraph{Unconditional loss and denominators.}
The rate plots condition on a correct count. The unconditional capped
loss, which charges $90^\circ$ for a wrong count, is lower for the fit than
for the detector at every size in all three studies, and the two are close
at 800 localizations. Table~\ref{tab:denom} gives the number of fits behind
every point of the rate plots.
\begin{table}[h]
\centering\footnotesize\setlength{\tabcolsep}{4pt}
\caption{Fits with the correct count (microscopy) or with all four arms
within $10^\circ$ (X-ray), out of all windows or draws at each size: the
denominators of Figures~\ref{fig:ratebias} and~\ref{fig:xray}.}
\label{tab:denom}
\begin{tabular}{llcccccc}
\toprule
Study & Method & 25 & 50 & 100 & 200 & 400 & 800\\
\midrule
Cross-laboratory & fit & 76/120 & 102/120 & 108/120 & 109/120 & 114/120 & 112/120\\
 & Hessian & 47/120 & 83/120 & 99/120 & 104/120 & 112/120 & 113/120\\
U-373 & fit & 85/120 & 105/120 & 109/120 & 115/120 & 117/120 & 117/120\\
 & Hessian & 54/120 & 96/120 & 88/120 & 107/120 & 114/120 & 118/120\\
Unused acquisitions & fit & 87/120 & 105/120 & 109/120 & 115/120 & 118/120 & 96/97\\
 & Hessian & 52/120 & 78/120 & 104/120 & 113/120 & 116/120 & 96/97\\
\midrule
 & & & & 100 & 200 & 400 & 800\\
Chandra retest & fit & & & 18/90 & 24/90 & 36/90 & 38/90\\
 & line fit & & & 12/90 & 20/90 & 34/90 & 37/90\\
 & histogram, true vertex & & & 10/90 & 17/90 & 28/90 & 37/90\\
\bottomrule
\end{tabular}
\end{table}

\paragraph{Local sensitivity.}
Each fit in the unused-acquisition study stores per-arm sandwich
standard errors at its fitted parameters, with independent localizations
and with 1000-frame blocks of each stretch as clusters. For reported fits
with the correct count (Table~\ref{tab:local}), the median plain standard
error times $\sqrt n$ is $21.0^\circ$ to $21.3^\circ$ from 100 to 800
localizations: the local inverse of the model is stable over the
evaluated sizes, and its scale is what Theorem~\ref{thm:statistics}
predicts in order. From 50 localizations on, observed errors against the
held-out references are 0.7 to 1.2 times the clustered standard error;
at 25 localizations they are 2.7 times larger, where counting and the
start search dominate.

\begin{table}[h]
\centering\small
\caption{Observed arm error and stored standard errors of reported fits
with the correct count, unused acquisitions (medians over arms, degrees).}
\label{tab:local}
\begin{tabular}{rrrrrr}
\toprule
$n$ & arms & error & plain SE & clustered SE & error / clustered SE\\
\midrule
25 & 301 & 4.08 & 1.54 & 1.35 & 2.71\\
50 & 365 & 2.65 & 2.40 & 2.34 & 1.12\\
100 & 378 & 1.76 & 2.11 & 2.22 & 0.69\\
200 & 401 & 1.48 & 1.48 & 1.63 & 0.90\\
400 & 412 & 1.25 & 1.06 & 1.20 & 1.00\\
800 & 335 & 1.15 & 0.75 & 1.00 & 1.17\\
\bottomrule
\end{tabular}
\end{table}

\paragraph{Where the Hessian detector ties.}
With 400 and 800 localizations, the capped loss of the fit and the
detector tie in the cross-laboratory study ($-1.8^\circ$ $[-5.5,2.6]$).
The capped loss charges $90^\circ$ for a wrong count, so at large $n$ it is
dominated by counting: the plain BIC count over-counts (count accuracy
0.76 at 800 localizations, capped loss $23.0^\circ$), and the agreement
rule brings it to 0.93 and $7.5^\circ$. On the U-373 cells the rule reaches
count accuracy 0.97 at 400 and 800 localizations against 0.72 and 0.73
for the BIC and 0.95 and 0.98 for the Hessian detector, with pooled exact
recovery 0.89 and 0.87 against 0.58 and 0.62 for the detector. Restarts
(Appendix~\ref{app:diagnostics}) show that part of the remaining count
errors at large $n$ are optimization failures of the larger-count fit.

\paragraph{The U-373 rate.}
The U-373 stretches define each reference direction over $\pm850$\,nm,
while the method sees at most 400\,nm; slight curvature gives the
reference a floor. Scored post hoc against the direction of each
stretch's own localizations within the window, the slope is $-0.442$ and
the median errors at 400 and 800 localizations are $1.68^\circ$ and
$1.48^\circ$ ($2.33^\circ$ and $2.31^\circ$ against the prespecified
references). The prespecified verdict stays a failure. The first study
has the same floor at a smaller scale: its references are full
$1.2\,\mu$m stretches, and post hoc arm-level references give slope
$-0.505$.

\paragraph{Crossing tests.}
Four tests compare crossings with controls built from the same arms.
Counted over the same 400\,nm of each half, the opposite arms of a cross
have per-length intensity ratios (smaller over larger) with median 0.90,
0.84 and 0.83 in the cross-laboratory, U-373 and unused-acquisition
studies (10th percentiles 0.69, 0.68 and 0.65), so these crosses lie
near, not exactly at, balance. In
the first study the difference for the fit was between $0.0$ and $+2.2$
points at every size with every interval containing zero (for example
$+2.2$ $[-5.6,9.8]$ at 100 localizations). A separate test with a frozen protocol used
the earlier implementation on exactly balanced crossings, built from four
different filaments with equalized line weights and all arms cut at the
vertex; it gave $-1.1$
$[-9.3,6.2]$ at 100 localizations and $+0.6$ $[-7.3,7.9]$ at 400, with
equal median errors ($1.33^\circ$ and $1.38^\circ$ at 400); the Hessian
detector instead gained at T-junctions ($+9.4$ $[2.2,17.3]$ and $+6.7$
$[2.2,12.3]$), because a straight-through line helps a filter. The
cross-laboratory test passed at its margin (the T-junction interval ends
at $-10.0$), and the U-373 test failed its $\pm10$ margin with one
positive and one negative difference, $+4.1$ $[-2.1,12.4]$ and $-5.4$
$[-12.5,1.3]$.

\paragraph{Faint arms.}
\begin{table}[h]
\centering\small
\caption{Count accuracy at 400 and 800 localizations, part B of the
unused-acquisition study (60 faint junctions, 60 unthinned twins).}
\label{tab:faint}
\begin{tabular}{lccc}
\toprule
Method & Faint, count & Faint, exact recovery & Twins, count\\
\midrule
Likelihood fit, agreement rule & 0.19 / 0.26 & 0.17 / 0.26 & 1.00 / 1.00\\
\quad with continuation check & 0.55 / 0.79 & 0.51 / 0.74 & 1.00 / 1.00\\
Likelihood fit, BIC count only & 0.74 / 0.89 & 0.64 / 0.79 & 0.88 / 0.80\\
Earlier implementation & 0.68 / 0.89 & 0.58 / 0.84 & 0.95 / 0.97\\
Hessian ridge detector & 0.11 / 0.05 & 0.09 / 0.05 & 0.93 / 0.95\\
Line fit & 0.00 / 0.00 & 0.00 / 0.00 & 0.98 / 0.95\\
\bottomrule
\end{tabular}
\end{table}

The agreement rule removes faint arms that the BIC had right (21, 29 and
12 of 120 junctions at 100, 400 and 800 localizations): both detectors
miss an arm that holds 2 to 8\% of the window. The continuation check
restores most of them without a single new extra arm, while the BIC alone
over-counts the twins. Detectors cannot see these arms at all.

\subsection{Uncertainty}
Sandwich intervals for single arms under-covered in the first study
(0.83 to 0.74) and on X-ray photons (0.83), and the joint regions
calibrated on the first study missed the 0.90 floor in the
cross-laboratory study at 100 and 800 localizations. The rule tested on
the unused acquisitions keeps the counts whose BIC lies within a
calibrated gap $c_n$ of the reported count, and gives each arm the
interval $\widehat\theta\pm z_n\{\mathrm{SE}^2+(1^\circ)^2\}^{1/2}$, with a
field-clustered sandwich standard error and a floor of $1^\circ$ for
effects the sandwich does not see. The thresholds were fixed on 84
development fields ($c_n=4.56,0,0$ and $z_n=6.68,3.63,3.63$ at 100, 400
and 800 localizations). It covers at every size and keeps 1.1 to 1.4
counts instead of 2.3 to 3.8, but at 100 localizations its mean width is
$69.7^\circ$ (median $34.9^\circ$), so the prespecified width limit of
$20^\circ$ fails there.

\subsection{X-ray photons}
\emph{First study.} All three criteria failed. The fit and the two
strongest detectors succeeded and failed on largely the same sources;
when the fit missed, it usually returned the medium-energy pair without
the faint high-energy pair. The tuned Hessian detector returned the
pixel axes $0^\circ,90^\circ,180^\circ,270^\circ$ in every case: the
capped-loss tuning objective rewarded a constant four-arm answer, which
also produced its apparent win. On data drawn from the model itself the
original fit found the four arms within $1^\circ$ in 20\% of cases, and
the objective at the truth was 69 units better than the returned fit: the
generic start (width five times the line width, arm shares near 23\%)
let thin arms fade. A start matched to thin lines raised clean-data
recovery to 90\%, and the retest was written on that basis, with new
sources and exact recovery at $1^\circ$ as the tuning objective.

\emph{Retest.} Table~\ref{tab:xray} lists every method. The revised
starts double the original fit's recovery, the error of fits with all
arms within $10^\circ$ falls at the $n^{-1/2}$ rate, and the fit ties the
line fit and the histogram given the true vertex: all three are limited
by counting, the fit through over- and under-counts, the other two
through over-counts of 49 to 59\% at 400 and 800 photons. Among successful
fits at 800 photons the median errors are $0.023^\circ$ for the fit,
$0.024^\circ$ for the histogram given the true vertex and $0.030^\circ$
for the line fit. The one-Gaussian variant with the same starts reached
0.52 at 800 photons, above the BIC blur choice (0.40), which mostly chose
two Gaussians. Single-arm sandwich intervals covered 0.69, 0.82, 0.87 and
0.86 at the four sizes.

\begin{table}[h]
\centering\small
\caption{Chandra retest, share of 90 source draws at 100 / 200 / 400 / 800
photons.}
\label{tab:xray}
\begin{tabular}{lcccc}
\toprule
Method & Exact, $1^\circ$ & Count right & Over-count & Under-count\\
\midrule
Likelihood fit & .20/.26/.37/.40 & .39/.31/.50/.52 & .41/.33/.18/.20 & .20/.36/.32/.28\\
First-study starts & .09/.09/.18/.20 & .28/.16/.31/.30 & .21/.24/.20/.22 & .51/.60/.49/.48\\
One Gaussian only & .17/.31/.44/.52 & .36/.41/.50/.59 & .47/.33/.27/.24 & .18/.26/.23/.17\\
Two Gaussians only & .19/.23/.33/.41 & .42/.29/.46/.53 & .38/.36/.21/.17 & .20/.36/.33/.30\\
Line fit & .10/.20/.38/.41 & .13/.22/.38/.41 & .83/.70/.53/.49 & .03/.08/.09/.10\\
Histogram, true vertex & .11/.19/.31/.41 & .12/.20/.32/.42 & .84/.73/.59/.51 & .03/.07/.09/.07\\
Hessian ridge detector & .00/.00/.00/.00 & .09/.07/.10/.08 & .51/.39/.24/.14 & .40/.54/.66/.78\\
\bottomrule
\end{tabular}
\end{table}

\paragraph{Natural junctions against the reference.}
Across 1,391 sites in 73 fields, exact recovery against the second
reference is 54.9\% for the likelihood fit and 27.5\% for the Hessian
detector at 100 localizations, and 57.3\% against 52.8\% at 400; on the
783 sites where every method family agrees on the count, 77.7\% against
38.4\% at 100 and 84.8\% against 72.5\% at 400. The paired gains over the
detector are $+31.0$ points $[28.3,33.5]$ at 50 localizations, $+27.4$
$[25.4,29.5]$ at 100, $+13.1$ $[10.8,15.3]$ at 200 and $+4.4$ $[2.0,6.8]$
at 400. The histogram about the reference center leads on all sites,
because it shares the reference's count rule; on the consensus sites the
fit beats it at every size from 50 on. A study with a frozen protocol on 348
sites in 88 U-373 cells gave gains over every detector: Hessian $+19.9$
$[17.2,22.4]$, Steger $+27.6$ $[24.4,30.7]$, line fit $+19.6$
$[16.7,22.4]$, histogram $+13.1$ $[10.3,16.1]$, skeleton $+27.0$
$[23.9,30.0]$. Its rate criterion measured error against a reference
that itself errs by several degrees and failed ($-0.18$); scored post hoc
as precision against each method's own full-half estimate, as in the
earlier studies, the slope is $-0.504$ $[-0.532,-0.455]$.

\section{Diagnostics}\label{app:diagnostics}
\paragraph{Optimizer.}
We restarted the fits of the cross-laboratory study at 400 and 800
localizations from 30 additional random starts each (480 fits). At the
reported count, restarts improved 15 of 120 fits at 400 and 17 of 120 at
800 by more than $0.5$ log-likelihood units (largest gains $6.9$ and
$11.0$), and the directions of improved fits moved by at most $1.22^\circ$
and $0.34^\circ$ (medians $0.18^\circ$ and $0.10^\circ$). At the true
count, three faint-arm junctions at 800 localizations had stored
four-arm fits far from the best found (gains of 104, 169 and 2546 units,
directions moving $82^\circ$ to $135^\circ$): part of the undercounting at
large $n$ is an optimization failure of the larger-count fit. Restarts
give lower bounds on the gap, not upper bounds.

We then tested a stronger start protocol that uses neither the truth nor
the stored optimum, on every cross-laboratory window at 400 and 800
localizations: the frozen starts, plus nested starts that add one arm to
the best fit with one arm fewer at each peak of a low-threshold angular
histogram, plus ten seeded random starts per count. It improves 22.8\% of
the 960 per-count fits by more than $0.5$ units and none gets worse; it
recovers all three stuck optima (gains of 104, 169 and 2550 units); and
the median window takes 3.3\,s instead of 2.2\,s at 400 localizations
and 4.6\,s instead of 2.9\,s at 800. The reported count changes in 1 of
240 windows, and exact recovery is unchanged (0.942 at 400 and 0.925 at
800, pooled). The BIC count alone gets worse, from 0.81 to 0.75 at 400
and from 0.75 to 0.68 at 800, because better-optimized larger counts win
more often under a misspecified blur and background; the agreement rule
absorbs this. Of the three stuck cases, the stronger starts fix the BIC
count in one, but there both detectors report two arms and the agreement
rule keeps two; in the other two the BIC prefers two arms even at the
better four-arm optimum. These faint arms are lost to the count rules, as
in Table~\ref{tab:faint}, not to the search.

A convex relaxation over
every ray component allowed by the fitted vertex box and width range
gives upper bounds on the attainable log-likelihood that exceed the
achieved values by 436 to 450 units in four 100-localization test cases;
these bounds are far from tight and do not certify global optimality.

\paragraph{Construction.}
Assembled arms are cut at the vertex, whereas a natural arm end is
blurred along its length. In paired synthetic data (skewed Y, $60^\circ$
X and T shapes; widths 25 and 40\,nm; 10\% background; 240 pairs per
size), cutting changed the fit's exact recovery by $-2.9$ points
$[-9.2,3.3]$ at 100 localizations and $-0.4$ $[-1.7,0.8]$ at 400, and the
Hessian detector's by $-8.3$ $[-15.4,-1.7]$ and $-5.0$ $[-9.6,-0.8]$. The
construction thus costs the detector 5 to 8 points and leaves the fit
unchanged; the gains over the detector in Table~\ref{tab:endpoints}
include this effect.

\paragraph{One or two Gaussians.}
At the true count and the same fitting budget, on 480 development
junctions, the two-Gaussian fit and the one-Gaussian fit differ in mean
arm error by $-0.04^\circ$ $[-0.25,0.14]$ at 100 localizations and
$+0.08^\circ$ $[-0.19,0.48]$ at 800, and the BIC prefers two Gaussians in
5 to 20\% of cases. In synthetic data with tails of 15 to 30\% weight at
two to four times the core width, the two-Gaussian fit never improved
exact recovery over one Gaussian (differences $-12$ to $+4$ points, every
interval containing zero), and improved the median error slightly only
with heavy wide tails at 400 localizations. On X-ray photons two
Gaussians helped in the first study (0.32 against 0.19 at 800 photons)
and not in the retest. The mixture is the theorem's class and a guard
against heavy tails; on microtubule localizations one Gaussian is as good.

\end{document}